\pgfplotsset{compat=1.17}
\tikzset{->-/.style={decoration={
  markings,
  mark=at position .5 with {\arrow{>}}},postaction={decorate}}}
\newtheorem{theorem}{Theorem}
\newtheorem{definition}{Definition}
\newtheorem{lemma}{Lemma}
\newtheorem{proposition}{Proposition}
\newtheorem{corollary}[theorem]{Corollary}
\definecolor{fhcolor}{rgb}{0.523, 0.235, 0.625}
\newcommand{\mtt}[1]{\mathtt{#1}}
\providecommand{\realnum}{\mathbb{R}}
\DeclareMathOperator*{\argmin}{arg\min}
\DeclareMathOperator*{\argmax}{arg\max}
\DeclareMathOperator*{\lb}{\mathcal{L}}
\DeclareMathOperator*{\ub}{\mathcal{U}}
\DeclareMathOperator*{\mn}{\mathcal{M}}
\providecommand{\rebuttal}[1]{\textcolor{black}{#1}}
\title{Sound and Complete Verification of \\Polynomial Networks}
\author{%
  Elias Abad Rocamora\thanks{Work developed during an exchange coming from Universitat Politècnica de Catalunya (UPC), Spain. Currently at Universidad Carlos III de Madrid (UC3M).}\\
  LIONS, EPFL\\
  Lausanne, Switzerland\\
  \texttt{abad.elias00@gmail.com} \\
  \And
  Mehmet Fatih Sahin\\
  LIONS, EPFL\\
  Lausanne, Switzerland\\
  \texttt{mehmet.sahin@epfl.ch} \\
   \And
  Fanghui Liu\\
  LIONS, EPFL\\
  Lausanne, Switzerland\\
  \texttt{fanghui.liu@epfl.ch} \\
   \And
  Grigorios G Chrysos\\
  LIONS, EPFL\\
  Lausanne, Switzerland\\
  \texttt{grigorios.chrysos@epfl.ch} \\
   \And
  Volkan Cevher\\
  LIONS, EPFL\\
  Lausanne, Switzerland\\
  \texttt{volkan.cevher@epfl.ch} \\
}
\begin{document}

\maketitle

\begin{abstract}
Polynomial Networks (PNs) have demonstrated promising performance on face and image recognition recently. However, robustness of PNs is unclear and thus obtaining certificates becomes imperative for enabling their adoption in real-world applications. Existing verification algorithms on ReLU neural networks (NNs) based on classical branch and bound (BaB) techniques cannot be trivially applied to PN verification.
In this work, we devise a new bounding method, equipped with BaB for global convergence guarantees, called Verification of Polynomial Networks or VPN for short. %
One key insight is that we obtain much tighter bounds than the interval bound propagation (IBP) and DeepT-Fast \citep{Bonaert2021Fast} baselines.
This enables sound and complete PN verification with empirical validation on MNIST, CIFAR10 and STL10 datasets.
We believe our method has its own interest to NN verification. The source code is publicly available at \url{https://github.com/megaelius/PNVerification}.
\end{abstract}

\section{Introduction}
\label{sec:introduction}
Polynomial Networks (PNs) have demonstrated promising performance across image recognition and generation~\citep{Chrysos2021PolyNets, Chrysos2020NAPS}
being state-of-the-art on large-scale face recognition\footnote{\url{https://paperswithcode.com/sota/face-verification-on-megaface}}.%
Unlike the conventional Neural Networks (NNs), where non-linearitiy is introduced with the use of activation functions \citep{lecun2015deep}, PNs are able to learn non-linear mappings without the need of activation functions by exploiting multiplicative interactions (Hadamard products). Recent works have uncovered interesting properties of PNs in terms of model expressivity \citep{Fan2021Expressivity} and spectral bias~\citep{choraria2022the}. However, one critical issue before considering PNs for real-world applications is their robustness. %

Neural networks are prone to small (often imperceptible to the human eye), but malicious perturbations in the input data points~\citep{Szegedy2014, Goodfellow2015}. Those perturbations can have a detrimental effect on image recognition systems, e.g., as illustrated in face recognition~\citep{Goswami2019, zhong2019adversarial, dong2019efficient, Li2020FR}. Guarding against such attacks has so far proven futile~\citep{shafahi2018are, dou2018mathematical}. %
Instead, a flurry of research has been published on certifying robustness of NNs against this performance degradation 

\citep{Reluplex2017, Ehlers2017, Tjeng2019,Bunel2019BaB_first, Wang2021Beta-CROWN, ferrari2022mnBaB}. However, most of the verification algorithms for NNs are developed for the ReLU activation function by exploiting its piecewise linearity property and might not trivially extend to other nonlinear activation functions \citep{Wang2021Beta-CROWN}.%
Indeed, \citet{zhu2022LipschitzPN} illustrate that guarding PNs against adversarial attacks is challenging. Therefore, we pose the following question:
\begin{center}
\emph{Can we obtain certifiable performance for PNs against adversarial attacks?}    
\end{center}

In this work, we answer affirmatively and provide a method for the verification of PNs. Concretely, we take advantage of the twice-differentiable nature of PNs to build a lower bounding method based on  $\alpha$-convexification \citep{Adjiman1996alfa-conv}, which is integrated into a Branch and Bound (BaB) algorithm \citep{Land_Doig1960bab} to guarantee completeness of our verification method.
In order to use $\alpha$-convexification, a lower bound $\alpha$ of the minimum eigenvalue of the Hessian matrix over the possible perturbation set is needed. We use interval bound propagation together with the theoretical properties of the lower bounding Hessian matrix \citep{Adjiman1998alphabab}, in order to develop an algorithm to efficiently compute $\alpha$.

Our \emph{contributions} can be summarized as follows: {$(i)$} We propose the first algorithm for the verification of PNs. {$(ii)$} We thoroughly analyze the performance of our method by comparing it with a black-box solver, with an interval bound propagation (IBP) BaB algorithm \rebuttal{and with the zonotope-based abstraction method DeepT-Fast \citep{Bonaert2021Fast}}. {$(iii)$} We empirically show that using $\alpha$-convexitication for lower bounding provides tighter bounds than IBP \rebuttal{and DeepT-Fast} for PN verification. %
To encourage the community to improve the verification of PNs, we make our code publicly available in \url{https://github.com/megaelius/PNVerification}. %
The proposed approach can practically verify PNs and that could theoretically be applied for sound and complete verification of any twice-differentiable network. %

{\textbf{Notation:}} We use the shorthand $[n] := \{1,2,\dots, n\} $ for a positive integer $n$. We use bold capital (lowercase) letters, e.g., $\bm{X}$ ($\bm{x}$) for representing matrices (vectors). %
The $j^{\text{th}}$ column of a matrix $\bm{X}$ is given by $\bm{x}_{:j}$. The element in the $i^{\text{th}}$ row and $j^{\text{th}}$ column is given by $x_{ij}$, similarly, the $i^{\text{th}}$ element of a vector $\bm{x}$ is given by $x_{i}$. %
The element-wise (Hadamard) product, symbolized with $*$, of two matrices (or vectors) in $\realnum^{d_{1} \times d_{2}}$ (or $\realnum^{d}$) gives another matrix (or vector) in $\realnum^{d_{1} \times d_{2}}$ (or $\realnum^{d}$). 
The $\ell_{\infty}$ norm of a vector $\bm{x} \in \realnum^{d}$ is given by: $||\bm{x}||_{\infty} = \max_{i \in [d]} |x_{i}|$. Lastly, the operators $\lb$ and $\ub$ give the lower and upper bounds of a scalar, vector or matrix function by IBP, see \cref{subsec:IBP_PN}.

{\textbf{Roadmap:}} We provide the necessary background by introducing the PN architecture and formalizing the Robustness Verification problem in \cref{sec:background}. \cref{sec:method} provides a \textit{sound} and \textit{complete} method called VPN to tackle PN verification problem. %
\cref{sec:experiments} is devoted to experimental validation. Additional experiments, details and proofs are deferred to the appendix.

\section{Background}
\label{sec:background}
We give an overview of PN architecture in \cref{subsec:back_pn} and the robustness verification problem in \cref{subsec:back_verif}.
\vspace{-2.5mm}
\subsection{Polynomial Networks (PNs)}
\label{subsec:back_pn}
\vspace{-2.5mm}
Polynomial Networks (PNs) are inspired by the fact that any smooth function can be approximated via a polynomial expansion \citep{Stone1948Wierstrass}. 
However, the number of parameters increases exponentially with the polynomial degree, which makes it intractable to use high degree polynomials for high-dimensional data problems such as image classification where the input can be in the order of $10^5$ \citep{Imagenet2009}. \citet{Chrysos2021PolyNets} introduce a joint factorization of polynomial coefficients in a low-rank manner, reducing the number of parameters to linear with the polynomial degree and allowing the expression as a neural network (NN). We briefly recap one fundamental factorization below. %

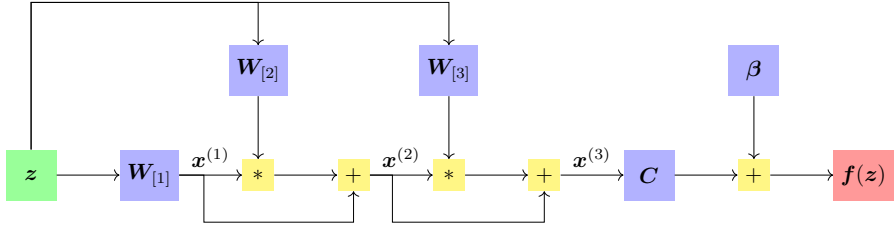
\begin{figure}[t]
    \begin{center}
    \resizebox{0.85\textwidth}{!}{
    \begin{tikzpicture}[
    greenbox/.style={rectangle, fill=green!40, minimum size=8mm},
    bluebox/.style={rectangle,fill=blue!30, minimum size=8mm},
    yellowbox/.style={rectangle,fill=yellow!60, minimum size=5mm},
    redbox/.style={rectangle,fill=red!40, minimum size=8mm},
    ]
    
    \node[greenbox] (z) {$\bm{z}$};
    
    \node[bluebox] (U1) [right = of z] {$\bm{W}_{[1]}$};
    \node[yellowbox] (h1) [right = of U1] {$*$};
    \node[yellowbox] (add1) [right = of h1] {$+$};
    \node[bluebox] (U2) [above = of h1] {$\bm{W}_{[2]}$};
    \node[yellowbox] (h2) [right = of add1] {$*$};
     \node[yellowbox] (add2) [right = of h2] {$+$};
    \node[bluebox] (U3) [above = of h2] {$\bm{W}_{[3]}$};
    
    \node[bluebox] (C) [right = of add2] {$\bm{C}$};
    
    \node[yellowbox] (add) [right = of C] {$+$};
    
    \node[bluebox] (b) [above = of add] {$\bm{\beta}$};
    
    \node[redbox] (output) [right = of add] {$\bm{f}(\bm{z})$};
    
    \draw[->] (z.east) to (U1.west);
    
    \draw[->] (z.north) -- (0,2.75) -| (U2.north);
    \draw[->] (z.north) -- (0,2.75) -| (U3.north);
    \draw[->] (U1.east) -- (2.75,0) -| (2.75,-0.75) -| (add1.south);
    \draw[->] (add1.east) -- (5.75,0) -| (5.75,-0.75) -| (add2.south);
    
    \draw[->] (U1.east) to node[above] {$\bm{x}^{(1)}$} (h1.west) ;
    \draw[->] (h1.east)  to  (add1.west);
    \draw[->] (U2.south)  to  (h1.north);
    \draw[->] (add1.east) to node[above] {$\bm{x}^{(2)}$} (h2.west);
    \draw[->] (h2.east)  to  (add2.west);
    \draw[->] (U3.south)  to  (h2.north);
    \draw[->] (add2.east) to node[above] {$\bm{x}^{(3)}$} (C.west);
    \draw[->] (C.east)  to  (add.west);
    \draw[->] (b.south)  to  (add.north);
    \draw[->] (add.east)  to  (output.west);
    
    \end{tikzpicture}}\hspace{2mm}
    \end{center}
    \caption{Third degree PN architecture. Blue boxes depict learnable parameters, yellow depict mathematical operations, the green and red boxes are the input and the output respectively. Note that no activation functions are involved, only element-wise (Hadamard) products $*$ and additions $+$. This figure represents the recursive formula of  \cref{eq:CCP_recursion}. 
    \vspace{-3mm}
    }
    \label{fig:CCP_arch}
\end{figure}
Let $N$ be the polynomial degree, $\bm{z} \in \realnum^{d}$ be the input vector, $d$, $k$ and $o$ be the input, hidden and output sizes, respectively. The recursive equation of PNs can be expressed as:
\begin{equation}
    \bm{x}^{(n)} = (\bm{W}_{[n]}^{\top}\bm{z})*\bm{x}^{(n-1)} + \bm{x}^{(n-1)}\,, \forall~ n \in [N]\,,
    \label{eq:CCP_recursion}
\end{equation}
where $\bm{x}^{(1)} = \bm{W}_{[1]}^{\top}\bm{z}$, $\bm{f}(\bm{z}) = \bm{C}\bm{x}^{(N)}+ \bm{\beta}$ and $*$ denotes the Hadamard product. $\bm{W}_{[n]} \in \realnum^{d \times k}$ and $\bm{C} \in \realnum^{o \times k}$ are weight matrices, $\bm{\beta} \in \mathbb{R}^{o}$ is a bias vector. A graphical representation of a third degree PN architecture corresponding to \cref{eq:CCP_recursion} can be found in \cref{fig:CCP_arch}. Further details on the factorization (as well as other factorizations) are deferred to the \cref{app:CCP} (\cref{app:NCP}).

\subsection{Robustness Verification}
\label{subsec:back_verif}
\vspace{-2.5mm}
Robustness verification \citep{Bastani2016constraints, Liu2021VerifAlgorithms} %
consists of verifying that a %
property regarding the input and output of a NN is satisfied, e.g. checking whether or not a small perturbation in the input will produce a change in the network output that makes it classify the input into another class. Let $f:{[0,1]}^{d} \to \realnum^{o}$ be a function, e.g., a NN or a PN, that classifies the input $\bm{z}$ into a class $c$, such that $c = \argmax{\bm{f}(\bm{z})}$. Our target is to verify that for any input satisfying a set of constraints $C_{\text{in}}$, the output of the network will satisfy a set of output constraints $C_{\text{out}}$. Mathematically, 
\begin{equation}
    \bm{z} \in C_{\text{in}} \implies \bm{f}(\bm{x}) \in C_{\text{out}}.
    \label{eq:Robust_def}
\end{equation}
In this work we focus on \emph{adversarial robustness} \citep{Szegedy2014, Carlini2017} in classification. Given an observation $\bm{z}_{0}$, let $t = \argmax{\bm{f}(\bm{z}_{0})}$ be the correct class, our goal is to check whether every input in a neighbourhood of $\bm{z}_{0}$, is classified as $t$.
In this work, we focus on adversarial attacks restricted to neighbourhoods defined in terms of $\ell_{\infty}$ norm, which is a popular norm-bounded attack in the verification community \citep{Liu2021VerifAlgorithms}. %
Then, the constraint sets become: %
\begin{equation}
    \begin{aligned}
    C_{\text{in}} & =  \{\bm{z}: ||\bm{z}-\bm{z}_{0}||_{\infty} \leq \epsilon, z_{i} \in [0,1], \forall i \in [d]\}\\
    & = \{\bm{z}: \max\{0,{z_{0}}_{i}-\epsilon\} \leq z_i \leq \min\{1,{z_{0}}_{i}+\epsilon\}, \forall i \in [d]\} \\
    C_{\text{out}} & = \{\bm{y}: y_{t} > y_j, \forall j \neq t \}\,.
    \end{aligned}
    \label{eq:input_output_constraints}
\end{equation}
In other words, we need an algorithm that given a function $f$, an input $\bm{z}_0$ and an adversarial budget $\epsilon$, checks whether \cref{eq:Robust_def} is satisfied. In the case of ReLU NNs, this has been proven to be an NP-complete problem \citep{Reluplex2017}. This can be reformulated as a constrained optimization problem. For every adversarial class $\gamma \neq t = \argmax{\bm{f}(\bm{z}_{0})}$, we can solve:
\begin{equation}
        \begin{aligned}
        \min_{\bm{z}} \quad & g(\bm{z}) = f(\bm{z})_t - f(\bm{z})_{\gamma} \quad
        \textrm{s.t.} \quad %
        \bm{z} \in \mathcal{C_{\text{in}}}\,.\\
        \end{aligned}
        \label{eq:verif_problem}
    \end{equation}
If the solution $\bm{z}^{*}$ with $v^{*} = f(\bm{z}^{*})_t - f(\bm{z}^{*})_{\gamma} \leq f(\bm{z})_t - f(\bm{z})_{\gamma}, \forall \bm{z} \in \mathcal{C_{\text{in}}}$ satisfies $v^{*} > 0$ then robustness is verified for the adversarial class ${\gamma}$. 

There are two main properties that a verification algorithm admits: \emph{soundness} and \emph{completeness}. An algorithm is \emph{sound} (\emph{complete}) if every time it verifies (falsifies) a property, it is guaranteed to be the correct answer. In practice, when an algorithm is guaranteed to provide the exact global minima of \cref{eq:verif_problem}, i.e., $v^{*}$, it is said to be \emph{sound} and \emph{complete} (usually referred in the literature as simply \emph{complete} \citep{ferrari2022mnBaB}), whereas if a lower bound of it is provided $\hat{v}^{*} \leq v^{*}$, the algorithm is \emph{sound} but not \emph{complete}. 
In our work, we will not consider just \emph{complete} verification, which simply aims at looking for adversarial examples, e.g., \citet{PGD_attacks2018}. For a deeper discussion on \emph{soundness} and \emph{completeness}, we refer to \citet{Liu2021VerifAlgorithms}.
\section{Method}
\label{sec:method}
Our method, called VPN, can be categorized in the the Branch and Bound (BaB) framework \citep{Land_Doig1960bab}, a well known approach to global optimization \citep{Horst1996GlobalOpt} %
and NN verification \citep{Bunel2019BaB_first}. This kind of algorithms ensures finding a global minima of the problem in \cref{eq:verif_problem} by recursively splitting the original feasible set into \rebuttal{smaller} sets (branching) where upper and lower bounds of the global minima are computed (bounding). This mechanism can be used to discard subsets where the global minima cannot be achieved (its lower bound is greater than the upper bound of another subset). 

Our method is based on a variant of BaB algorithm, i.e., $\alpha$-BaB \citep{Adjiman1998alphabab}, which is characterized for using $\alpha$-convexification \citep{Adjiman1996alfa-conv} for computing a lower bound of the global minima of each subset.
To be specific, $\alpha$-convexification aims to obtain a convex lower bounding function of any twice-differentiable function $f: \realnum^{d} \to \realnum$.
\rebuttal{In \citet{Adjiman1996alfa-conv}, they propose two methods:}
\begin{itemize}
    \item \rebuttal{\textbf{Uniform diagonal shift (single $\alpha$)}\vspace{-1.5mm}
    \begin{equation}
        g_{\alpha}(\bm{z}; \alpha, \bm{l}, \bm{u}) = g(\bm{z}) + \alpha\sum_{i = 1}^{d}(z_{i} - l_{i})(z_{i} - u_{i}) \,,
        \label{eq:alpha_conv}
    \end{equation}
    is its $\alpha$-convexified version, note that $z_{i}$ is the $i^{\text{th}}$ element of vector $\bm{z}$. Let $\bm{H}_{g}(\bm{z}) = \nabla_{\bm{z} \bm{z}}^{2} g(\bm{z})$ be the Hessian matrix of $g$, $g_{\alpha}$ is convex in $\bm{z} \in [\bm{l},\bm{u}]$ for $\alpha \geq \max\{0,-\frac{1}{2}\min \{ \lambda_{\text{min}}(\bm{H}_{g}(\bm{z})) : \bm{z} \in [\bm{l},\bm{u}]\}\}$, where $\lambda_{\text{min}}$ is the minimum eigenvalue. Moreover, it holds that $g_{\alpha}(\bm{z}; \alpha, \bm{l}, \bm{u}) \leq g(\bm{z}), \forall \bm{z} \in [\bm{l},\bm{u}]$.}
    \item \rebuttal{\textbf{Non-uniform diagonal shift (multiple $\alpha$'s)}\vspace{-1.5mm}
    \begin{equation}
        g_{\bm{\alpha}}(\bm{z}; \bm{\alpha}, \bm{l}, \bm{u}) = g(\bm{z}) + \sum_{i = 1}^{d}\alpha_i(z_{i} - l_{i})(z_{i} - u_{i}) \,,
        \label{eq:alpha_conv_multi}
    \end{equation}
    is its $\alpha$-convexified version. %
    In \citet{Adjiman1998alphabab}, they show that for any vector $\bm{d} > \bm 0$, setting
    \begin{equation}
    \begin{aligned}
        \alpha_{i} \geq \max{\left\{0,-\frac{1}{2}\left(\lb(h_{g}(\bm{z})_{ii})-\sum_{j \neq i}\max\{|\lb(h_{g}(\bm{z})_{ij})|,|\ub(h_{g}(\bm{z})_{ij})|\}\frac{d_{j}}{d_{i}}\right)\right\}}
    \end{aligned}
    \label{eq:alphai}
    \end{equation}
    makes $g_{\bm{\alpha}}$ convex in $\bm{z} \in [\bm{l},\bm{u}]$. The choice of the vector $\bm{d}$ is arbitrary, but affects the final result. %
    For example, taking $\bm{d} = \bm{u}-\bm{l}$ yields better results than $\bm d = \bm 1$ in \citet{Adjiman1998alphabab}. %
    } 
    We need to remark that, though \cref{eq:alpha_conv} is a special case of \cref{eq:alpha_conv_multi}, the lower bound of $\alpha$ in \cref{eq:alpha_conv} via minimum eigenvalue of the lower bounding Hessian matrix cannot be regarded as a special case of \cref{eq:alphai}.
\end{itemize}

To make PN verification feasible via $\alpha$-convexification, we need to study IBP for PNs and design an efficient estimate on the $\alpha$ \rebuttal{($\bm{\alpha}$ in the case of Non-uniform diagonal shift)} parameter, which are our main technical contributions in the algorithmic aspect.
In our case, every feasible set, starting with the input set $\mathcal{C_{\text{in}}}$ (\cref{eq:input_output_constraints}), is split by taking the widest variable interval and dividing it in two by the middle point. This is a rather simple, but theoretically powerful strategy, see \cref{lem:widest_interval_convergence} in \cref{app:proofs}. Then, %
the upper bound of each subproblem is given by applying standard Projected Gradient Descent (PGD) \citep{Kelley1999Iterative} %
over the original objective function. This is a common approach to find adversarial examples \citep{PGD_attacks2018}, but as the objective is non-convex, it is not sufficient for sound and complete verification. The lower bound is given by applying PGD over the $\alpha$-convexified objective $g_{\alpha}$ \rebuttal{($g_{\bm{\alpha}}$)}, as it is convex, PGD converges to the global minima and a lower bound of the original objective. The $\alpha$ \rebuttal{($\bm{\alpha}$)} parameter is computed only once per verification problem. Further details on the algorithm and the proof of convergence of \cref{eq:verif_problem} exist in \cref{app:BaB_algorithm}. \rebuttal{A schematic of our method is available in \cref{fig:main}.}
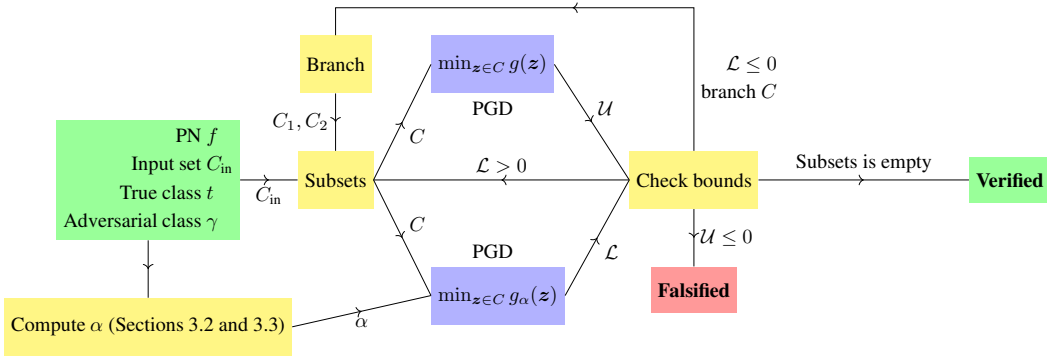
\begin{figure}[t]
    \begin{center}
    \resizebox{\textwidth}{!}{
    \begin{tikzpicture}[
    greenbox/.style={rectangle, fill=green!40, minimum size=8mm},
    bluebox/.style={rectangle,fill=blue!30, minimum size=10mm},
    yellowbox/.style={rectangle,fill=yellow!60, minimum size=10mm},
    redbox/.style={rectangle,fill=red!40, minimum size=8mm},
    ]
    
    \node[greenbox] (input) {$\begin{aligned}
\text{PN}~ & f\\
\text{Input set}~ & C_{\text{in}}\\
\text{True class}~ & t\\
\text{Adversarial class}~ & \gamma\\
\end{aligned}$};
    \node[yellowbox] (sets) [right = of input] {Subsets};
    \node[yellowbox] (branch) [above = of sets] {Branch};
    \node[yellowbox] (alpha) [below = of input] {Compute $\alpha$ (\cref{subsec:min_eig_lowerbound,subsec:min_eig_computation})};
    
     \node[bluebox] (upper) [above right = of sets] {$\min_{\bm{z} \in C} g(\bm{z})$};
     \node[bluebox] (lower) [below right = of sets] {$\min_{\bm{z} \in C} g_{\alpha}(\bm{z})$};
     
     \node[yellowbox] (check) at (9.5, 0) {Check bounds};
     \node[greenbox] (V) at (15, 0) {\textbf{Verified}};
     \node[redbox] (F) [below = of check] {\textbf{Falsified}};
     
     \node (pgdu) at (6,1.25) {PGD};
     \node (pgdl) at (6,-1.25) {PGD};
    
    \draw[->-] (input.south) to (alpha.north);
    \draw[->-] (input.east) to node[below] {$C_{\text{in}}$} (sets.west);
    
    \draw[->-] (sets.east) to node[below right] {$C$} (upper.west);
    \draw[->-] (sets.east) to node[above right] {$C$} (lower.west);
    \draw[->-] (alpha.east) to node[below] {$\alpha$} (lower.west);
    
    \draw[->-] (upper.east) to node[above right] {$\ub$} (check.west);
    \draw[->-] (lower.east) to node[below right] {$\lb$} (check.west);
    
    \draw[->-] (check.north) -- node[right] {$\begin{aligned} \lb \leq 0\\
    \text{branch } C\end{aligned}$} (9.5,3) -|  (branch.north);
    \draw[->-] (branch.south) to node[left] {$C_{1}, C_{2}$} (sets.north);
    
    \draw[->-] (check.west) to node[above] {$\lb > 0$} (sets.east);
    \draw[->-] (check.east) to node[above] {Subsets is empty} (V.west);
    \draw[->-] (check.south) to node[right] {$\ub \leq 0$} (F.north);

    \end{tikzpicture}}
    \end{center}
    \caption{\rebuttal{Overview of our branch and bound verification algorithm. Given a trained PN $f$, an input set $C_{\text{in}}$, the true class $t$ and an adversarial class $\gamma$, we check if an adversarial example exists (\textbf{Falsified}) or not (\textbf{Verified}). Note that the branching of a subset $C$ provides two smaller subsets $C_{1}$ and $C_{2}$. Also note that when $\lb > 0$, no subset is added to subsets.}}
    \vspace{-5mm}
    \label{fig:main}
\end{figure}

In \rebuttal{\cref{subsec:IBP_PN,subsec:min_eig_lowerbound,subsec:min_eig_computation}}, 
we detail our method under the uniform diagonal shift case to compute a lower bound on the minimum eigenvalue of the Hessian matrix into three main components: interval propagation, lower bounding Hessians, and fast estimation on such lower bounding via power method. \rebuttal{To conclude the description of our method, in \cref{subsec:multiple_alphas} we describe the $\bm{\alpha}$ estimation for the non-uniform diagonal shift case.}
\vspace{-2.5mm}
\subsection{Interval Bound Propagation through a PN}
\label{subsec:IBP_PN}
\vspace{-2.5mm}
Interval bound propagation (IBP) is a key ingredient of our verification algorithm. Suppose we have an input set defined by an $\ell_{\infty}$-norm ball like in \cref{eq:input_output_constraints}. This set can be represented as a vector of intervals  $[\bm{l}, \bm{u}] = ([l_1, u_1]^{\!\top}, [l_2, u_2]^{\!\top}, \cdots, [l_d, u_d]^{\!\top}) \in \mathbb{R}^{d \times 2}$, where $[l_i, u_i]$ are the lower and upper bound for the $i^{\text{th}}$ coordinate. %
Let $\lb$ and $\ub$ be the lower and upper bound IBP operators. Given this input set, we would like to obtain bounds on the output of the network $(f(\bm{z})_{i})$, %
the gradient $(\nabla_{\bm{z}} f(\bm{z})_{i})$, %
and the Hessian $(\nabla_{\bm{z} \bm{z}}^{2}f(\bm{z})_{i})$ for any $\bm{z} \in [\bm{l}, \bm{u}]$. The operators $\lb(g(\bm{z}))$ and $\ub(g(\bm{z}))$ of any function $g : \realnum^{d} \to \realnum$ satisfy:
\begin{equation}
    \begin{matrix}
        \lb(g(\bm{z})) \leq g(\bm{z}), &
        \ub(g(\bm{z})) \geq g(\bm{z}), \forall \bm{z} \in [\bm{l}, \bm{u}]\,.
    \end{matrix}
\end{equation}
We will define these upper and lower bound operators in terms of the operations present in a PN. Using interval propagation \citep{Moore2009IntroductionTI}, denote the positive part $w_{i}^{+} = \max\{0,w_{i}\}$ and the negative part $w_{i}^{-} = \min\{0,w_{i}\}$, $h_i(\bm{z})$ as any a real-valued function of $\bm{z}$, $i \in [d]$, we can define:
\begin{equation}
    \begin{aligned}
        \textbf{Identity} & 
        \left\{ 
        \begin{aligned}
            \lb(z_i) &= l_{i}\\
            \ub(z_i) &= u_{i}\\
        \end{aligned}
        \right. \\
        \textbf{linear mapping} & 
        \left\{ 
        \begin{aligned}
            \lb(\sum_{i}w_{i}h_{i}(\bm{z})) = \sum_{i}w_{i}^{+}\lb(h_i(\bm{z})) + w_{i}^{-}\ub(h_i(\bm{z}))\\
            \ub(\sum_{i}w_{i}h_{i}(\bm{z})) = \sum_{i}w_{i}^{-}\lb(h_i(\bm{z})) + w_{i}^{+}\ub(h_i(\bm{z}))\\
        \end{aligned}
        \right. \\
        \textbf{multiplication} & 
        \left\{ 
        \begin{aligned}
            S = \left\{\begin{aligned}\lb(h_1(\bm{z}))\lb(h_2(\bm{z})),\\\lb(h_1(\bm{z}))\ub(h_2(\bm{z})),\\\ub(h_1(\bm{z}))\lb(h_2(\bm{z})),\\\ub(h_1(\bm{z}))\ub(h_2(\bm{z}))\end{aligned}\right\}&, |S| = 4\\
            \lb(h_1(\bm{z})h_2(\bm{z})) = \min{S}&\,,\\
            \ub(h_1(\bm{z})h_2(\bm{z})) = \max{S}&\,,\\
        \end{aligned}
        \right. \\ 
    \end{aligned}
    \label{eq:IBP_basic_rules}
\end{equation}
where $|\cdot|$ is the set cardinality. Note that the set $S$ is equivalent to: \[S = \left\{ ab \big|~\forall a \in \left\{ \lb(h_1(\bm{z})),\ub(h_1(\bm{z})) \right\}, \forall b \in \left\{ \lb(h_2(\bm{z})),\ub(h_2(\bm{z})) \right\} \right\}\,.\]
With these basic operations, one can define bounds on any intermediate output, gradient or Hessian of a PN. For instance, the lower bound on the recursive formula from \cref{eq:CCP_recursion} can be expressed as:
\begin{equation}
    \lb(x_i^{(n)}) = \lb((\bm{w}_{[n]:i}^{\top}\bm{z})x_i^{(n-1)} + x_i^{(n-1)}) = \lb((\bm{w}_{[n]:i}^{\top}\bm{z} + 1)x_i^{(n-1)}),~~\forall i \in [k], n \in \{ 2, \dots, N\} \,,
\end{equation}
which only consists on a linear mapping and a multiplication of intervals. We extend the upper and lower bound ($\lb(\cdot)$ and $\ub(\cdot)$) operators to vectors and matrices in an entry-wise style:
\begin{equation}
    \begin{matrix}
        \lb({\bm{g}(\bm{z})}) = \left[\begin{matrix}
            \lb(g(\bm{z})_1)\\
            \lb(g(\bm{z})_2)\\
            \vdots\\
            \lb(g(\bm{z})_m)\\
        \end{matrix}\right] \in \mathbb{R}^m, & \begin{matrix}
        \lb({\bm{G}(\bm{z})}) = \left[\begin{matrix}
            \lb(g(\bm{z})_{11}) & \cdots & \lb(g(\bm{z})_{1m})\\
            \vdots & \ddots\\
            \lb(g(\bm{z})_{m1}) & & \lb(g(\bm{z})_{mm})\\
        \end{matrix}\right] \in \mathbb{R}^{m \times m}
    \end{matrix}\,.
    \end{matrix}
    \label{eq:IBP_vector_matrix}
\end{equation}
Note that \cref{eq:IBP_vector_matrix} is not limited to squared matrices and can hold for arbitrary matrix dimensions. 
One can directly use IBP to obtain bounds on the verification objective from \cref{eq:verif_problem} with a single forward pass of the bounds through the network and obtaining $\lb(g(\bm{z})) = \lb(f(\bm{z})_t) - \ub(f(\bm{z})_\gamma)$. 
In fact IBP is a common practice in NN verification to obtain fast bounds \citep{Wang2018intervals}.

\subsection{Lower bound of the minimum eigenvalue of the Hessian}
\label{subsec:min_eig_lowerbound}
\vspace{-2.5mm}

Here we describe our method to compute a lower bound on the minimum eigenvalue of the Hessian matrix in the feasible set.
Before deriving the lower bound, we need the first and second order partial derivatives of PNs.

Let $g(\bm{z}) = f(\bm{z})_{t} - f(\bm{z})_{a}$ be the objective function for $t = \argmax{\bm{f}(\bm{z}_{0})}$ and any $a \neq t$.
In order to compute the parameter $\alpha$ for performing $\alpha$-convexification, we need to know the structure of our objective function. In this section we compute the first and second order partial derivatives of the PN. The gradient and Hessian matrices of the objective function (see \cref{eq:verif_problem}) are given by:
\begin{equation}
    \nabla_{\bm{z}} g(\bm{z}) = \sum_{i = 1}^{k}(c_{t i} - c_{\gamma i}) \nabla_{\bm{z}} x_{i}^{(N)},~~ \bm{H}_{g}(\bm{z}) = \sum_{i = 1}^{k}(c_{t i} - c_{\gamma i}) \nabla_{\bm{z} \bm{z}}^{2} x_{i}^{(N)}\,.
    \label{eq:grad_and_hess_objective}
\end{equation}
We now define the gradients $\nabla_{\bm{z}} x_{i}^{(n)}$ and Hessians $\nabla_{\bm{z} \bm{z}}^{2} x_{i}^{(n)}$ of \cref{eq:CCP_recursion} in a recursive way:
\begin{equation}
    \nabla_{\bm{z}} x_{i}^{(n)}  = \bm{w}_{[n]:i} \cdot x_{i}^{(n-1)} + (\bm{w}_{[n]:i}^{\top}\bm{z} + 1) \cdot \nabla_{\bm{z}} x_{i}^{(n-1)}
    \label{eq:grad_CCP_recursive}
    \vspace{-2.5mm}
\end{equation}
\begin{equation}
    \nabla_{\bm{z} \bm{z}}^{2} x_{i}^{(n)} = \nabla_{\bm{z}}x_{i}^{(n-1)} \bm{w}_{[n]:i}^{\top} + \{\nabla_{\bm{z}}x_{i}^{(n-1)} \bm{w}_{[n]:i}^{\top}\}^\top + (\bm{w}_{[n]:i}^{\top}\bm{z} + 1) \nabla_{\bm{z} \bm{z}}^{2} x_{i}^{(n-1)}\,,
    \label{eq:hess_CCP_recursive}
\end{equation}
with $\nabla_{\bm{z}} x_{i}^{(1)} = \bm{w}_{[1]:i}$ %
and $\nabla_{\bm{z} \bm{z}}^{2} x_{i}^{(1)} = \bm{0}_{d\times d}$ being an all-zero matrix. In the next, we are ready to compute a lower bound on the minimum eigenvalue of the Hessian matrix in the feasible set.

Firstly, for any $\bm{z} \in [\bm{l}, \bm{u}]$ and any polynomial degree $N$, we can express the set of possible Hessians $\mathcal{H} = \{\bm{H}_{g}(\bm{z}) : \bm{z} \in [\bm{l}, \bm{u}]\}$ as an interval matrix. An interval matrix is a tensor $[\bm{M}] \in \realnum^{d \times d \times 2}$ where every position $[m]_{ij} = [\lb(m_{ij}), \ub(m_{ij})]$ is an interval. Therefore, if $\bm{H}_{g}(\bm{z})$ is bounded for $\bm{z} \in [\bm{l}, \bm{u}]$, then we can represent $\mathcal{H} = \{ \bm{H}_{g}(\bm{z}): \bm{H}_{g}(\bm{z}) \in [\bm{M}]\} = \{ \bm{H}_{g}(\bm{z}):  \lb(m_{ij}) \leq H_{g}(\bm{z})_{ij} \leq \ub(m_{ij}), \forall i,j \in [d]\}$.

Let $\lb(\bm{M})$ and $\ub(\bm{M})$ be the element-wise lower and upper bounds of a Hessian matrix, the lower bounding Hessian is defined as follows:
\begin{equation}
    \bm{L_{H}} = %
    \frac{\lb(\bm{M}) + \ub(\bm{M})}{2} + \text{diag}\left(\frac{\lb(\bm{M})\bm{1} - \rebuttal{\ub(\bm{M})}\bm{1}}{2} \right) \,,
\end{equation}
where $\bm{1}$ is an all-one vector and $\text{diag}(\bm{v})$ is a diagonal matrix with the vector $\bm{v}$ in the diagonal. %
Described in \citet{Adjiman1998alphabab}, This matrix satisfies that %
$\lambda_{\text{min}}(\bm{L_{H}}) \leq \lambda_{\text{min}}(\bm{H}_{g}(\bm{z})), \forall \bm{H}_{g}(\bm{z}) \in \mathcal{H}, \bm{z} \in [\bm{l}, \bm{u}]$.

Then, we can obtain the spectral radius $\rho(\bm{L_{H}})$ with a power method \citep{Mises1929PowerMethod}. As the spectral radius satisfies $\rho(\bm{L_{H}}) \geq |\lambda_{i}(\bm{L_{H}})|, \forall i \in [d]$, the following inequality holds:
\begin{equation}
    -\rho(\bm{L_{H}}) \leq \lambda_{\text{min}}(\bm{L_{H}}) \leq \lambda_{\text{min}}(\bm{H}_{g}(\bm{z})), \;\forall \bm{H}_{g}(\bm{z}) \in \mathcal{H}, \;\;\bm{z} \in [\bm{l}, \bm{u}]\,,
    \label{eq:bounds_min_eig}
\end{equation}
allowing us to use $\alpha = \frac{\rho(\bm{L_{H}})}{2} \geq \max\{0,-\frac{1}{2}\min \{ \lambda_{\text{min}}(\bm{H}_{f}(\bm{z})) : \bm{z} \in [\bm{l},\bm{u}]\}\}$.

\subsection{Efficient power method for spectral radius computation of the lower bounding Hessian}
\label{subsec:min_eig_computation}
\vspace{-2.5mm}
By using interval propagation, one can easily compute sound lower and upper bounds on each position of the Hessian matrix, compute the lower bounding Hessian %
and perform a power method with it to obtain the spectral radius $\rho$. However, this method would not scale well to high dimensional scenarios. For instance, in the STL10 dataset \citep{STL10} %
with the input dimension $d = 96\cdot96\cdot3 = 27,648$ in each color image, our Hessian matrix would %
require in the order of $O(d^{2}) = O(10^{9})$ %
real numbers to be stored. This makes it intractable to perform a power method over such an humongous matrix, or even to compute the lower bounding Hessian.
Alternatively, we take advantage of \rebuttal{the possibility of expressing the $\bm{L}_{\bm{H}}$ matrix as a sum of rank-1 matrices, to enable performing a power method over it.}

\textbf{Standard power method for spectral radius computation}

Given any squared and real valued matrix $\bm{M} \in \realnum^{d \times d}$ and an initial vector $\bm{v}_0 \in \realnum^{d}$ that is not an eigenvector of $\bm{M}$, the sequence:\vspace{-2.5mm}
\begin{equation}
    \bm{v}_{n} = \frac{\bm{M}(\bm{M}\bm{v}_{n-1})}{||\bm{M}(\bm{M}\bm{v}_{n-1})||_{2}}\,,
    \label{eq:power_method}
\end{equation}
converges to the eigenvector with the largest eigenvalue in absolute value, i.e. the eigenvector where the spectral radius is attained, being the spectral radius $\rho(\bm{M}) = \sqrt{||\bm{M}(\bm{M}\bm{v}_{n-1})||_{2}}$ \citep{Mises1929PowerMethod}.

\textbf{Power method over lower bounding Hessian of PNs}

We can employ IBP (\cref{subsec:IBP_PN}) in order to obtain an expression of the lower bounding Hessian ($\bm{L}_{\bm{H}}$) and evaluate \cref{eq:power_method} as:
\begin{equation}
    \begin{aligned}
    \bm{L_{H}}\bm{v} %
    & = \frac{\ub(\bm{H}_{g}(\bm{z}))\bm{v}  + \lb(\bm{H}_{g}(\bm{z}))\bm{v} }{2} + \left(\frac{\lb(\bm{H}_{g}(\bm{z}))\bm{1} - \ub(\bm{H}_{g}(\bm{z}))\bm{1}}{2}\right) * \bm{v}\,.
    \end{aligned}
    \label{eq:lower bounding_hessian_vector}
\end{equation}
Applying IBP on \cref{eq:grad_and_hess_objective} we obtain:
\begin{equation}
\begin{aligned}
    \lb(\bm{H}_{g}(\bm{z}))\bm{v} %
    & = \sum_{i = 1}^{k}(c_{t i} - c_{\gamma i})^{+} \lb(\nabla_{\bm{z} \bm{z}}^{2} x_{i}^{(N)})\bm{v} + \sum_{i = 1}^{k}(c_{t i} - c_{\gamma i})^{-} \ub(\nabla_{\bm{z} \bm{z}}^{2} x_{i}^{(N)})\bm{v}\\
    \ub(\bm{H}_{g}(\bm{z}))\bm{v} & = \sum_{i = 1}^{k}(c_{t i} - c_{\gamma i})^{-} \lb(\nabla_{\bm{z} \bm{z}}^{2} x_{i}^{(N)})\bm{v} + \sum_{i = 1}^{k}(c_{t i} - c_{\gamma i})^{+} \ub(\nabla_{\bm{z} \bm{z}}^{2} x_{i}^{(N)})\bm{v}\,.
    \label{eq:lower_upper_last_layer}
\end{aligned}
\end{equation}
We can recursively evaluate $\lb(\nabla_{\bm{z} \bm{z}}^{2} x_{i}^{(n)})\bm{v}$ and $\ub(\nabla_{\bm{z} \bm{z}}^{2} x_{i}^{(n)})\bm{v}$ efficiently as these matrices can be expressed as a sum of rank-$1$ matrices as below. %
\begin{proposition}
Let  $\delta \in [\lb(\delta), \ub(\delta)]$ be a real-valued weight, the matrix-vector products $\lb(\delta \cdot \nabla_{\bm{z} \bm{z}}^{2} x_{i}^{(n)})\bm{v}$ and $\ub(\delta \cdot \nabla_{\bm{z} \bm{z}}^{2} x_{i}^{(n)})\bm{v}$ can be evaluated as: %
\begin{equation}
    \begin{aligned}
    \lb(\delta \cdot \nabla_{\bm{z} \bm{z}}^{2} x_{i}^{(n)})\bm{v}  = &\lb(\delta \cdot \nabla_{\bm{z}}x_{i}^{(n-1)}) {\bm{w}_{[n]:i}^{+\top}}\bm{v} + \ub(\delta \cdot \nabla_{\bm{z}}x_{i}^{(n-1)}) {\bm{w}_{[n]:i}^{-\top}}\bm{v}\\ 
    & + {\bm{w}_{[n]:i}}^{+} \lb(\delta \cdot {\nabla_{\bm{z}}x_{i}^{(n-1)}}^{\top})\bm{v} + {\bm{w}_{[n]:i}}^{-} \ub(\delta \cdot {\nabla_{\bm{z}}x_{i}^{(n-1)}}^{\top})\bm{v}  \\
    & + \lb(\delta'\nabla_{\bm{z} \bm{z}}^{2} x_{i}^{(n-1)})\bm{v}\,,
    \end{aligned}
    \label{eq:lower_Hessian_vector}
\end{equation}
\begin{equation}
    \begin{aligned}
    \ub(\delta \cdot \nabla_{\bm{z} \bm{z}}^{2} x_{i}^{(n)})\bm{v}  = &\lb(\delta \cdot \nabla_{\bm{z}}x_{i}^{(n-1)}) {\bm{w}_{[n]:i}^{-\top}}\bm{v} + \ub(\delta \cdot \nabla_{\bm{z}}x_{i}^{(n-1)}) {\bm{w}_{[n]:i}^{+\top}}\bm{v}\\ 
    & + {\bm{w}_{[n]:i}}^{-} \lb(\delta \cdot {\nabla_{\bm{z}}x_{i}^{(n-1)}}^{\top})\bm{v} + {\bm{w}_{[n]:i}}^{+} \ub(\delta \cdot {\nabla_{\bm{z}}x_{i}^{(n-1)}}^{\top})\bm{v}  \\
    & + \ub(\delta'\nabla_{\bm{z} \bm{z}}^{2} x_{i}^{(n-1)})\bm{v}\,, \\
    \end{aligned}
    \label{eq:upper_Hessian_vector}
\end{equation}
where $ \delta' \in [\lb(\delta), \ub(\delta)] \cdot [\lb(\bm{w}_{[n]:i}^{\top}\bm{z} + 1), \ub(\bm{w}_{[n]:i}^{\top}\bm{z} + 1)]$ and vectors $\lb(\delta \cdot {\nabla_{\bm{z}}x_{i}^{(n-1)}})$ and $\ub(\delta \cdot {\nabla_{\bm{z}}x_{i}^{(n-1)}})$ can be obtained through IBP on \cref{eq:grad_CCP_recursive}.
\label{prop:lower bounding_Hessian_vector}
\end{proposition}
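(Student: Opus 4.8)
The plan is to prove the two identities by induction on the polynomial degree $n$, propagating the recursive Hessian expression \cref{eq:hess_CCP_recursive} through the IBP rules of \cref{eq:IBP_basic_rules} and then contracting with the fixed vector $\bm{v}$. Multiplying \cref{eq:hess_CCP_recursive} by the scalar $\delta$ splits $\delta \cdot \nabla_{\bm{z}\bm{z}}^{2} x_{i}^{(n)}$ into three pieces: the rank-$1$ outer product $\delta \cdot \nabla_{\bm{z}} x_{i}^{(n-1)} \bm{w}_{[n]:i}^{\top}$, its transpose $\delta \cdot \bm{w}_{[n]:i} (\nabla_{\bm{z}} x_{i}^{(n-1)})^{\top}$, and the recursive term $\delta \cdot (\bm{w}_{[n]:i}^{\top}\bm{z} + 1)\nabla_{\bm{z}\bm{z}}^{2} x_{i}^{(n-1)}$. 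Since IBP propagates through addition by the linear-mapping rule with unit weights, the elementwise lower bound of the sum equals the sum of the elementwise lower bounds, so I can treat each piece separately and add the resulting matrix-vector products.

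For the first (outer-product) term, write $\bm{a} = \delta \cdot \nabla_{\bm{z}} x_{i}^{(n-1)}$, whose interval bounds $\lb(\bm{a})$ and $\ub(\bm{a})$ are available by IBP on \cref{eq:grad_CCP_recursive}. Because $\bm{w}_{[n]:i}$ is a constant weight, the linear-mapping rule gives $\lb(a_p w_q) = w_q^{+}\lb(a_p) + w_q^{-}\ub(a_p)$ entrywise; contracting the resulting lower-bounding matrix with the fixed $\bm{v}$ and summing over $q$ separates the positive and negative parts of $\bm{w}_{[n]:i}$, yielding $\lb(\bm{a})\,{\bm{w}_{[n]:i}^{\top}}^{+}\bm{v} + \ub(\bm{a})\,{\bm{w}_{[n]:i}^{\top}}^{-}\bm{v}$, i.e. the first line of \cref{eq:lower_Hessian_vector}. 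The transpose term is handled the same way, except the roles of the two indices swap: now $\bm{w}_{[n]:i}$ indexes the output coordinate and $\bm{a}$ is contracted with $\bm{v}$, producing ${\bm{w}_{[n]:i}}^{+}(\lb(\bm{a})^{\top}\bm{v}) + {\bm{w}_{[n]:i}}^{-}(\ub(\bm{a})^{\top}\bm{v})$, the second line of \cref{eq:lower_Hessian_vector}.

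For the recursive term, the new scalar prefactor $\delta \cdot (\bm{w}_{[n]:i}^{\top}\bm{z}+1)$ is itself a product of two intervals, $\delta \in [\lb(\delta),\ub(\delta)]$ and $(\bm{w}_{[n]:i}^{\top}\bm{z}+1) \in [\lb(\bm{w}_{[n]:i}^{\top}\bm{z}+1),\ub(\bm{w}_{[n]:i}^{\top}\bm{z}+1)]$; applying the multiplication rule of \cref{eq:IBP_basic_rules} defines exactly the interval $\delta'$ stated in the proposition, so this term equals $\lb(\delta' \nabla_{\bm{z}\bm{z}}^{2} x_{i}^{(n-1)})\bm{v}$, which is the same quantity one degree lower and closes the induction. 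The base case $n=1$ follows from $\nabla_{\bm{z}\bm{z}}^{2} x_{i}^{(1)} = \bm{0}$, giving a zero contribution. The upper-bound identity \cref{eq:upper_Hessian_vector} is obtained by the mirror-image argument, swapping $w^{+}\leftrightarrow w^{-}$ wherever a minimum in the multiplication rule turns into a maximum.

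I expect the only delicate point to be the sign bookkeeping: verifying that pushing the elementwise IBP bound through each outer product and then through the exact contraction with $\bm{v}$ really does separate cleanly into the $\bm{w}^{+}$ and $\bm{w}^{-}$ scalar products, and in particular that the transpose term swaps which vector is bounded ($\bm{a}$) and which is split by sign ($\bm{w}_{[n]:i}$). Everything else is a direct consequence of the additive and multiplicative interval rules together with the fact that $\bm{w}_{[n]:i}$ and $\bm{v}$ are constants, so the full $d\times d$ Hessian is never materialized.
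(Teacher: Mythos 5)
Your proposal is correct and follows essentially the same route as the paper's own proof: both split $\delta\cdot\nabla_{\bm{z}\bm{z}}^{2}x_{i}^{(n)}$ via \cref{eq:hess_CCP_recursive} into the two rank-1 outer-product terms and the recursive term, apply the additive and linear-mapping IBP rules of \cref{eq:IBP_basic_rules} to get the $\bm{w}^{+}/\bm{w}^{-}$ sign splits, and absorb the interval product $\delta\cdot(\bm{w}_{[n]:i}^{\top}\bm{z}+1)$ into $\delta'$ to close the recursion. Your explicit induction framing, base case $\nabla_{\bm{z}\bm{z}}^{2}x_{i}^{(1)}=\bm{0}$, and contraction with $\bm{v}$ are just presentational refinements of what the paper does implicitly.
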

Lastly, by applying recursively \cref{prop:lower bounding_Hessian_vector} from $n = N$ to $n=1$, starting with $\delta = 1$, we can substitute the results on \cref{eq:lower_upper_last_layer} and then on \cref{eq:lower bounding_hessian_vector} to efficiently evaluate a step of the power method (\cref{eq:power_method}) without needing to store the lower bounding Hessian matrix or needing to perform expensive matrix-vector products.

Overall, our lower bounding method consists in computing a valid value of $\alpha$ that satisfies that the $\alpha$-convexified objective $g_{\alpha}$ is convex, following \cref{eq:verif_problem,eq:alpha_conv}. In particular, we use $\alpha = \frac{\rho(\bm{L_{H}})}{2}$. $\rho(\bm{L_{H}})$ is computed via a power method, where the main operation $\bm{L_{H}}\bm{v}$ is evaluated without the need to compute or store the $\bm{L_{H}}$ matrix. Provided this valid $\alpha$, we perform PGD over $g_{\alpha}$ and this provides a lower bound of the global minima of \cref{eq:verif_problem}.
\rebuttal{\subsection{Non-uniform diagonal shift}
\label{subsec:multiple_alphas}}
\vspace{-2.5mm}
\rebuttal{In order to obtain an estimate of the $\bm{\alpha}$ parameter as defined by \citet{Adjiman1998alphabab} in \cref{eq:alphai}, we make use of the rank-1 matrices IBP rules defined in \cref{subsec:IBP_matrices}. We also define the operator $\mn(\cdot)=\max\{|\lb(.)|, |\ub(.)|\}$ and certain useful properties about it in \cref{subsec:mn_operator}. Thanks to \cref{lem:maxnorm_matrix,lem:maxnorm_matrix_sum,lem:maxnorm_matrix_mult}, we can obtain an expression for $\mn\left(\bm{H}_{\bm{z}}(g)\right)$. This will be used to compute the vector $\bm{\alpha}$ in the Non-uniform diagonal shift scenario for $\alpha$-convexification.}
\begin{theorem}
\rebuttal{Let $f$ be a $N$-degree CCP PN defined as in \cref{eq:CCP_recursion}. Let $g(\bm{z}) = f(\bm{z})_{t} - f(\bm{z})_{\gamma}$ for any $t \neq \gamma, t \in [o], \gamma \in [o]$. Let $\bm{H}_{g}(\bm{z})$ be the Hessian matrix of g, the operation $\mn\left(\bm{H}_{g}(\bm{z})\right)$ results in:
\begin{equation}
    \mn\left(\bm{H}_{g}(\bm{z})\right) \leq \sum_{i = 1}^{k}|c_{ti} - c_{\gamma i}|\mn\left(\nabla_{\bm{z} \bm{z}}^{2} x_{i}^{(N)}\right)\,,
\end{equation}
where for $n = 2, ..., N$, we can express:
\begin{equation}
    \begin{aligned}
        \mn\left(\nabla_{\bm{z} \bm{z}}^{2} x_{i}^{(n)}\right) & \leq \mn\left(\nabla_{\bm{z}}x_{i}^{(n-1)}\right)|\bm{w}_{[n]i:}|^{\top} + |\bm{w}_{[n]i:}|\mn\left(\nabla_{\bm{z}}x_{i}^{(n-1)}\right)^{\top}\\
        & \quad + \mn\left(\bm{w}_{[n]i:}^{\top}\bm{z} + 1\right)\mn\left(\nabla_{\bm{z} \bm{z}}^{2} x_{i}^{(n-1)}\right)\,.
    \end{aligned}
\end{equation}
Lastly, for $n = 1$, $\mn\left(\nabla_{\bm{z} \bm{z}}^{2} x_{i}^{(1)}\right) = \bm{0}_{d \times d}$ is a $d \times d$ matrix full of zeros.}
\label{theo:mn_PN}
\end{theorem}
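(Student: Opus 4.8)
The plan is to prove the two displayed inequalities by applying the elementary properties of the $\mn$ operator established in \cref{lem:maxnorm_matrix,lem:maxnorm_matrix_sum,lem:maxnorm_matrix_mult} directly to the closed-form expressions for the Hessian, and then to dispatch the base case by inspection. Recall that $\mn(\cdot) = \max\{|\lb(\cdot)|,|\ub(\cdot)|\}$ upper-bounds $|\cdot|$ over $[\bm{l},\bm{u}]$ and is applied entrywise to vectors and matrices. The three properties I will lean on are: (i) subadditivity, $\mn(\bm{A}+\bm{B}) \le \mn(\bm{A}) + \mn(\bm{B})$; (ii) transpose-invariance, $\mn(\bm{A}^{\top}) = \mn(\bm{A})^{\top}$; and (iii) a factorization rule for products, $\mn(xy) \le \mn(x)\mn(y)$ entrywise, which sharpens to the equality $\mn(c\cdot \bm{A}) = |c|\,\mn(\bm{A})$ whenever one factor is $\bm{z}$-independent.

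First I would establish the top-level bound. Starting from $\bm{H}_{g}(\bm{z}) = \sum_{i=1}^{k}(c_{ti}-c_{\gamma i})\nabla_{\bm{z}\bm{z}}^{2} x_{i}^{(N)}$ in \cref{eq:grad_and_hess_objective}, I apply subadditivity across the sum and then pull out each coefficient $c_{ti}-c_{\gamma i}$, which is constant in $\bm{z}$, using the constant-factor case of property (iii). This yields $\mn(\bm{H}_{g}(\bm{z})) \le \sum_{i}|c_{ti}-c_{\gamma i}|\,\mn(\nabla_{\bm{z}\bm{z}}^{2} x_{i}^{(N)})$, the first claim.

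Next I handle the recursive bound for $n=2,\dots,N$ by applying $\mn$ to the Hessian recursion \cref{eq:hess_CCP_recursive}, which writes $\nabla_{\bm{z}\bm{z}}^{2} x_{i}^{(n)}$ as a sum of three terms; by subadditivity the bound splits across them. For the first term, the outer product $\nabla_{\bm{z}} x_{i}^{(n-1)}\,\bm{w}_{[n]:i}^{\top}$ has a $\bm{z}$-independent right factor, so \cref{lem:maxnorm_matrix_mult} gives exactly $\mn(\nabla_{\bm{z}} x_{i}^{(n-1)})\,|\bm{w}_{[n]:i}|^{\top}$. The second term is the transpose of the first, so transpose-invariance gives $|\bm{w}_{[n]:i}|\,\mn(\nabla_{\bm{z}} x_{i}^{(n-1)})^{\top}$. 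For the third term $(\bm{w}_{[n]:i}^{\top}\bm{z}+1)\,\nabla_{\bm{z}\bm{z}}^{2} x_{i}^{(n-1)}$, both factors depend on $\bm{z}$, so the product rule yields the upper bound $\mn(\bm{w}_{[n]:i}^{\top}\bm{z}+1)\,\mn(\nabla_{\bm{z}\bm{z}}^{2} x_{i}^{(n-1)})$. Adding the three contributions reproduces the claimed recursion. The base case $n=1$ is immediate: since $\nabla_{\bm{z}\bm{z}}^{2} x_{i}^{(1)} = \bm{0}_{d\times d}$ has $\lb = \ub = 0$ entrywise, $\mn(\nabla_{\bm{z}\bm{z}}^{2} x_{i}^{(1)}) = \bm{0}_{d\times d}$.

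The main obstacle is the third term, where both $\bm{w}_{[n]:i}^{\top}\bm{z}+1$ and $\nabla_{\bm{z}\bm{z}}^{2} x_{i}^{(n-1)}$ vary with $\bm{z}$: here the factorization is only an inequality, since the two factors are correlated through the shared variable and $\mn$ cannot track this coupling, so I must invoke the general (inequality) form of \cref{lem:maxnorm_matrix_mult} and confirm it delivers a valid entrywise upper bound for the product of two interval quantities. By contrast, the outer-product term should be treated with the exact constant-factor version so that no spurious slack is introduced there. Since every step preserves the $\le$ direction, propagating these inequalities through the recursion gives precisely the upper bounds asserted by the theorem.
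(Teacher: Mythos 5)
Your proposal is correct and follows essentially the same route as the paper's own proof: subadditivity of $\mn$ (\cref{lem:maxnorm_matrix_sum}) splits the sum in \cref{eq:grad_and_hess_objective} and the three terms of \cref{eq:hess_CCP_recursive}, the factorization lemmas pull the constant weights out as absolute values and bound the scalar--matrix product, and the base case is immediate. The only slip is cosmetic: the two outer-product terms are governed by \cref{lem:maxnorm_matrix} (rank-1 matrices $\bm{u}\bm{v}^{\top}$), not \cref{lem:maxnorm_matrix_mult} (scalar times matrix), and both lemmas in fact give \emph{equalities} at the IBP level --- since $\mn$ is computed from the interval bounds $\lb$ and $\ub$, the coupling through $\bm{z}$ between $\bm{w}_{[n]:i}^{\top}\bm{z}+1$ and $\nabla_{\bm{z}\bm{z}}^{2}x_{i}^{(n-1)}$ that you flag as the main obstacle is already discarded by interval arithmetic, so the third term needs no special care beyond citing \cref{lem:maxnorm_matrix_mult}.
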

\rebuttal{As one can observe, in \cref{theo:mn_PN}, the matrix $\mn\left(\bm{H}_{g}(\bm{z})\right)$ is expressed as a sum of rank-1 matrices. This allows to efficiently compute $\sum_{j \neq i}\mn(h_{g}(\bm{z})_{ij})\frac{d_{i}}{d_{j}}$, which is necessary for the right term in \cref{eq:alphai}. 
For the left term in \cref{eq:alphai}, we can efficiently compute the lower bound of the diagonal of the Hessian matrix by using the rules present in \cref{subsec:IBP_PN,subsec:IBP_matrices}.}
\section{Experiments}
\label{sec:experiments}
In this Section we show the efficiency of our method by comparing against a simple Black-box solver. Tightness of bounds is also analyzed in comparison with IBP \rebuttal{and DeepT-Fast \citep{Bonaert2021Fast}, a zonotope based verification method able to handle multiplications tighter than IBP}. Finally, a study of the performance of our method in different scenarios is performed. %
Unless otherwise specified, every network is trained for $100$ epochs with Stochastic Gradiend Descent (SGD), with a learning rate of $0.001$, which is divided by $10$ at epochs $[40,60,80]$, momentum $0.9$, weight decay $5\cdot 10^{-5}$ and batch size $128$. We thoroughly evaluate our method over the popular image classification datasets MNIST \citep{lecun1998gradient}, CIFAR10 \citep{CIFAR10} and STL10 \citep{STL10}. Every experiment is done over the first $1000$ images of the test dataset, this is a common practice in verification \citep{Singh2019}. For images that are correctly classified by the network, we sequentially verify robustness against the remaining classes in decreasing order of network output. Each verification problem is given a maximum execution time of $60$ seconds, we include experiments with different time limits in \cref{app:Appendix}. Note that the execution time can be longer as %
execution is cut in an asynchronous way, i.e., after we finish the iteration of the BaB algorithm where the time limit is reached. \rebuttal{{All of our experiments were conducted on a single GPU node equipped with a 32 GB NVIDIA V100 PCIe}}. %

\subsection{Comparison with a Black-box solver}
\label{subsec:experiments_gurobi}
\vspace{-2.5mm}
In this experiment, we compare the performance of our BaB verification algorithm with the Black-box
solver Gurobi \citep{gurobi}. Gurobi can globally solve Quadratically Constrained Quadratic Programs whether they are convex or not. As this solver cannot extend to higher degree polynomial functions, we train $2^{\text{nd}}$ degree PNs with hidden size $k = 16$ %
to compare the verification time of our method with Gurobi. In order to do so, we express the verification objective as a quadratic form $g(\bm{z}) = f(\bm{z})_{t} - f(\bm{z})_{a} = \bm{z}^{\top}\bm{Q}\bm{z} + \bm{q}^{\top}\bm{z} + c$ this together with the input constraints $\bm{z} \in [\bm{l}, \bm{u}]$ is fed to Gurobi and optimized until convergence.
\begin{table}[tb]
    \centering
    \caption{Verification results for $2^{\text{nd}}$ degree PNs. Columns {\tt\#F}, {\tt\#T} and {\tt\#t.o.} refer to the number of images where robustness is falsified, verified and timed-out respectively. When comparing with a black-box solver, our method is much faster and can scale to higher dimensional inputs. This is due to our efficient exploitation of the low-rank factorization of PNs.}
    \resizebox{\textwidth}{!}{%
    \begin{tabular}{c|ccc|cccc|ccccc}
    \multirow{2}{*}{Dataset} & \multirow{2}{*}{Model} & \multirow{2}{*}{Correct} & \multirow{2}{*}{$\epsilon$} & \multicolumn{4}{c}{VPN (Our method)} & \multicolumn{4}{c}{Gurobi}   \\ 
 &  &  &  & time & F & T & t.o. & time & F & T & t.o. \\ 
\hline 
\multirow{1}{*}{MNIST} & \multirow{1}{*}{$2\times16$} & $961$ & $0.00725$ & $\mathbf{1.76}$ & $37$ & $924$ & $0$ & $16.6$ & $37$ & $924$ & $0$\\ 
$(1\times28\times28)$ &  &  & $0.013$ & $\mathbf{1.78}$ & $71$ & $890$ & $0$ & $15.13$ & $71$ & $890$ & $0$\\ 
 &  &  & $0.05$ & $\bf{1.43}$ & $682$ & $267$ & $12$ & $6.25$ & $691$ & $270$ & $0$\\ 
 &  &  & $0.06$ & $\mathbf{1.5}$ & $790$ & $155$ & $16$ & $4.47$ & $799$ & $162$ & $0$\\ 
\hline 
\multirow{1}{*}{CIFAR10} & \multirow{1}{*}{$2\times16$} & $460$ & $1/610$ & $\mathbf{1.03}$ & $90$ & $370$ & $0$ & $328.0$ & $90$ & $370$ & $0$\\ 
$(3\times32\times32)$ &  &  & $1/255$ & $\mathbf{1.0}$ & $183$ & $277$ & $0$ & $250.07$ & $183$ & $277$ & $0$\\ 
 &  &  & $4/255$ & $\mathbf{0.92}$ & $427$ & $28$ & $5$ & $87.93$ & $429$ & $31$ & $0$\\ 
\hline 
\multirow{1}{*}{STL10} & \multirow{1}{*}{$2\times16$} & $362$ & $1/610$ & $\mathbf{5.06}$ & $142$ & $220$ & $0$ & \multicolumn{4}{c}{\multirow{3}{*}{out of memory}}\\ 
$(3\times96\times96)$ &  &  & $1/255$ & $\mathbf{3.61}$ & $246$ & $113$ & $3$ &  &  &  & \\ 
 &  &  & $4/255$ & $\mathbf{1.39}$ & $360$ & $1$ & $1$ &  &  &  & \\ 
\hline 
    \end{tabular}}
    \label{tab:preliminary_results}
    \vspace{-5mm}
\end{table}

The black-box solver approach neither scales to higher-dimensional inputs nor to higher polynomial degrees. With this approach we need $\mathcal{O}(d^{2})$ memory to store the quadratic form, which makes it unfeasible for datasets with higher resolution images than CIFAR10. On the contrary, as seen in \cref{tab:preliminary_results}, our approach does not need so much memory and can scale to datasets with larger input sizes like STL10. 

\subsection{Comparison with \rebuttal{IBP and DeepT-Fast}}
\label{subsec:experiments_IBP}
\vspace{-2.5mm}
In this experiment we compare the tightness of the lower bounds provided by IBP\rebuttal{, DeepT-Fast} and $\alpha$-convexification and their effectiveness when employed for verification. This is done by executing one upper bounding step with PGD and one lower bounding step for each lower bounding method over the initial feasible set provided by $\epsilon$ (see \cref{eq:input_output_constraints}). We compare the average of the distance from each lower bound to the PGD upper bound over the first $1000$ images of the MNIST dataset for PNs with hidden size $k = 25$ and degrees ranging from $2$ to $7$. We also evaluate verified accuracy of $2^{\text{nd}}$ (PN\_Conv2) and $4^{\text{th}}$ (PN\_Conv4) PNs with \rebuttal{IBP, DeepT-Fast and $\alpha$-convexification in the Uniform diagonal shift setup}. We employ a maximum time of $120$ seconds. For details on the architecture of these networks, we refer to \cref{app:Appendix}.
\begin{figure}[ht]
    \centering
    \scalebox{0.365}{\input{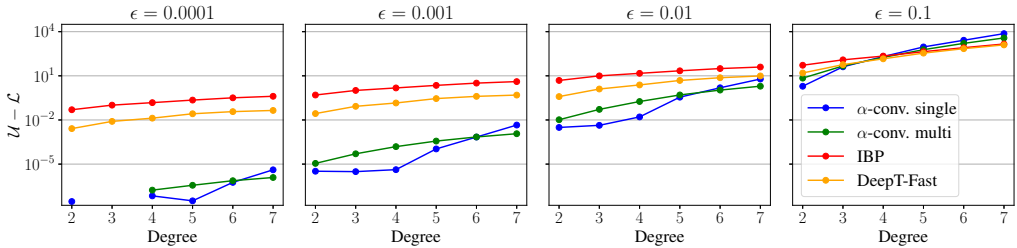}}
    \caption{\rebuttal{Average difference in log-scale between PGD upper bound ($\ub$) and lower bound ($\lb$) provided by BP (red), DeepT-Fast \citep{Bonaert2021Fast} (orange), $\alpha$-convexification with Uniform diagonal shift (blue) and $\alpha$-convexification with Non-uniform diagonal shift (green) of the first $1000$ images of the MNIST dataset. $\alpha$-convexification bounds are significantly tighter than IBP and DeepT-Fast for small $\epsilon$ values and all PN degrees from 2 to 7.}}
    \label{fig:IBP_alpha_bounds_comparison}
\end{figure}

\begin{table*}[!htb]
        \centering
        \fontsize{7}{8}\selectfont
        \begin{threeparttable}
               \caption{Verification results with our method employing IBP\rebuttal{, DeepT-Fast} and $\alpha$-convexification for lower bounding the objective. {\tt Acc.\%} is the clean accuracy of the network, {\tt Ver.\%} is the verified accuracy and {\tt U.B.} its upper bound. When using $\alpha$-convexification bounds we get verified accuracies really close to the upper bound, while when using IBP verified accuracy is $0$ for every network-$\epsilon$ pair, which makes it unsuitable for PN verification.
    }
   \label{tab:IBP_vs_alpha_conv_verif}
               \begin{tabular}{cccc|cc|cc|cc|cccccccccccccc}
               \toprule
        & & & & \multicolumn{2}{c|}{\multirow{2}{*}{IBP}} & \multicolumn{2}{c|}{\rebuttal{DeepT-Fast}} & \multicolumn{2}{c|}{VPN (ours)} &\\
        \multirow{2}{*}{Dataset} & \multirow{2}{*}{Model} & \multirow{2}{*}{Acc.\%}& \multirow{2}{*}{$\epsilon$}  &  &  & \multicolumn{2}{c|}{\rebuttal{\citep{Bonaert2021Fast}}} & \multicolumn{2}{c|}{($\alpha$-convexification)}\\ 
        &  &  &  &  Time(s) & Ver.\% & \rebuttal{Time(s)} & \rebuttal{Ver.\%} & Time(s) & Ver.\% & U.B.\\ 
        \midrule
        \multirow{3}{*}{MNIST} & \multirow{3}{*}{PN\_Conv4} & \multirow{3}{*}{$98.6$} & $0.015$ & $0.3$ & $0.0$ & \rebuttal{$2.3$} & \rebuttal{$91.3$} & $50$ & $\mathbf{96.3}$ & $96.4$ \\
        &  &  & $0.026$ & $0.4$ & $0.0$ & \rebuttal{$3.7$} & \rebuttal{$59.7$} & $69$ & $\mathbf{92.9}$ & $94.8$\\
        &  &  & \rebuttal{$0.3$} & \rebuttal{$0.6$} & \rebuttal{$0.0$} & \rebuttal{$0.7$} & \rebuttal{$0.0$} & \rebuttal{$13.8$} & \rebuttal{$0.0$} & \rebuttal{$0.0$}\\
        \midrule
        \multirow{4}{*}{CIFAR10}& \multirow{2}{*}{PN\_Conv2} & \multirow{2}{*}{$63.5$} & $1/255$ & $0.3$ & $0.0$ & \rebuttal{$2.0$} & \rebuttal{$23.3$} & $136.2$ & $\mathbf{44.4}$ & $44.6$ \\
        & & & $2/255$ & $0.5$ & $0.0$ & \rebuttal{$0.6$} & \rebuttal{$1.4$} & $89.2$ & $\mathbf{25.4}$ & $27.5$\\
        & \multirow{2}{*}{PN\_Conv4} & \multirow{2}{*}{$62.6$} & $1/255$ & $0.4$ & $0.0$ & \rebuttal{$2.2$} & \rebuttal{$19.5$} & $274.6$ & $\mathbf{45.5}$ & $46.7$ \\
        & & & $2/255$ & $0.5$ & $0.0$ & \rebuttal{$0.5$} & \rebuttal{$0.5$} & $224.1$ & $\mathbf{16.5}$ & $30.5$\\
        \midrule
        \multirow{1}{*}{STL10\tnote{*}} & \multirow{1}{*}{PN\_Conv4} & \multirow{1}{*}{$38.1$} & $1/255$ & $3.4$ & $0.0$ & \rebuttal{$26.0$} & \rebuttal{$14.7$} & $2481.0$ & $\mathbf{21.7}$ & $21.9$ \\
\bottomrule
    \end{tabular}
                \begin{tablenotes}
                        \footnotesize
                        \item * Results obtained in the first $360$ images of the dataset due to the longer running times because of the larger input size of STL10. 
                \end{tablenotes}
        \end{threeparttable}
        \vspace{-3mm}
\end{table*}

When using IBP, we get a much looser lower bound than with $\alpha$-convexification, see \cref{fig:IBP_alpha_bounds_comparison}. Only for high-degree, high-$\epsilon$ combinations IBP lower bounds are closer to the PGD upper bound. In practice, this is not a problem for verification, as for epsilons in the order of $0.1$, it is really easy to find adversarial examples with PGD and there will be no accuracy left to verify. \rebuttal{DeepT-Fast significantly outperforms IBP bounds across all degrees and $\epsilon$ values. But, as observed in \cref{fig:IBP_alpha_bounds_comparison}, except for big $\epsilon$ values, its performance is still far from the one provided by both $\alpha$-convexification methods. When comparing both $\alpha$-convexification methods (blue and green lines in \cref{fig:IBP_alpha_bounds_comparison}), we observe that for small degree PNs ($N < 5$), in the Uniform diagonal shift case we are able to obtain tighter bounds.}

The looseness of the IBP lower bound is confirmed when comparing the verified accuracy with IBP \rebuttal{and the rest of lower bounding methods}, see \cref{tab:IBP_vs_alpha_conv_verif}. With $\alpha$-convexification, we are able to verify the accuracy of $2^{\text{nd}}$ and $4^{\text{th}}$ order PNs almost exactly (almost no gap between the verified accuracy and its upper bound) in every studied dataset, while with \rebuttal{IBP}, we are not able to verify robustness for a single image in any network-$\epsilon$ pair, confirming the fact that IBP cannot be used for PN verification. \rebuttal{The improvements in the bounds when utilizing DeepT-Fast instead of IBP is clearly seen in verification results in \cref{tab:IBP_vs_alpha_conv_verif}. With DeepT-Fast, we are able to effectively verify PNs faster than with $\alpha$-convexification, but achieving a much lower verified accuracy ({\tt Ver\%}) than with $\alpha$-convexification. As a reference, for CIFAR10 PN\_Conv2 at $\epsilon = 2/255$, with $\alpha$-convexification we obtain $25.4\%$ verified accuracy, while with DeepT-Fast, we can just obtain $1.4\%$ verified accuracy. It is worth highlighting that DeepT-Fast can also scale to verify networks trained on STL10.}

\section{Conclusion}
\label{sec:conclusion}

We propose a novel $\alpha$-BaB global optimization algorithm to verify polynomial networks (PNs).
We exhibit that our method outperforms existing methods, such as black-box solvers, IBP and \rebuttal{DeepT-Fast \citep{Bonaert2021Fast}}.  %
Our method enables verification in datasets such as STL10, which includes RGB images of $96\times 96$ resolution. This is larger than the images typically used in previous verification methods \rebuttal{, we note that existing methods like IBP and DeepT-Fast are also able to scale to STL10 buth with a lower verified accuracy}. Our method can further encourage the community to extend verification to a broader class of functions as well as conduct experiments in datasets of higher resolution. 
We believe can be extended to cover other twice-differentiable networks in the future.

\textbf{Limitations:} As discussed in \cref{subsec:experiments_limitations}, our verification method does not scale to high-degree PNs. Even though we can verify high-accuracy PNs (see \cref{tab:IBP_vs_alpha_conv_verif}), we are still far from verifying the top performing deep PNs studied in \citet{Chrysos2021PolyNets}. 
Another problem that we share with ReLU NN verifiers is the scalability to networks with larger input size \citep{Wang2021Beta-CROWN}. In this work we are able to verify networks trained in STL10 \citep{STL10}, but these networks are shallow, yet their verification still takes a long time, see \cref{tab:IBP_vs_alpha_conv_verif}.

\textbf{Acknowledgements:} We are deeply thankful to the reviewers for providing constructive feedback.  
Research was sponsored by the Army Research Office and was accomplished under Grant Number W911NF-19-1-0404. This project has received funding from the European Research Council (ERC) under the European Union's Horizon 2020 research and innovation programme (grant agreement number 725594 - time-data). This work was supported by the Swiss National Science Foundation (SNSF) under  grant number 200021\_178865. This project has received funding from the European Research Council (ERC) under the European Union's Horizon 2020 research and innovation programme (grant agreement n° 725594 - time-data). This work was supported by Zeiss. This work was supported by SNF project – Deep Optimisation of the Swiss National Science Foundation (SNSF) under grant number 200021\_205011.

\clearpage

\bibliographystyle{plainnat}
\bibliography{main}

\begin{thebibliography}{50}
\providecommand{\natexlab}[1]{#1}
\providecommand{\url}[1]{\texttt{#1}}
\expandafter\ifx\csname urlstyle\endcsname\relax
  \providecommand{\doi}[1]{doi: #1}\else
  \providecommand{\doi}{doi: \begingroup \urlstyle{rm}\Url}\fi

\bibitem[Adjiman and Floudas(1996)]{Adjiman1996alfa-conv}
Claire~S. Adjiman and Christodoulos~A. Floudas.
\newblock Rigorous convex underestimators for general twice-differentiable
  problems.
\newblock \emph{Journal of Global Optimization}, 9\penalty0 (1):\penalty0
  23--40, Jul 1996.

\bibitem[Adjiman et~al.(1998)Adjiman, Dallwig, Floudas, and
  Neumaier]{Adjiman1998alphabab}
Claire~S. Adjiman, Stefan Dallwig, Christodoulos~A. Floudas, and Arnold
  Neumaier.
\newblock A global optimization method, $\alpha$bb, for general
  twice-differentiable constrained {NLPs — I}. theoretical advances.
\newblock \emph{Computers \& Chemical Engineering}, 22:\penalty0 1137--1158,
  1998.

\bibitem[Anderson et~al.(2019)Anderson, Pailoor, Dillig, and
  Chaudhuri]{Anderson2019}
Greg Anderson, Shankara Pailoor, Isil Dillig, and Swarat Chaudhuri.
\newblock Optimization and abstraction: A synergistic approach for analyzing
  neural network robustness.
\newblock In \emph{Proceedings of the 40th ACM SIGPLAN Conference on
  Programming Language Design and Implementation}, PLDI 2019, page 731–744,
  New York, NY, USA, 2019. Association for Computing Machinery.

\bibitem[Bastani et~al.(2016)Bastani, Ioannou, Lampropoulos, Vytiniotis, Nori,
  and Criminisi]{Bastani2016constraints}
Osbert Bastani, Yani Ioannou, Leonidas Lampropoulos, Dimitrios Vytiniotis,
  Aditya~V. Nori, and Antonio Criminisi.
\newblock Measuring neural net robustness with constraints.
\newblock In \emph{Advances in Neural Information Processing Systems
  (NeurIPS)}, NIPS'16, page 2621–2629, Red Hook, NY, USA, 2016. Curran
  Associates Inc.

\bibitem[Bonaert et~al.(2021)Bonaert, Dimitrov, Baader, and
  Vechev]{Bonaert2021Fast}
Gregory Bonaert, Dimitar~I. Dimitrov, Maximilian Baader, and Martin Vechev.
\newblock Fast and precise certification of transformers.
\newblock In \emph{Proceedings of the 42nd ACM SIGPLAN International Conference
  on Programming Language Design and Implementation}, PLDI 2021, page
  466–481, New York, NY, USA, 2021. Association for Computing Machinery.

\bibitem[Bunel et~al.(2020{\natexlab{a}})Bunel, De~Palma, Desmaison, Dvijotham,
  Kohli, Torr, and Pawan~Kumar]{DePalmaBaB2021}
Rudy Bunel, Alessandro De~Palma, Alban Desmaison, Krishnamurthy Dvijotham,
  Pushmeet Kohli, Philip Torr, and M.~Pawan~Kumar.
\newblock Lagrangian decomposition for neural network verification.
\newblock In Jonas Peters and David Sontag, editors, \emph{Conference on
  Uncertainty in Artificial Intelligence (UAI)}, volume 124 of
  \emph{Proceedings of Machine Learning Research}, pages 370--379. PMLR, 03--06
  Aug 2020{\natexlab{a}}.

\bibitem[Bunel et~al.(2020{\natexlab{b}})Bunel, Lu, Turkaslan, Torr, Kohli, and
  Kumar]{Bunel2019BaB_first}
Rudy Bunel, Jingyue Lu, Ilker Turkaslan, Philip~H.S. Torr, Pushmeet Kohli, and
  M.~Pawan Kumar.
\newblock Branch and bound for piecewise linear neural network verification.
\newblock \emph{Journal of Machine Learning Research}, 21\penalty0
  (42):\penalty0 1--39, 2020{\natexlab{b}}.

\bibitem[Carlini and Wagner(2017)]{Carlini2017}
Nicholas Carlini and David Wagner.
\newblock Towards evaluating the robustness of neural networks.
\newblock In \emph{2017 IEEE Symposium on Security and Privacy (SP)}, pages
  39--57, 2017.

\bibitem[Choraria et~al.(2022)Choraria, Dadi, Chrysos, Mairal, and
  Cevher]{choraria2022the}
Moulik Choraria, Leello~Tadesse Dadi, Grigorios Chrysos, Julien Mairal, and
  Volkan Cevher.
\newblock The spectral bias of polynomial neural networks.
\newblock In \emph{International Conference on Learning Representations
  (ICLR)}, 2022.

\bibitem[Chrysos and Panagakis(2020)]{Chrysos2020NAPS}
Grigorios~G Chrysos and Yannis Panagakis.
\newblock Naps: Non-adversarial polynomial synthesis.
\newblock \emph{Pattern Recognition Letters}, 140:\penalty0 318--324, 2020.

\bibitem[Chrysos et~al.(2021)Chrysos, Moschoglou, Bouritsas, Deng, Panagakis,
  and Zafeiriou]{Chrysos2021PolyNets}
Grigorios~G. Chrysos, Stylianos Moschoglou, Giorgos Bouritsas, Jiankang Deng,
  Yannis Panagakis, and Stefanos~P Zafeiriou.
\newblock Deep polynomial neural networks.
\newblock \emph{{IEEE} Transactions on Pattern Analysis and Machine
  Intelligence (T-PAMI)}, page 1–1, 2021.

\bibitem[Chrysos et~al.(2022)Chrysos, Georgopoulos, Deng, Kossaifi, Panagakis,
  and Anandkumar]{Chrysos2022Class}
Grigorios~G Chrysos, Markos Georgopoulos, Jiankang Deng, Jean Kossaifi, Yannis
  Panagakis, and Anima Anandkumar.
\newblock Augmenting deep classifiers with polynomial neural networks.
\newblock In \emph{European Conference on Computer Vision (ECCV)}, 2022.

\bibitem[Coates et~al.(2011)Coates, Ng, and Lee]{STL10}
Adam Coates, Andrew Ng, and Honglak Lee.
\newblock An analysis of single-layer networks in unsupervised feature
  learning.
\newblock In Geoffrey Gordon, David Dunson, and Miroslav Dudík, editors,
  \emph{International Conference on Artificial Intelligence and Statistics},
  volume~15 of \emph{Proceedings of Machine Learning Research}, pages 215--223,
  Fort Lauderdale, FL, USA, 11--13 Apr 2011. PMLR.

\bibitem[Deng et~al.(2009)Deng, Dong, Socher, Li, Li, and
  Fei-Fei]{Imagenet2009}
Jia Deng, Wei Dong, Richard Socher, Li-Jia Li, Kai Li, and Li~Fei-Fei.
\newblock Imagenet: A large-scale hierarchical image database.
\newblock In \emph{Conference on Computer Vision and Pattern Recognition
  (CVPR)}, pages 248--255, 2009.

\bibitem[Dong et~al.(2019)Dong, Su, Wu, Li, Liu, Zhang, and
  Zhu]{dong2019efficient}
Yinpeng Dong, Hang Su, Baoyuan Wu, Zhifeng Li, Wei Liu, Tong Zhang, and Jun
  Zhu.
\newblock Efficient decision-based black-box adversarial attacks on face
  recognition.
\newblock In \emph{Conference on Computer Vision and Pattern Recognition
  (CVPR)}, pages 7714--7722, 2019.

\bibitem[Dou et~al.(2018)Dou, Osher, and Wang]{dou2018mathematical}
Zehao Dou, Stanley~J Osher, and Bao Wang.
\newblock Mathematical analysis of adversarial attacks.
\newblock \emph{arXiv preprint arXiv:1811.06492}, 2018.

\bibitem[Ehlers(2017)]{Ehlers2017}
R{\"{u}}diger Ehlers.
\newblock Formal verification of piece-wise linear feed-forward neural
  networks.
\newblock \emph{CoRR}, abs/1705.01320, 2017.

\bibitem[Fan et~al.(2020)Fan, Xiong, and Wang]{fan2020universal}
Fenglei Fan, Jinjun Xiong, and Ge~Wang.
\newblock Universal approximation with quadratic deep networks.
\newblock \emph{Neural Networks}, 124:\penalty0 383--392, 2020.

\bibitem[Fan et~al.(2021)Fan, Li, Wang, Lai, and Wang]{Fan2021Expressivity}
Fenglei Fan, Mengzhou Li, Fei Wang, Rongjie Lai, and Ge~Wang.
\newblock Expressivity and trainability of quadratic networks.
\newblock \emph{CoRR}, abs/2110.06081, 2021.

\bibitem[Ferrari et~al.(2022)Ferrari, Mueller, Jovanovi{\'c}, and
  Vechev]{ferrari2022mnBaB}
Claudio Ferrari, Mark~Niklas Mueller, Nikola Jovanovi{\'c}, and Martin Vechev.
\newblock Complete verification via multi-neuron relaxation guided
  branch-and-bound.
\newblock In \emph{International Conference on Learning Representations
  (ICLR)}, 2022.

\bibitem[Gehr et~al.(2018)Gehr, Mirman, Drachsler-Cohen, Tsankov, Chaudhuri,
  and Vechev]{AI2_2018}
Timon Gehr, Matthew Mirman, Dana Drachsler-Cohen, Petar Tsankov, Swarat
  Chaudhuri, and Martin Vechev.
\newblock Ai2: Safety and robustness certification of neural networks with
  abstract interpretation.
\newblock In \emph{2018 IEEE Symposium on Security and Privacy (SP)}, pages
  3--18, 2018.

\bibitem[Goodfellow et~al.(2015)Goodfellow, Shlens, and
  Szegedy]{Goodfellow2015}
Ian~J. Goodfellow, Jonathon Shlens, and Christian Szegedy.
\newblock Explaining and harnessing adversarial examples.
\newblock In Yoshua Bengio and Yann LeCun, editors, \emph{International
  Conference on Learning Representations (ICLR)}, 2015.

\bibitem[Goswami et~al.(2019)Goswami, Agarwal, Ratha, Singh, and
  Vatsa]{Goswami2019}
Gaurav Goswami, Akshay Agarwal, Nalini~K. Ratha, Richa Singh, and Mayank Vatsa.
\newblock Detecting and mitigating adversarial perturbations for robust face
  recognition.
\newblock \emph{International Journal of Computer Vision (IJCV)}, 127:\penalty0
  719--742, 2019.

\bibitem[{Gurobi Optimization, LLC}(2022)]{gurobi}
{Gurobi Optimization, LLC}.
\newblock {Gurobi Optimizer Reference Manual}, 2022.
\newblock URL \url{https://www.gurobi.com}.

\bibitem[Horst and Tuy(1996)]{Horst1996GlobalOpt}
Reiner Horst and Hoang Tuy.
\newblock \emph{Global Optimization: Deterministic Approaches}.
\newblock Springer, Berlin, Heidelberg, 1996.

\bibitem[Katz et~al.(2017)Katz, Barrett, Dill, Julian, and
  Kochenderfer]{Reluplex2017}
Guy Katz, Clark Barrett, David Dill, Kyle Julian, and Mykel Kochenderfer.
\newblock Reluplex: An efficient smt solver for verifying deep neural networks,
  2017.

\bibitem[Kelley(1999)]{Kelley1999Iterative}
C.~T. Kelley.
\newblock \emph{Iterative Methods for Optimization}.
\newblock Society for Industrial and Applied Mathematics, 1999.

\bibitem[Kolda and Bader(2009)]{Kolda2009Tensors}
Tamara~G. Kolda and Brett~W. Bader.
\newblock Tensor decompositions and applications.
\newblock \emph{SIAM Review}, 51\penalty0 (3):\penalty0 455--500, 2009.

\bibitem[Krizhevsky et~al.(2014)Krizhevsky, Nair, and Hinton]{CIFAR10}
Alex Krizhevsky, Vinod Nair, and Geoffrey Hinton.
\newblock The cifar-10 dataset.
\newblock \emph{online: http://www. cs. toronto. edu/kriz/cifar. html}, 55,
  2014.

\bibitem[Land and Doig(1960)]{Land_Doig1960bab}
A.~H. Land and A.~G. Doig.
\newblock An automatic method of solving discrete programming problems.
\newblock \emph{Econometrica}, 28\penalty0 (3):\penalty0 497--520, 1960.

\bibitem[LeCun et~al.(1998)LeCun, Bottou, Bengio, Haffner,
  et~al.]{lecun1998gradient}
Yann LeCun, L{\'e}on Bottou, Yoshua Bengio, Patrick Haffner, et~al.
\newblock Gradient-based learning applied to document recognition.
\newblock \emph{Proceedings of the IEEE}, 86\penalty0 (11):\penalty0
  2278--2324, 1998.

\bibitem[LeCun et~al.(2015)LeCun, Bengio, and Hinton]{lecun2015deep}
Yann LeCun, Yoshua Bengio, and Geoffrey Hinton.
\newblock Deep learning.
\newblock \emph{Nature}, 521\penalty0 (7553):\penalty0 436--444, 2015.

\bibitem[Li et~al.(2020)Li, Guo, and Chen]{Li2020FR}
Qizhang Li, Yiwen Guo, and Hao Chen.
\newblock Practical no-box adversarial attacks against dnns.
\newblock In \emph{Advances in Neural Information Processing Systems
  (NeurIPS)}, 2020.

\bibitem[Liu et~al.(2021)Liu, Arnon, Lazarus, Strong, Barrett, and
  Kochenderfer]{Liu2021VerifAlgorithms}
Changliu Liu, Tomer Arnon, Christopher Lazarus, Christopher Strong, Clark
  Barrett, and Mykel~J. Kochenderfer.
\newblock Algorithms for verifying deep neural networks.
\newblock \emph{Foundations and Trends® in Optimization}, 4\penalty0
  (3-4):\penalty0 244--404, 2021.

\bibitem[Madry et~al.(2018)Madry, Makelov, Schmidt, Tsipras, and
  Vladu]{PGD_attacks2018}
Aleksander Madry, Aleksandar Makelov, Ludwig Schmidt, Dimitris Tsipras, and
  Adrian Vladu.
\newblock Towards deep learning models resistant to adversarial attacks.
\newblock In \emph{International Conference on Learning Representations
  (ICLR)}, 2018.

\bibitem[Mises and Pollaczek-Geiringer(1929)]{Mises1929PowerMethod}
R.~V. Mises and H.~Pollaczek-Geiringer.
\newblock Praktische verfahren der gleichungsauflösung .
\newblock \emph{ZAMM - Journal of Applied Mathematics and Mechanics /
  Zeitschrift für Angewandte Mathematik und Mechanik}, 9\penalty0
  (2):\penalty0 152--164, 1929.

\bibitem[Moore et~al.(2009)Moore, Kearfott, and Cloud]{Moore2009IntroductionTI}
Ramon~E. Moore, R.~Baker Kearfott, and Michael~J. Cloud.
\newblock \emph{Introduction to Interval Analysis}.
\newblock Society for Industrial and Applied Mathematics, 2009.

\bibitem[Raghunathan et~al.(2020)Raghunathan, Xie, Yang, Duchi, and
  Liang]{raghunathan2020understanding}
Aditi Raghunathan, Sang~Michael Xie, Fanny Yang, John Duchi, and Percy Liang.
\newblock Understanding and mitigating the tradeoff between robustness and
  accuracy.
\newblock In \emph{International Conference on Machine Learning (ICML)}, pages
  7909--7919, 2020.

\bibitem[Royo et~al.(2019)Royo, Calandra, Stipanovi{\'c}, and
  Tomlin]{Royo2019FastNN}
Vicenç~R{\'u}bies Royo, Roberto Calandra, Du{\v{s}}an~M. Stipanovi{\'c}, and
  Claire~J. Tomlin.
\newblock Fast neural network verification via shadow prices.
\newblock \emph{ArXiv}, abs/1902.07247, 2019.

\bibitem[Shafahi et~al.(2019)Shafahi, Huang, Studer, Feizi, and
  Goldstein]{shafahi2018are}
Ali Shafahi, W.~Ronny Huang, Christoph Studer, Soheil Feizi, and Tom Goldstein.
\newblock Are adversarial examples inevitable?
\newblock In \emph{International Conference on Learning Representations
  (ICLR)}, 2019.

\bibitem[Singh et~al.(2019)Singh, Ganvir, P\"{u}schel, and Vechev]{Singh2019}
Gagandeep Singh, Rupanshu Ganvir, Markus P\"{u}schel, and Martin Vechev.
\newblock Beyond the single neuron convex barrier for neural network
  certification.
\newblock In H.~Wallach, H.~Larochelle, A.~Beygelzimer, F.~d\textquotesingle
  Alch\'{e}-Buc, E.~Fox, and R.~Garnett, editors, \emph{Advances in Neural
  Information Processing Systems (NeurIPS)}, volume~32. Curran Associates,
  Inc., 2019.

\bibitem[Stone(1948)]{Stone1948Wierstrass}
M.~H. Stone.
\newblock The generalized weierstrass approximation theorem.
\newblock \emph{Mathematics Magazine}, 21\penalty0 (4):\penalty0 167--184,
  1948.

\bibitem[Szegedy et~al.(2014)Szegedy, Zaremba, Sutskever, Bruna, Erhan,
  Goodfellow, and Fergus]{Szegedy2014}
Christian Szegedy, Wojciech Zaremba, Ilya Sutskever, Joan Bruna, Dumitru Erhan,
  Ian Goodfellow, and Rob Fergus.
\newblock Intriguing properties of neural networks.
\newblock In \emph{International Conference on Learning Representations
  (ICLR)}, 2014.

\bibitem[Tjeng et~al.(2019)Tjeng, Xiao, and Tedrake]{Tjeng2019}
Vincent Tjeng, Kai~Y. Xiao, and Russ Tedrake.
\newblock Evaluating robustness of neural networks with mixed integer
  programming.
\newblock In \emph{International Conference on Learning Representations
  (ICLR)}, 2019.

\bibitem[Wang et~al.(2018{\natexlab{a}})Wang, Pei, Whitehouse, Yang, and
  Jana]{Wang2018intervals}
Shiqi Wang, Kexin Pei, Justin Whitehouse, Junfeng Yang, and Suman Jana.
\newblock Formal security analysis of neural networks using symbolic intervals.
\newblock In \emph{USENIX Conference on Security Symposium}, SEC'18, page
  1599–1614, USA, 2018{\natexlab{a}}. USENIX Association.

\bibitem[Wang et~al.(2021)Wang, Zhang, Xu, Lin, Jana, Hsieh, and
  Kolter]{Wang2021Beta-CROWN}
Shiqi Wang, Huan Zhang, Kaidi Xu, Xue Lin, Suman Jana, Cho-Jui Hsieh, and
  J~Zico Kolter.
\newblock Beta-{CROWN}: Efficient bound propagation with per-neuron split
  constraints for neural network robustness verification.
\newblock In A.~Beygelzimer, Y.~Dauphin, P.~Liang, and J.~Wortman Vaughan,
  editors, \emph{Advances in Neural Information Processing Systems (NeurIPS)},
  2021.

\bibitem[Wang et~al.(2018{\natexlab{b}})Wang, Girshick, Gupta, and
  He]{Wang2018NonLocal}
Xiaolong Wang, Ross Girshick, Abhinav Gupta, and Kaiming He.
\newblock Non-local neural networks.
\newblock In \emph{Conference on Computer Vision and Pattern Recognition
  (CVPR)}, pages 7794--7803, 2018{\natexlab{b}}.

\bibitem[Zhang et~al.(2019)Zhang, Yu, Jiao, Xing, El~Ghaoui, and
  Jordan]{zhang2019theoretically}
Hongyang Zhang, Yaodong Yu, Jiantao Jiao, Eric Xing, Laurent El~Ghaoui, and
  Michael Jordan.
\newblock Theoretically principled trade-off between robustness and accuracy.
\newblock In \emph{International Conference on Machine Learning (ICML)}, pages
  7472--7482, 2019.

\bibitem[Zhong and Deng(2019)]{zhong2019adversarial}
Yaoyao Zhong and Weihong Deng.
\newblock Adversarial learning with margin-based triplet embedding
  regularization.
\newblock In \emph{International Conference on Computer Vision (ICCV)}, pages
  6549--6558, 2019.

\bibitem[Zhu et~al.(2022)Zhu, Latorre, Chrysos, and Cevher]{zhu2022LipschitzPN}
Zhenyu Zhu, Fabian Latorre, Grigorios Chrysos, and Volkan Cevher.
\newblock Controlling the complexity and lipschitz constant improves polynomial
  nets.
\newblock In \emph{International Conference on Learning Representations
  (ICLR)}, 2022.

\end{thebibliography}
\clearpage

\appendix
\clearpage
\section*{Contents of the appendix}
\rebuttal{%
The societal impact of our method is discussed in \cref{sec:impact}. We cover the related work in \cref{sec:related}.} \cref{app:background} provides a more detailed coverage of PN architectures. In \cref{app:BaB_algorithm}, we include the pseudocode of our algorithms and we provide an analysis of the complexity. Experiments and ablation studies are available in \cref{app:Appendix}. %
To conclude, in \cref{app:proofs}, we include all of our proofs.
\section{Societal impact}
\label{sec:impact}
The performance on standard image classification benchmarks has increased substantially the last few years, owing to the success of neural networks. Their success enables their adoption in tackling real-world problems. However, robustness and trustworthiness of neural networks is of critical importance before their adoption in real-world applications. Our method is a verifier that focuses on polynomial networks and enables the complete verification of PNs. Therefore, we expect that by using the proposed method, certain properties of the robustness could be verified in a principled way. We expect this to have a predominantly positive societal impact as either a tool for pre-trained models or tool for certifying models as part of their debugging. However, it can also be used as a tool to find weaknesses of pretrained PNs by adversarial agents. 
\section{Related Work}
\label{sec:related}
In this Section, we give an overview of neural network verification and polynomial networks, that are centered around our target in this work.
\subsection{Neural Network Verification}
\label{subsec:rel_verif}

Early works on sound and complete NN verification were based on Mixed Integer Linear Programming (MILP) and Satisfiability Modulo Theory (SMT) solvers \citep{Reluplex2017, Ehlers2017, Bastani2016constraints, Tjeng2019} and were limited to both small datasets and networks.

The utilization of custom BaB algorithms enabled verification to scale to datasets and networks that are closer to those used in practice. \citet{Bunel2019BaB_first} review earlier methods like \citet{Reluplex2017} and show they can be formulated as BaB algorithms. BaDNB \citep{DePalmaBaB2021} proposes a novel branching strategy called Filtered Smart Branching and uses the Lagrangian decomposition-based bounding algorithm. $\beta$-CROWN \citep{Wang2021Beta-CROWN} proposes a bound propagation based algorithm. %
MN-BaB \citep{ferrari2022mnBaB} proposes a cost adjusted branching strategy and leverages multi-neuron relaxations and a GPU-based solver for bounds computing. Our work centers on the bounding algorithm by proposing a general convex lowerbound adapted to PNs.

BaB algorithms for ReLU networks focus their branching strategies on the activity of ReLU neurons. This has been observed to work better than input set branching for ReLU networks \citep{Bunel2019BaB_first}. Similarly to our method, \citet{Anderson2019}, \citet{Wang2018intervals}, \citet{Royo2019FastNN} use input set branching strategies.

\subsection{Polynomial Networks}
\label{subsec:rel_polynets}

First works have been focused on developing the foundations and showcasing the performance of PNs in different tasks \citep{Chrysos2021PolyNets,Chrysos2020NAPS}. Also, in \citet{Chrysos2022Class}, PN classifiers are formulated in a common framework where other previous methods like \citet{Wang2018NonLocal} can be framed. Lately, more emphasis has been put onto proving theoretical properties of PNs \citep{Fan2021Expressivity, choraria2022the}. In \citet{zhu2022LipschitzPN}, they derive Lipschitz constant and complexity bounds for two PN decompositions in terms of the $l_{\infty}$ and $l_{2}$ norms. They also analyze robustness of PNs against PGD adversarial attacks by measuring percentage of images where PGD fails to find an adversarial example, which is a complete but not sound verification method. Our verification method is sound and complete. %

\section{Background}
\label{app:background}

\subsection{Coupled CP decomposition (CCP)}
\label{app:CCP}

Relying on the CP decomposition \citep{Kolda2009Tensors}, the CCP decomposition provides us a core expression of PNs as used in \citet{Chrysos2020NAPS} to construct a generative model. Let $N$ be the polynomial degree, $\bm{z} \in \realnum^{d}$ the input vector, $d$, $k$ and $o$ the input, hidden and output sizes, the CCP decomposition can be expressed as:
\begin{equation}
    \bm{x}^{(n)} = (\bm{W}_{[n]}^{\top}\bm{z})*\bm{x}^{(n-1)} + \bm{x}^{(n-1)}\,, \forall n = 2, 3, \dots, N\,,
    \label{eq:CCP_recursion_annex}
\end{equation}
where $\bm{x}^{(1)} = \bm{W}_{[1]}^{\top}\bm{z}$, $\bm{f}(\bm{z}) = \bm{C}\bm{x}^{(N)}+ \bm{\beta}$ and $*$ denotes the Hadamard product. 

For example, the second order CCP factorization will lead to the following formulation:
\begin{equation}
    \begin{matrix}
        \bm{x}^{(1)} = \bm{W}_{[1]}^{\top}\bm{z}, &
        \bm{x}^{(2)} = \bm{W}_{[2]}^{\top}\bm{z} * \bm{x}^{(1)} + \bm{x}^{(1)}, &
        \bm{f}(\bm{z}) = \bm{C}\bm{x}^{(2)} + \bm{\beta}\,,
    \end{matrix}
    \label{eq:CCP_2nd_def}
\end{equation}
where $\bm{W}_{[1]} \in \realnum^{d \times k}$, $\bm{W}_{[2]} \in \realnum^{d \times k}$ and $\bm{C} \in \realnum^{o \times k}$ are weight matrices, $\bm{\beta} \in \realnum^{o}$ is a vector.

\subsection{Nested coupled CP decomposition (NCP)}
\label{app:NCP}

\begin{figure}[h]
    \centering
    \resizebox{\textwidth}{!}{
    \begin{tikzpicture}[
    greenbox/.style={rectangle, fill=green!40, minimum size=8mm},
    bluebox/.style={rectangle,fill=blue!30, minimum size=8mm},
    yellowbox/.style={rectangle,fill=yellow!60, minimum size=5mm},
    redbox/.style={rectangle,fill=red!40, minimum size=8mm},
    ]
    
    \node[greenbox] (z) {$\bm{z}$};
    
    \node[bluebox] (A1) [right = of z] {$\bm{W}_{[1]}$};
    \node[bluebox] (S2) [right = of A1] {$\bm{S}_{[2]}$};
    \node[yellowbox] (add1) [right = of S2] {$+$};
    \node[bluebox] (b2) [below = of add1] {$\bm{b}_{[2]}$};
    \node[yellowbox] (h2) [right = of add1] {$*$};
    \node[bluebox] (A2) [above = of h2] {$\bm{W}_{[2]}$};
    \node[bluebox] (S3) [right = of h2] {$\bm{S}_{[3]}$};
    \node[yellowbox] (add2) [right = of S3] {$+$};
    \node[bluebox] (b3) [below = of add2] {$\bm{b}_{[3]}$};
    \node[yellowbox] (h3) [right = of add2] {$*$};
    \node[bluebox] (A3) [above = of h3] {$\bm{W}_{[3]}$};

    \node[bluebox] (C) [right = of h3] {$\bm{C}$};
    
    \node[yellowbox] (add) [right = of C] {$+$};
    
    \node[bluebox] (b) [above = of add] {$\bm{\beta}$};
    
    \node[redbox] (output) [right = of add] {$\bm{f}(\bm{z})$};
    
    \draw[->] (z.east) to (A1.west);
    
    \draw[->] (z.north) -- (0,2.75) -| (A2.north);
    \draw[->] (z.north) -- (0,2.75) -| (A3.north);
    
    \draw[->] (A1.east) to node[above] {$\bm{x}^{(1)}$} (S2.west);
    \draw[->] (S2.east)  to  (add1.west);
    \draw[->] (A2.south)  to  (h2.north);
    \draw[->] (add1.east)  to  (h2.west);
    \draw[->] (b2.north)  to  (add1.south);
    \draw[->] (h2.east) to node[above] {$\bm{x}^{(2)}$} (S3.west);
    \draw[->] (S3.east)  to  (add2.west);
    \draw[->] (A3.south)  to  (h3.north);
    \draw[->] (b3.north)  to  (add2.south);
    \draw[->] (add2.east)  to  (h3.west);
    \draw[->] (h3.east) to node[above] {$\bm{x}^{(3)}$} (C.west);
    \draw[->] (C.east)  to  (add.west);
    \draw[->] (b.south)  to  (add.north);
    \draw[->] (add.east)  to  (output.west);
    
    \end{tikzpicture}}
    \caption{Third order NCP network architecture.}
    \label{fig:NCP}
\end{figure}
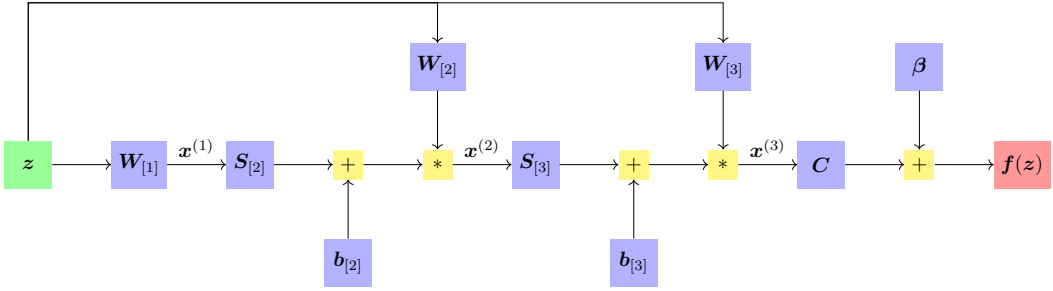

The NCP model leverages a joint hierarchical decomposition,
which provided strong results in both generative and discriminative tasks in \citet{Chrysos2021PolyNets}. It can be expressed with the following recursive relation:
\begin{equation}
    \bm{x}^{(n)} = (\bm{W}_{[n]}^{\top}\bm{z})*(\bm{S}_{[n]}^{\top}\bm{x}^{(n-1)} + \bm{b}_{[n]})\,,
    \label{eq:NCP_recursion}
\end{equation}
for $n \in [N-1] + 1$ with $\bm{x}^{(1)} = \bm{W}_{[1]}^{\top}\bm{z}$ and $\bm{f}(\bm{z}) = \bm{C}\bm{x}^{(N)}+ \bm{\beta}$.

We present the first and second order partial derivatives of NCP below.
\begin{equation}
    \nabla_{\bm{z}} x_{i}^{(n)} = \bm{w}_{[n]:i}\cdot (\bm{s}_{[n]:i}^{\top}\bm{x}^{(n-1)} + b_{[n]i}) + (\bm{w}_{[n]:i}^{\top}\bm{z})\cdot (\sum_{j = 1}^{k}s_{[n]ji}\nabla_{\bm{z}} x_{j}^{(n-1)})
\label{eq:grad_NCP_recursive}
\end{equation}
\begin{equation}
\begin{aligned}
    \nabla_{\bm{z} \bm{z}}^{2} x_{i}^{(n)} & = \bm{w}_{[n]:i}(\sum_{j = 1}^{k}s_{[n]ji}\nabla_{\bm{z}} x_{j}^{(n-1)})^{\top} + (\sum_{j = 1}^{k}s_{[n]ji}\nabla_{\bm{z}} x_{j}^{(n-1)})\bm{w}_{[n]:i}^{\top} \\
    & \quad +(\bm{w}_{[n]:i}^{\top}z)\cdot(\sum_{j = 1}^{k}s_{[n]ji} \nabla_{\bm{z} \bm{z}}^{2} x_{j}^{(n-1)})\,.\\
\end{aligned}
\label{eq:hessian_NCP_recursive}
\end{equation}
Our method can be easily extended to this type of PNs. As a reference, we obtain a verified accuracy of $76.2\%$ with an upper bound of $76.4\%$ with an $2 \times 25$ NCP at $\epsilon = 0.026$.

\subsection{Product of Polynomials}
\label{app:Prod_Poly}

In practice, \citet{Chrysos2021PolyNets} report that to reduce further the parameters they are often stacking sequentially a number of polynomials, see \cref{fig:Prod_poly}. That results in a setting that is referred to as product of polynomials, with the highest degree of expansion being defined as the product of all the degrees of the individual polynomials. Hopefully, as we will demonstrate below, our formulation can be extended to this setting.

Let $\bm{x} = \bm{x}^{(N_1)}$ be the output of a PN $f_{1} : \realnum^{d} \to \realnum^{k}$ of $N_{1}^{\text{th}}$-degree and $\bm{y} = \bm{y}^{(N_2)}$ be the output of a PN $f_{2} : \realnum^{k} \to \realnum^{k}$ of $N_{2}^{\text{th}}$-degree, with $\bm{x}$ and $\bm{y}$ coming either from \cref{eq:CCP_recursion_annex} or \cref{eq:NCP_recursion}. The product of polynomials $f_{i}$ and $f_{2}$ is defined as $f : \realnum^{d} \to \realnum^{o}$:

\begin{equation}
    \bm{f}(\bm{z}) = \bm{C}\bm{f}_{2}(\bm{f}_{1}(\bm{z})) + \bm{\beta}\,,
\end{equation}
where $\bm{C} \in \realnum^{o \times k}$ and $\bm{\beta} \in \realnum^{o}$.

We present the first and second order partial derivatives of the Product of polynomials below.

Let $\bm{z}$ be the input, $\bm{x} = \bm{f}_1(\bm{z})$ and $\bm{y} = \bm{f}_2(\bm{x})$ by applying the chain rule of partial derivatives, we can obtain: 
\begin{equation}
    \nabla_{\bm{z}} y_{i} = \sum_{j = i}^{k} \frac{\partial y_{i}}{\partial x_{j}}\nabla_{\bm{z}} x_{j},~~ \nabla_{\bm{z} \bm{z}}^{2} y_{i} = \sum_{j = i}^{k} \frac{\partial y_{i}}{\partial x_{j}}\nabla_{\bm{z} \bm{z}}^{2} x_{j} + \bm{J}_{\bm{z}}^{\top}(\bm{x})\nabla_{\bm{x} \bm{x}}^{2} y_{i} \bm{J}_{\bm{z}}(\bm{x})\,,
    \label{eq:grad_hess_prod_poly}
\end{equation}
where $\bm{J}_{\bm{z}}^{\top}(\bm{x}) \in \realnum^{k \times d}$ is the Jacobian matrix.

\begin{figure}[h]
    \centering
    \resizebox{\textwidth}{!}{
    \begin{tikzpicture}[
    greenbox/.style={rectangle, fill=green!40, minimum size=8mm},
    bluebox/.style={rectangle,fill=blue!30, minimum size=8mm},
    yellowbox/.style={rectangle,fill=yellow!60, minimum size=5mm},
    redbox/.style={rectangle,fill=red!40, minimum size=8mm},
    ]
    
    \node[greenbox] (z) {$\bm{z}$};
    
    \node[bluebox] (U1) [right = of z] {$\bm{W}_{x[1]}$};
    \node[yellowbox] (h1) [right = of U1] {$*$};
    \node[yellowbox] (add1) [right = of h1] {$+$};
    \node[bluebox] (U2) [above = of h1] {$\bm{W}_{x[2]}$};

    \node[bluebox] (U3) [right = of add1] {$\bm{W}_{y[1]}$};
    \node[yellowbox] (h2) [right = of U3] {$*$};
    \node[yellowbox] (add2) [right = of h2] {$+$};
    
    \node[bluebox] (U4) [above = of h2] {$\bm{W}_{y[2]}$};
    
    \node[bluebox] (C) [right = of add2] {$\bm{C}$};
    
    \node[yellowbox] (add) [right = of C] {$+$};
    
    \node[bluebox] (b) [above = of add] {$\bm{\beta}$};
    
    \node[redbox] (output) [right = of add] {$\bm{f}(\bm{z})$};
    
    \draw[->] (z.east) to (U1.west);
    
    \draw[->] (z.north) -- (0,2.75) -| (U2.north);
    \draw[->] (U1.east) -- (2.75,0) -| (2.75,-0.75) -| (add1.south);
    \draw[->] (add1.east) -- (5.75,0) -| (5.75,2.75) -| (U4.north);
    \draw[->] (U3.east) -- (8,0) -| (8,-0.75) -| (add2.south);
    
    \draw[->] (U1.east) to node[above] {$\bm{x}^{(1)}$} (h1.west) ;
    \draw[->] (h1.east)  to  (add1.west);
    \draw[->] (U2.south)  to  (h1.north);
    \draw[->] (add1.east) to node[below] {$\bm{x}^{(2)}$} (U3.west);
    \draw[->] (h2.east)  to  (add2.west);
    \draw[->] (U3.east)  to node[above] {$\bm{y}^{(1)}$} (h2.west);
    \draw[->] (U4.south)  to  (h2.north);
    \draw[->] (add2.east) to node[above] {$\bm{y}^{(2)}$} (C.west);
    \draw[->] (C.east)  to  (add.west);
    \draw[->] (b.south)  to  (add.north);
    \draw[->] (add.east)  to  (output.west);
    
    \end{tikzpicture}}
    \caption{Product of two $2^{\text{nd}}$-degree CCP polynomials. The final classification layer of the first polynomial is dropped, the output $\bm{x}^{(2)}$ is fed into a second $2^{\text{nd}}$-degree polynomial.}
    \label{fig:Prod_poly}
\end{figure}
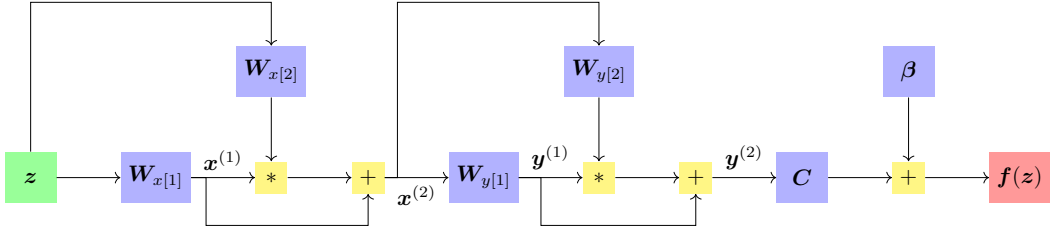

\subsection{Convolutional PNs (C-PNs)}
\label{subsec:conv_PNs}

As seen in \citet{Chrysos2021PolyNets}, the performance of PNs is boosted when employing convolution operators instead of standard linear mappings. Convolutions reduce the number of parameters and take advantage of the local 2D structure of the images. Let $\bm{\mathcal{Z}} \in \realnum^{c \times h \times w}$ be the input image with $c$, $h$ and $w$ being the number of channels, height and width respectively. The $N^{\text{th}}$-degree \textbf{CCP\_Conv} becomes:
\begin{equation}
    \bm{X}^{(n,i)} = (\bm{\mathcal{Z}} \circ \bm{\mathcal{W}}_{[n,i]})*\bm{X}^{(n-1,i)} + \bm{X}^{(n-1,i)}\,, \forall n \in [N-1] + 1, \forall i \in [q]\,,
    \label{eq:CCP_Conv_recursion_annex}
\end{equation}
where $\bm{X}^{(1,i)} = \bm{\mathcal{Z}} \circ \bm{\mathcal{W}}_{[1,i]}, \forall i \in [q]$, $\bm{f}(\bm{z}) = \bm{C}\bm{x}^{(N)}+ \bm{\beta}$ and $\bm{x}^{(N)} = \text{flat}(\bm{\mathcal{X}}^{(N)})$.

In order to verify C-PNs, we convert every convolutional layer into a linear layer via Toeplitz matrices (see \citet{AI2_2018}) resulting in an equivalent CCP PN.

\subsection{Details on IBP}
\label{subsec:IBP_details}

In this section we further elaborate on the IBP for the network outputs, the gradients and Hessians by applying the notions in \cref{subsec:IBP_PN}.

\textbf{IBP for network outputs}
In order to compute lower and upper bounds on the output of the network $\bm{f}(\bm{z})$, we apply interval arithmetic techniques in the recursive formulas \cref{eq:CCP_recursion,eq:NCP_recursion}. For both the CCP and the NCP cases, let $\hat{\bm{x}}^{(n)} = \bm{W}_{[n]}^{\top}\bm{z}$:
\begin{equation}
    \begin{aligned}
        \lb(\hat{\bm{x}}^{(n)}) = {\bm{W}_{[n]}^{\top}}^{+}\bm{l} + {\bm{W}_{[n]}^{\top}}^{-}\bm{u}\\
        \ub(\hat{\bm{x}}^{(n)}) = {\bm{W}_{[n]}^{\top}}^{+}\bm{u} + {\bm{W}_{[n]}^{\top}}^{-}\bm{l}\,.
    \end{aligned}
    \label{eq:bounds_xnhat}
\end{equation}
In the case of CCP PNs, the recursive formula in \cref{eq:CCP_recursion} becomes $\bm{x}^{(n)} = (\hat{\bm{x}}^{(n)} + \bm{1})*\bm{x}^{(n-1)}$, whereas for NCP, \cref{eq:NCP_recursion} becomes $\bm{x}^{(n)} = (\hat{\bm{x}}^{(n)})*(\bm{S}_{[n]}^{\top}\bm{x}^{(n-1)} + \bm{b}_{[n]})$ for any $n \in [N-1] + 1$, for $n=1$, in both architectures $\bm{x}^{(n)} = \hat{\bm{x}}^{(n)}$. Then, we can define the bounds with the following recursive formulas:
\begin{equation}
    \begin{aligned}
        \textbf{CCP} & \left\{\begin{aligned}
            S = \left\{\begin{aligned}
                (\lb(\hat{\bm{x}}^{(n)}) + 1)*\lb(\bm{x}^{(n-1)}),\\
                (\lb(\hat{\bm{x}}^{(n)}) + 1)*\ub(\bm{x}^{(n-1)}),\\
                (\ub(\hat{\bm{x}}^{(n)}) + 1)*\lb(\bm{x}^{(n-1)}),\\
                (\ub(\hat{\bm{x}}^{(n)}) + 1)*\ub(\bm{x}^{(n-1)}),\\
            \end{aligned}\right\}\\
            \lb(\bm{x}^{(n)}) = \min(S)\\
            \ub(\bm{x}^{(n)}) = \max(S)\\
        \end{aligned}\right.\\
        \textbf{NCP} & \left\{\begin{aligned}
            \lb(\bm{S}_{[n]}^{\top}\bm{x}^{(n-1)} + \bm{b}_{[n]}) = {\bm{S}_{[n]}^{\top}}^{+}\lb(\bm{x}^{(n-1)}) + {\bm{S}_{[n]}^{\top}}^{-}\ub(\bm{x}^{(n-1)}) + \bm{b}_{[n]}\\
            \ub(\bm{S}_{[n]}^{\top}\bm{x}^{(n-1)} + \bm{b}_{[n]}) = {\bm{S}_{[n]}^{\top}}^{+}\ub(\bm{x}^{(n-1)}) + {\bm{S}_{[n]}^{\top}}^{-}\lb(\bm{x}^{(n-1)}) + \bm{b}_{[n]}\\
            S = \left\{\begin{aligned}
                \lb(\bm{S}_{[n]}^{\top}\bm{x}^{(n-1)} + \bm{b}_{[n]})*\lb(\bm{x}^{(n-1)}),\\
                \lb(\bm{S}_{[n]}^{\top}\bm{x}^{(n-1)} + \bm{b}_{[n]})*\ub(\bm{x}^{(n-1)}),\\
                \ub(\bm{S}_{[n]}^{\top}\bm{x}^{(n-1)} + \bm{b}_{[n]})*\lb(\bm{x}^{(n-1)}),\\
                \ub(\bm{S}_{[n]}^{\top}\bm{x}^{(n-1)} + \bm{b}_{[n]})*\ub(\bm{x}^{(n-1)}),\\
            \end{aligned}\right\}\\
            \lb(\bm{x}^{(n)}) = \min(S)\\
            \ub(\bm{x}^{(n)}) = \max(S)\,,\\
        \end{aligned}\right.\\
    \end{aligned}
    \label{eq:bounds_xn}
\end{equation}
where the $\min$ and $\max$ operators are applied element-wise in sets of vectors or matrices. Finally, the output bounds are obtained with:
\begin{equation}
    \begin{aligned}
        \lb(\bm{f}(\bm{z})) = \bm{C}^{+}\lb(\bm{x}^{(N)}) + \bm{C}^{-}\ub(\bm{x}^{(N)}) + \bm{\beta}\\
        \ub(\bm{f}(\bm{z})) = \bm{C}^{+}\ub(\bm{x}^{(N)}) + \bm{C}^{-}\lb(\bm{x}^{(N)}) + \bm{\beta}\,.
    \end{aligned}
    \label{eq:bounds_output}
\end{equation}
These operations can be implemented as a forward pass through the PN.

\textbf{IBP for gradients}

The next step is to obtain bounds on the gradients of the PNs. Again, %
with the help of interval arithmetic theory, we can extend the recursive formulas \cref{eq:grad_CCP_recursive,eq:grad_NCP_recursive} for computing IBP bounds. In the case of CCP PNs:
\begin{equation}
    \begin{aligned}
        \lb(\bm{J}_{\bm{z}}^{\top}(\bm{x}^{(n)})) & = \lb(\bm{W}_{[n]}*\bm{x}^{(n-1)} + (\bm{W}_{[n]}^{\top}\bm{z} + 1)*\bm{J}_{\bm{z}}^{\top}(\bm{x}^{(n-1)})))\\ 
        & = \lb(\bm{W}_{[n]}*\bm{x}^{(n-1)}) + \lb((\bm{W}_{[n]}^{\top}\bm{z} + 1)*\bm{J}_{\bm{z}}^{\top}(\bm{x}^{(n-1)}))\\ & = \bm{W}_{[n]}^{+}*\lb(\bm{x}^{(n-1)}) + \bm{W}_{[n]}^{-}*\ub(\bm{x}^{(n-1)}) + \lb((\bm{W}_{[n]}^{\top}\bm{z} + 1)*\bm{J}_{\bm{z}}^{\top}(\bm{x}^{(n-1)}))\\
        \ub(\bm{J}_{\bm{z}}^{\top}(\bm{x}^{(n)})) & = \ub(\bm{W}_{[n]}*\bm{x}^{(n-1)} + (\bm{W}_{[n]}^{\top}\bm{z} + 1)*\bm{J}_{\bm{z}}^{\top}(\bm{x}^{(n-1)})))\\ 
        & = \ub(\bm{W}_{[n]}*\bm{x}^{(n-1)}) + \ub((\bm{W}_{[n]}^{\top}\bm{z} + 1)*\bm{J}_{\bm{z}}^{\top}(\bm{x}^{(n-1)}))\\ & = \bm{W}_{[n]}^{+}*\ub(\bm{x}^{(n-1)}) + \bm{W}_{[n]}^{-}*\lb(\bm{x}^{(n-1)}) + \ub((\bm{W}_{[n]}^{\top}\bm{z} + 1)*\bm{J}_{\bm{z}}^{\top}(\bm{x}^{(n-1)}))\,,\\
    \end{aligned}
    \label{eq:bounds_grad_CCP}
\end{equation}
with
\begin{equation}
    \begin{aligned}
        S = \left\{\begin{aligned}
            \lb(\bm{W}_{[n]}^{\top}\bm{z} + 1)*\lb(\bm{J}_{\bm{z}}^{\top}(\bm{x}^{(n-1)})),\\
            \lb(\bm{W}_{[n]}^{\top}\bm{z} + 1)*\ub(\bm{J}_{\bm{z}}^{\top}(\bm{x}^{(n-1)})),\\
            \ub(\bm{W}_{[n]}^{\top}\bm{z} + 1)*\lb(\bm{J}_{\bm{z}}^{\top}(\bm{x}^{(n-1)})),\\
            \ub(\bm{W}_{[n]}^{\top}\bm{z} + 1)*\ub(\bm{J}_{\bm{z}}^{\top}(\bm{x}^{(n-1)}))\\
        \end{aligned}\right\}\\
        \lb((\bm{W}_{[n]}^{\top}\bm{z} + 1)*\bm{J}_{\bm{z}}^{\top}(\bm{x}^{(n-1)})) = \min(S)\\
        \ub((\bm{W}_{[n]}^{\top}\bm{z} + 1)*\bm{J}_{\bm{z}}^{\top}(\bm{x}^{(n-1)})) = \max(S)\,,\\
    \end{aligned}
    \label{eq:bounds_grad_CCP_right_part}
\end{equation}
where the Hadamard product of a $\realnum^{k \times d}$ matrix with a $\realnum^{d}$ vector results in a $\realnum^{k \times d}$ matrix. The $\min$ and $\max$ operators are applied element-wise in sets of vectors or matrices.

In the case of NCP PNs:
\begin{equation}
    \begin{aligned}
        \lb(\bm{J}_{\bm{z}}^{\top}(\bm{x}^{(n)})) & = \lb(\bm{W}_{[n]}*(\bm{S}_{[n]}\bm{x}^{(n-1)} + \bm{b}_{[n]}) + (\bm{W}_{[n]}^{\top}\bm{z})*(\bm{S}_{[n]}\bm{J}_{\bm{z}}^{\top}(\bm{x}^{(n-1)})))\\ 
        & = \lb(\bm{W}_{[n]}*(\bm{S}_{[n]}\bm{x}^{(n-1)} + \bm{b}_{[n]})) + \lb((\bm{W}_{[n]}^{\top}\bm{z})*(\bm{S}_{[n]}\bm{J}_{\bm{z}}^{\top}(\bm{x}^{(n-1)})))\\
        & = \bm{W}_{[n]}^{+}*\lb(\bm{S}_{[n]}\bm{x}^{(n-1)} + \bm{b}_{[n]})) + \bm{W}_{[n]}^{-}*\ub(\bm{S}_{[n]}\bm{x}^{(n-1)} + \bm{b}_{[n]})) \\
        & + \lb((\bm{W}_{[n]}^{\top}\bm{z})*(\bm{S}_{[n]}\bm{J}_{\bm{z}}^{\top}(\bm{x}^{(n-1)})))\\
        \ub(\bm{J}_{\bm{z}}^{\top}(\bm{x}^{(n)})) & = \ub(\bm{W}_{[n]}*(\bm{S}_{[n]}\bm{x}^{(n-1)} + \bm{b}_{[n]}) + (\bm{W}_{[n]}^{\top}\bm{z})*(\bm{S}_{[n]}\bm{J}_{\bm{z}}^{\top}(\bm{x}^{(n-1)})))\\ 
        & = \ub(\bm{W}_{[n]}*(\bm{S}_{[n]}\bm{x}^{(n-1)} + \bm{b}_{[n]})) + \ub((\bm{W}_{[n]}^{\top}\bm{z})*(\bm{S}_{[n]}\bm{J}_{\bm{z}}^{\top}(\bm{x}^{(n-1)})))\\
        & = \bm{W}_{[n]}^{+}*\ub(\bm{S}_{[n]}\bm{x}^{(n-1)} + \bm{b}_{[n]})) + \bm{W}_{[n]}^{-}*\lb(\bm{S}_{[n]}\bm{x}^{(n-1)} + \bm{b}_{[n]})) \\
        & + \ub((\bm{W}_{[n]}^{\top}\bm{z})*(\bm{S}_{[n]}\bm{J}_{\bm{z}}^{\top}(\bm{x}^{(n-1)})))\,,\\
    \end{aligned}
    \label{eq:bounds_grad_NCP}
\end{equation}
with
\begin{equation}
    \begin{aligned}
        \lb(\bm{S}_{[n]}\bm{J}_{\bm{z}}^{\top}(\bm{x}^{(n-1)})) = \bm{S}_{[n]}^{+}\lb(\bm{J}_{\bm{z}}^{\top}(\bm{x}^{(n-1)})) + \bm{S}_{[n]}^{-}\ub(\bm{J}_{\bm{z}}^{\top}(\bm{x}^{(n-1)})) \\
        \ub(\bm{S}_{[n]}\bm{J}_{\bm{z}}^{\top}(\bm{x}^{(n-1)})) = \bm{S}_{[n]}^{+}\ub(\bm{J}_{\bm{z}}^{\top}(\bm{x}^{(n-1)})) + \bm{S}_{[n]}^{-}\lb(\bm{J}_{\bm{z}}^{\top}(\bm{x}^{(n-1)})) \\
        S = \left\{\begin{aligned}
            \lb(\bm{W}_{[n]}^{\top}\bm{z})*\lb(\bm{S}_{[n]}\bm{J}_{\bm{z}}^{\top}(\bm{x}^{(n-1)})),\\
            \lb(\bm{W}_{[n]}^{\top}\bm{z})*\ub(\bm{S}_{[n]}\bm{J}_{\bm{z}}^{\top}(\bm{x}^{(n-1)})),\\
            \ub(\bm{W}_{[n]}^{\top}\bm{z})*\lb(\bm{S}_{[n]}\bm{J}_{\bm{z}}^{\top}(\bm{x}^{(n-1)})),\\
            \ub(\bm{W}_{[n]}^{\top}\bm{z})*\ub(\bm{S}_{[n]}\bm{J}_{\bm{z}}^{\top}(\bm{x}^{(n-1)}))\\
        \end{aligned}\right\}\\
        \lb((\bm{W}_{[n]}^{\top}\bm{z})*(\bm{S}_{[n]}\bm{J}_{\bm{z}}^{\top}(\bm{x}^{(n-1)}))) = \min(S)\\
        \ub((\bm{W}_{[n]}^{\top}\bm{z})*(\bm{S}_{[n]}\bm{J}_{\bm{z}}^{\top}(\bm{x}^{(n-1)}))) = \max(S)\,.\\
    \end{aligned}
    \label{eq:bounds_grad_NCP_right_part}
\end{equation}
\cref{eq:bounds_grad_CCP} (\cref{eq:bounds_grad_NCP}) jointly with \cref{eq:bounds_grad_CCP_right_part} (\cref{eq:bounds_grad_NCP_right_part}) allows to recursively obtain gradient bounds for the CCP (NCP) PNs starting with $\lb(\bm{J}_{\bm{z}}^{\top}(\bm{x}^{(1)})) = \ub(\bm{J}_{\bm{z}}^{\top}(\bm{x}^{(1)})) = \bm{W}_{[1]}$. Finally, bounds of the gradients of the network output in both CCP and NCP cases are:
\begin{equation}
    \begin{aligned}
        \lb(\bm{J}_{\bm{z}}(\bm{f})) = \bm{C}^{+}\lb(\bm{J}_{\bm{z}}(\bm{x}^{(N)})) + \bm{C}^{-}\ub(\bm{J}_{\bm{z}}(\bm{x}^{(N)})) + \bm{\beta}\\
        \ub(\bm{J}_{\bm{z}}(\bm{f})) = \bm{C}^{+}\ub(\bm{J}_{\bm{z}}(\bm{x}^{(N)})) + \bm{C}^{-}\lb(\bm{J}_{\bm{z}}(\bm{x}^{(N)})) + \bm{\beta}\,.
    \end{aligned}
    \label{eq:bounds_grad_output}
\end{equation}
\textbf{IBP for Hessians}
Lastly, with help of interval arithmetic and aforementioned IBP for PN outputs and gradients, we can use recursive formulas \cref{eq:hess_CCP_recursive,eq:hessian_NCP_recursive} to compute bounds of the Hessian matrices.

In the case of CCP PNs:
\begin{equation}
    \begin{aligned}
        \lb(\bm{H}_{\bm{z}}(x_{i}^{(n)})) & = \lb(\nabla_{\bm{z}}x_{i}^{(n-1)} \bm{w}_{[n]:i}^{\top} + \{\nabla_{\bm{z}}x_{i}^{(n-1)} \bm{w}_{[n]:i}^{\top}\}^\top + (\bm{w}_{[n]:i}^{\top}\bm{z} + 1) \bm{H}_{\bm{z}}(x_{i}^{(n-1)}))\\
        & = \lb(\nabla_{\bm{z}}x_{i}^{(n-1)}) {\bm{w}_{[n]:i}^{+}}^{\top} + \ub(\nabla_{\bm{z}}x_{i}^{(n-1)}) {\bm{w}_{[n]:i}^{-}}^{\top} \\
        & + \bm{w}_{[n]:i}^{+}{\lb(\nabla_{\bm{z}}x_{i}^{(n-1)})}^{\top} + \bm{w}_{[n]:i}^{-}{\ub(\nabla_{\bm{z}}x_{i}^{(n-1)})}^{\top} \\
        & +  \lb((\bm{w}_{[n]:i}^{\top}\bm{z} + 1) \bm{H}_{\bm{z}}(x_{i}^{(n-1)}))\\
        \ub(\bm{H}_{\bm{z}}(x_{i}^{(n)})) & = \ub(\nabla_{\bm{z}}x_{i}^{(n-1)} \bm{w}_{[n]:i}^{\top} + \{\nabla_{\bm{z}}x_{i}^{(n-1)} \bm{w}_{[n]:i}^{\top}\}^\top + (\bm{w}_{[n]:i}^{\top}\bm{z} + 1) \bm{H}_{\bm{z}}(x_{i}^{(n-1)}))\\
        & = \ub(\nabla_{\bm{z}}x_{i}^{(n-1)}) {\bm{w}_{[n]:i}^{+}}^{\top} + \lb(\nabla_{\bm{z}}x_{i}^{(n-1)}) {\bm{w}_{[n]:i}^{-}}^{\top} \\
        & + \bm{w}_{[n]:i}^{+}{\ub(\nabla_{\bm{z}}x_{i}^{(n-1)})}^{\top} + \bm{w}_{[n]:i}^{-}{\lb(\nabla_{\bm{z}}x_{i}^{(n-1)})}^{\top} \\
        & +  \ub((\bm{w}_{[n]:i}^{\top}\bm{z} + 1) \bm{H}_{\bm{z}}(x_{i}^{(n-1)}))\,,\\
    \end{aligned}
    \label{eq:bounds_hess_CCP_recursive}
\end{equation}
where
\begin{equation}
    \begin{aligned}
        S = \left\{\begin{aligned}
            \lb(\bm{w}_{[n]:i}^{\top}\bm{z} + 1) \lb(\bm{H}_{\bm{z}}(x_{i}^{(n-1)})),\\
            \lb(\bm{w}_{[n]:i}^{\top}\bm{z} + 1) \ub(\bm{H}_{\bm{z}}(x_{i}^{(n-1)})),\\
            \ub(\bm{w}_{[n]:i}^{\top}\bm{z} + 1) \lb(\bm{H}_{\bm{z}}(x_{i}^{(n-1)})),\\
            \ub(\bm{w}_{[n]:i}^{\top}\bm{z} + 1) \ub(\bm{H}_{\bm{z}}(x_{i}^{(n-1)}))\\
        \end{aligned}\right\}\\
        \lb((\bm{w}_{[n]:i}^{\top}\bm{z} + 1) \bm{H}_{\bm{z}}(x_{i}^{(n-1)})) = \min(S)\\
        \ub((\bm{w}_{[n]:i}^{\top}\bm{z} + 1) \bm{H}_{\bm{z}}(x_{i}^{(n-1)})) = \max(S)\,.\\
    \end{aligned}
    \label{eq:bounds_hess_CCP_right_part}
\end{equation}
In the case of NCP PNs:
\begin{equation}
    \begin{aligned}
        \lb(\bm{H}_{\bm{z}}(x_{i}^{(n)})) & = \lb(\bm{s}_{[n]i}\bm{J}_{\bm{z}}^{\top}(\bm{x}^{(n-1)}) \bm{w}_{[n]:i}^{\top} + \{\bm{s}_{[n]i}\bm{J}_{\bm{z}}^{\top}(\bm{x}^{(n-1)}) \bm{w}_{[n]:i}^{\top}\}^\top + (\bm{w}_{[n]:i}^{\top}\bm{z})\sum_{j=1}^{k}s_{ij}\bm{H}_{\bm{z}}(x_{j}^{(n-1)}))\\
        & = \lb(\bm{s}_{[n]i}\bm{J}_{\bm{z}}^{\top}(\bm{x}^{(n-1)})) {\bm{w}_{[n]:i}^{+}}^{\top} + \ub(\bm{s}_{[n]i}\bm{J}_{\bm{z}}^{\top}(\bm{x}^{(n-1)})) {\bm{w}_{[n]:i}^{l}}^{\top} \\
        & + \bm{w}_{[n]:i}^{+}{\lb(\bm{s}_{[n]i}\bm{J}_{\bm{z}}^{\top}(\bm{x}^{(n-1)}))}^{\top} + \bm{w}_{[n]:i}^{-}{\ub(\bm{s}_{[n]i}\bm{J}_{\bm{z}}^{\top}(\bm{x}^{(n-1)}))}^{\top} \\
        & +  \lb((\bm{w}_{[n]:i}^{\top}\bm{z})\sum_{j=1}^{k}s_{ij}\bm{H}_{\bm{z}}(x_{j}^{(n-1)}))\\
        \ub(\bm{H}_{\bm{z}}(x_{i}^{(n)})) & = \ub(\bm{s}_{[n]i}\bm{J}_{\bm{z}}^{\top}(\bm{x}^{(n-1)}) \bm{w}_{[n]:i}^{\top} + \{\bm{s}_{[n]i}\bm{J}_{\bm{z}}^{\top}(\bm{x}^{(n-1)}) \bm{w}_{[n]:i}^{\top}\}^\top + (\bm{w}_{[n]:i}^{\top}\bm{z})\sum_{j=1}^{k}s_{ij}\bm{H}_{\bm{z}}(x_{j}^{(n-1)}))\\
        & = \ub(\bm{s}_{[n]i}\bm{J}_{\bm{z}}^{\top}(\bm{x}^{(n-1)})) {\bm{w}_{[n]:i}^{+}}^{\top} + \lb(\bm{s}_{[n]i}\bm{J}_{\bm{z}}^{\top}(\bm{x}^{(n-1)})) {\bm{w}_{[n]:i}^{l}}^{\top} \\
        & + \bm{w}_{[n]:i}^{+}{\ub(\bm{s}_{[n]i}\bm{J}_{\bm{z}}^{\top}(\bm{x}^{(n-1)}))}^{\top} + \bm{w}_{[n]:i}^{-}{\lb(\bm{s}_{[n]i}\bm{J}_{\bm{z}}^{\top}(\bm{x}^{(n-1)}))}^{\top} \\
        & +  \ub((\bm{w}_{[n]:i}^{\top}\bm{z})\sum_{j=1}^{k}s_{ij}\bm{H}_{\bm{z}}(x_{j}^{(n-1)}))\,,\\
    \end{aligned}
    \label{eq:bounds_hess_NCP_recursive}
\end{equation}
where
\begin{equation}
    \begin{aligned}
        \lb(\bm{s}_{[n]i}\bm{J}_{\bm{z}}^{\top}(\bm{x}^{(n-1)})) &= \bm{s}_{[n]i}^{+}\lb(\bm{J}_{\bm{z}}^{\top}(\bm{x}^{(n-1)})) + \bm{s}_{[n]i}^{-}\ub(\bm{J}_{\bm{z}}^{\top}(\bm{x}^{(n-1)}))\\
        \ub(\bm{s}_{[n]i}\bm{J}_{\bm{z}}^{\top}(\bm{x}^{(n-1)})) &= \bm{s}_{[n]i}^{+}\ub(\bm{J}_{\bm{z}}^{\top}(\bm{x}^{(n-1)})) + \bm{s}_{[n]i}^{-}\lb(\bm{J}_{\bm{z}}^{\top}(\bm{x}^{(n-1)}))\\
        \lb(\sum_{j=1}^{k}s_{ij}\bm{H}_{\bm{z}}(x_{j}^{(n-1)})) &= \sum_{j=1}^{k}s_{ij}^{+}\lb(\bm{H}_{\bm{z}}(x_{j}^{(n-1)})) + s_{ij}^{-}\ub(\bm{H}_{\bm{z}}(x_{j}^{(n-1)}))\\
        \ub(\sum_{j=1}^{k}s_{ij}\bm{H}_{\bm{z}}(x_{j}^{(n-1)})) &= \sum_{j=1}^{k}s_{ij}^{+}\ub(\bm{H}_{\bm{z}}(x_{j}^{(n-1)})) + s_{ij}^{-}\lb(\bm{H}_{\bm{z}}(x_{j}^{(n-1)}))\,,\\
    \end{aligned}
\end{equation}
and
\begin{equation}
    \begin{aligned}
        S = \left\{\begin{aligned}
            \lb(\bm{w}_{[n]:i}^{\top}\bm{z}) \lb(\sum_{j=1}^{k}s_{ij}\bm{H}_{\bm{z}}(x_{j}^{(n-1)})),\\
            \lb(\bm{w}_{[n]:i}^{\top}\bm{z}) \ub(\sum_{j=1}^{k}s_{ij}\bm{H}_{\bm{z}}(x_{j}^{(n-1)})),\\
            \ub(\bm{w}_{[n]:i}^{\top}\bm{z}) \lb(\sum_{j=1}^{k}s_{ij}\bm{H}_{\bm{z}}(x_{j}^{(n-1)})),\\
            \ub(\bm{w}_{[n]:i}^{\top}\bm{z}) \ub(\sum_{j=1}^{k}s_{ij}\bm{H}_{\bm{z}}(x_{j}^{(n-1)}))\\
        \end{aligned}\right\}\\
        \lb((\bm{w}_{[n]:i}^{\top}\bm{z})\sum_{j=1}^{k}s_{ij}\bm{H}_{\bm{z}}(x_{j}^{(n-1)})) = \min(S)\\
        \ub((\bm{w}_{[n]:i}^{\top}\bm{z})\sum_{j=1}^{k}s_{ij}\bm{H}_{\bm{z}}(x_{j}^{(n-1)})) = \max(S)\,.\\
    \end{aligned}
    \label{eq:bounds_hess_NCP_right_part}
\end{equation}
Similarly to gradient bounds, we can recursively compute $\lb(\bm{H}_{\bm{z}}(x_{i}^{(N)}))$ and $\ub(\bm{H}_{\bm{z}}(x_{i}^{(N)}))$ starting with $\lb(\bm{H}_{\bm{z}}(x_{i}^{(1)})) = \ub(\bm{H}_{\bm{z}}(x_{i}^{(1)})) = \bm{0}_{d \times d}$. Finally, bounds of the Hessian matrix of the output can be obtained with:
\begin{equation}
    \begin{aligned}
        \lb(\bm{H}_{\bm{z}}(f(\bm{z})_i)) &= \sum_{j = 1}^{k}c_{ij}^{+}\lb(\bm{H}_{\bm{z}}(x_{j}^{(N)})) + c_{ij}^{-}\ub(\bm{H}_{\bm{z}}(x_{j}^{(N)}))\\
        \ub(\bm{H}_{\bm{z}}(f(\bm{z})_i)) &= \sum_{j = 1}^{k}c_{ij}^{+}\ub(\bm{H}_{\bm{z}}(x_{j}^{(N)})) + c_{ij}^{-}\lb(\bm{H}_{\bm{z}}(x_{j}^{(N)}))\,.
    \end{aligned}
    \label{eq:bounds_hessian_output}
\end{equation}
\section{BaB algorithm for PN robustness verification}
\label{app:BaB_algorithm}

BaB algorithms \citep{Land_Doig1960bab} are a well known approach to global optimization \citep{Horst1996GlobalOpt}. These algorithms intend to find the global minima of a optimization problem in the form of \cref{eq:verif_problem}, therefore guaranteeing soundness and completeness for verification. In this Section we present the details of our BaB based verification algorithm, we prove the theoretical convergence of BaB to the global minima of \cref{eq:verif_problem} and the complexity of its key steps.

Firstly, the property that we want to verify or falsify is the one given by \cref{eq:Robust_def}. This property is defined by {$(i)$} the network $f$, {$(ii)$} the adversarial budget $\epsilon$, {$(iii)$} a correctly classified input $\bm{z}_{0} : \argmax \bm{f}(\bm{z}_{0}) = t$. In order to verify the property, %
it is necessary that for each $\gamma \neq t$, the global minima of \cref{eq:verif_problem} is greater than 0, i.e., $\forall \gamma \neq t : v^{*} = \min_{\bm{z} \in C_{\text{in}}} f(\bm{z})_{t} - f(\bm{z})_{\gamma} > 0$. On the contrary, in order to falsify \cref{eq:Robust_def}, it is sufficient that for any $\gamma \neq t$ the global minima of \cref{eq:verif_problem} is smaller or equal than 0, i.e., $ \exists \gamma \neq t : v^{*} = \min_{\bm{z} \in C_{\text{in}}} f(\bm{z})_{t} - f(\bm{z})_{\gamma} \leq 0$. In order to reduce execution times, we heuristically sort all the $\gamma$ %
in decreasing order by network output, $\{\gamma_{i} : \gamma_{i} \neq t, f(\bm{z}_{0})_{\gamma_{i}} \geq f(\bm{z}_{0})_{\gamma_{j}} \forall j > i \}$ and solve \cref{eq:verif_problem} until one global minima is smaller or equal to 0 or all global minimas are greater than 0.

In order to solve \cref{eq:verif_problem}, we use \cref{alg:bab}. Without any modifications, this algorithm converges to the global minima. However, for verification, it is sufficient to find that the lower bound of the global minima ({\tt global\_lb}) cannot be smaller than or equal to 0, or that the upper bound of the global minima in a subset ({\tt prob\_ub}) is smaller than 0, i.e., there exists an adversarial example in that subset. Therefore we can cut the execution early when employed for verification, these optional changes are highlighted in red in \cref{alg:bab}.

\begin{algorithm}[!htb]
    \caption{Branch and Bound, adapted from \cite{Bunel2019BaB_first}}\label{alg:bab}
    \small
    \begin{algorithmic}[1]
    \algrenewcommand\algorithmicindent{2.0em}%
        \Function{BaB}{$\mtt{f}, \mtt{l}, \mtt{u}, \mtt{t}, \mtt{\gamma}$}
            \State $\mtt{global\_ub} \gets \inf$
            \State $\mtt{global\_lb} \gets - \inf$
            \State $\mtt{\alpha} \gets \mtt{compute\_alpha}(\mtt{f}, \mtt{l}, \mtt{u}, \mtt{t}, \mtt{\gamma})$ \Comment{\textcolor{teal}{\cref{alg:pow}}}
            \State $\mtt{probs}\gets \left[ (\mtt{global\_lb},\mtt{l}, \mtt{u}) \right]$
            \While{$\mtt{global\_ub} - \mtt{global\_lb} > 10^{-6} \textcolor{red}{~\textbf{and}~ \mtt{global\_lb} \leq 0}$}
                \State $([], \mtt{l'}, \mtt{u'}) \gets \mtt{pick\_out}(\mtt{probs})$ \Comment{\textcolor{teal}{Take subset with the minimum lower bound}}
                \State $\left[ (\mtt{l\_1} ,\mtt{u\_1}), \dots, (\mtt{l\_s}, \mtt{u\_s}) \right] \gets \mtt{split}(\mtt{l'},\mtt{u'})$ \Comment{\textcolor{teal}{Split widest input variable interval in halves}}
                \For{$i = 1 \dots s $}
                    \State $\mtt{prob\_ub} \gets \mtt{compute\_UB}(\mtt{f}, \mtt{l\_i}, \mtt{u\_i}, \mtt{t}, \mtt{\gamma})$ \Comment{\textcolor{teal}{PGD over the original function $g(\bm{z})$}}
                    \State $\mtt{prob\_lb} \gets \mtt{compute\_LB}(\mtt{f}, \mtt{l\_i}, \mtt{u\_i}, \mtt{t}, \mtt{\gamma}, \mtt{\alpha})$\Comment{\textcolor{teal}{PGD over $g_{\alpha}(\bm{z},\alpha)$}}
                    \If{$\mtt{prob\_ub} < \mtt{global\_ub}$}
                        \State $\mtt{global\_ub} \gets \mtt{prob\_ub}$
                        \State $\mtt{prune\_problems}(\mtt{probs}, \mtt{global\_ub})$ \Comment{\textcolor{teal}{Remove if $\mtt{prob\_lb} > \mtt{global\_ub}$}}
                    \EndIf
                    \If{$\mtt{prob\_lb} < \mtt{global\_ub} \textcolor{red}{~\textbf{and}~ \mtt{prob\_lb} \leq 0}$}
                        \State $\mtt{probs}.\mtt{append}((\mtt{prob\_lb}, \mtt{l\_i}, \mtt{u\_i}))$
                    \EndIf
                    \textcolor{red}{\If{$\mtt{prob\_ub} \leq 0$} \Comment{\textcolor{teal}{An adversarial example was found}}
                        \State\Return $[], 0$
                    \EndIf}
                \EndFor
                \textcolor{red}{\If{$|\mtt{probs}| == 0$} \Comment{\textcolor{teal}{$\mtt{prob\_lb} > 0$ for every subset}}
                        \State\Return $[], 1$
                \EndIf}
                \State $\mtt{global\_lb} \gets \min\{ \mtt{lb}\ |\ (\mtt{lb}, [], []) \in \mtt{probs}\}$
            \EndWhile
            \State\Return $\mtt{global\_ub}, \mtt{global\_ub} > 0$
        \EndFunction
    \end{algorithmic}
\end{algorithm} 

\cref{alg:bab} can theoretically be applied to any twice-differentiable classifier provided we have a method for computing an $\alpha$ for obtaining valid $\alpha$-convexification bounds. For this matter, we propose \cref{alg:pow}, which again can be used for any twice-differentiable classifier if we are able to compute its lower bounding Hessian $\bm{L_{H}}$. In the PN case, we propose a method for evaluating the matrix-vector product $\bm{L_{H}}\bm{v}$, this is covered in \cref{alg:LHv}. \rebuttal{We note that in \cref{alg:pow}, for initializing the vector {\ttfamily v}, each position of the vector is randomly sampled in the [0,1] interval, then the resulting vector is rescaled so that $||\mtt{v}||_{2} = 1$.}
\begin{algorithm}[!htb]
    \caption{Power method for $\alpha$ estimation}\label{alg:pow}
    \small
    \begin{algorithmic}[1]
    \algrenewcommand\algorithmicindent{2.0em}%
        \Function{compute\_alpha}{$\mtt{f}, \mtt{l},\mtt{u},\mtt{t},\mtt{\gamma}$}
            \State $\mtt{v} \gets \mtt{init\_v}(\mtt{f},\mtt{l},\mtt{u},\mtt{t},\mtt{\gamma})$ \Comment{\textcolor{teal}{Ensure $\mtt{v}$ is not an eigenvector of $\bm{L_{H}}$ and $||\mtt{v}||_{2} = 1$}}
            \State $\mtt{prev\_v} \gets \bm{0}$
            \State $\mtt{r} \gets 0$
            \While{$||\mtt{v}-\mtt{prev\_v}||_{2} > 10^{-6} $}
                \State $\mtt{prev\_v} \gets \mtt{v}$
                \State $\mtt{v} \gets \mtt{evaluate\_LHv(\mtt{f}, \mtt{l},\mtt{u},\mtt{t},\mtt{\gamma}, \mtt{v})}$ \Comment{\textcolor{teal}{\cref{alg:LHv}}}
                \State $\mtt{r} \gets ||\mtt{v}||_{2}$
                \State $\mtt{v} \gets \mtt{v}/\mtt{r}$
                \State $\mtt{v} \gets \mtt{evaluate\_LHv(\mtt{f}, \mtt{l},\mtt{u},\mtt{t},\mtt{\gamma}, \mtt{v})}$\Comment{\textcolor{teal}{Evaluate $\bm{L_{H}} \bm{v}$ twice to deal with negative eigenvalues.}}
                \State $\mtt{r} \gets ||\mtt{v}||_{2}$
                \State $\mtt{v} \gets \mtt{v}/\mtt{r}$
            \EndWhile
            \State\Return $\mtt{r}/2$
        \EndFunction
    \end{algorithmic}
\end{algorithm}

\textbf{Complexity of $\bm{L_{H}}\bm{v}$ evaluation} \cref{alg:LHv} is governed by an outer loop which performs $N-1$ iterations, see line 5. Inside the loop, the most expensive operations are in lines 6, 9, 10, 11 and 12, the rest of operations can be performed in $O(d)$ time. In line 5, the bounds of the gradients are computed, this operation can be performed for every level $n \in [N]$ outside the main loop and store the results using $O(N \cdot d \cdot k)$ time. For lines 9, 10, 11 and 12, an outer loop with $k$ iterations is used for the summation. Then, inside the summation, four dot products plus vector-scalar multiplications are performed, leading to a time complexity of $O(k \cdot d)$. Overall, the complexity of \cref{alg:LHv} is $O(N\cdot d \cdot k)$. 
\begin{algorithm}[!htb]
    \caption{Evaluation of $\bm{L_{H}}\bm{v}$ for a CCP PN, implementation of \cref{prop:lower bounding_Hessian_vector}}\label{alg:LHv}
    \small
    \begin{algorithmic}[1]
    \algrenewcommand\algorithmicindent{2.0em}%
        \Function{evaluate\_LHv}{$\mtt{f}, \mtt{problem}, \mtt{v}, t, \gamma$}
            \State $\mtt{lw} \gets \bm{c}_{t:} - \bm{c}_{\gamma:}$
            \State $\mtt{uw} \gets \bm{c}_{t:} - \bm{c}_{\gamma:}$ \Comment{\textcolor{teal}{Initial upper and lower bounds of the $\delta$ weight are given by the last linear layer.}}
            \State $\mtt{result} \gets \bm{0}$
            \For{$n = \mtt{degree}(f) \dots 2$}
                \State $\mtt{lg}, \mtt{ug} \gets \lb(\bm{J}_{\bm{z}}(\bm{x}^{(n)})), \ub(\bm{J}_{\bm{z}}(\bm{x}^{(n)}))$
                \State $\mtt{S1} \gets \{\mtt{lg}\cdot\mtt{lw}, \mtt{lg}\cdot\mtt{uw}, \mtt{ug}\cdot\mtt{lw}, \mtt{ug}\cdot\mtt{uw}\}$
                \State $\mtt{lg}, \mtt{ug} \gets \mtt{min}(\mtt{S1}), \mtt{max}(\mtt{S1})$
                \State $\mtt{Lv} \gets \sum_{i = 1}^{k}{\bm{w}_{[n]:i}}^{+} \cdot \mtt{lg[i,:]}^{\top}\mtt{v}  + {\bm{w}_{[n]:i}}^{-} \cdot \mtt{ug[i,:]}^{\top}\mtt{v} + \mtt{lg[i,:]} \cdot {\bm{w}_{[n]:i}^{\top}}^{+}\mtt{v}  + \mtt{ug[i,:]} \cdot {\bm{w}_{[n]:i}^{\top}}^{-}\mtt{v}$
                \State $\mtt{Uv} \gets \sum_{i = 1}^{k}{\bm{w}_{[n]:i}}^{+} \cdot \mtt{ug[i,:]}^{\top}\mtt{v}  + {\bm{w}_{[n]:i}}^{-} \cdot \mtt{lg[i,:]}^{\top}\mtt{v} + \mtt{ug[i,:]} \cdot {\bm{w}_{[n]:i}^{\top}}^{+}\mtt{v}  + \mtt{lg[i,:]} \cdot {\bm{w}_{[n]:i}^{\top}}^{-}\mtt{v}$
                \State $\mtt{L1} \gets \sum_{i = 1}^{k}{\bm{w}_{[n]:i}}^{+} \cdot \mtt{lg[i,:]}^{\top}\bm{1}  + {\bm{w}_{[n]:i}}^{-} \cdot \mtt{ug[i,:]}^{\top}\bm{1} + \mtt{lg[i,:]} \cdot {\bm{w}_{[n]:i}^{\top}}^{+}\bm{1}  + \mtt{ug[i,:]} \cdot {\bm{w}_{[n]:i}^{\top}}^{-}\bm{1}$
                \State $\mtt{U1} \gets \sum_{i = 1}^{k}{\bm{w}_{[n]:i}}^{+} \cdot \mtt{ug[i,:]}^{\top}\bm{1}  + {\bm{w}_{[n]:i}}^{-} \cdot \mtt{lg[i,:]}^{\top}\bm{1} + \mtt{ug[i,:]} \cdot {\bm{w}_{[n]:i}^{\top}}^{+}\bm{1}  + \mtt{lg[i,:]} \cdot {\bm{w}_{[n]:i}^{\top}}^{-}\bm{1}$
                \State $\mtt{result} \gets \mtt{result} + (\mtt{Lv} + \mtt{Uv})/2 + ((\mtt{L1} - \mtt{U1})/2) * \mtt{v}$
                \State $\mtt{lx}$, $\mtt{ux} \gets \lb(\bm{W}_{[n]}^{\top} \bm{z} + 1), \ub(\bm{W}_{[n]}^{\top} \bm{z} + 1)$
                \State $\mtt{S2} \gets \{\mtt{lx}\cdot\mtt{lw}, \mtt{lx}\cdot\mtt{uw}, \mtt{ux}\cdot\mtt{lw}, \mtt{ux}\cdot\mtt{uw}\}$
                \State $\mtt{lw}, \mtt{uw} \gets \mtt{min}(\mtt{S2}), \mtt{max}(\mtt{S2})$ \Comment{\textcolor{teal}{Update bounds of the weight ($\lb{\delta'},\ub{\delta'}$), see \cref{prop:lower bounding_Hessian_vector}}}
                
            \EndFor
        \State\Return $\mtt{result}$
        \EndFunction
    \end{algorithmic}
\end{algorithm}

\section{Auxiliary experimental results and discussion}
\label{app:Appendix}

We start this Section by comparing the performance of our method with ReLU NN verification algorithms \cref{subsec:beta_crown}. We analyze a limitation of the method in \cref{subsec:experiments_limitations}. Then, in \cref{subsec:ablation_input} we perform an ablation study on the effect the input size of the network has in our PN verification algorithm.

\textbf{Additional notation}: In addition to the notation already defined in the main paper, we use $\circ$ for convolutions.

\subsection{Comparison with ReLU BaB verification algorithms.}
\label{subsec:beta_crown}

\begin{table}[!htb]
    \centering
    \caption{Description of convolutional PNs used in our experiments.}
    \begin{tabular}{c|ccccccccc}
        Name & degree / N & kernel size & stride & padding & channels  \\
        \hline
        PN\_Conv2 & $2$ & $5\times5$ & $2$ & $2$ & $32$\\
        PN\_Conv4 & $4$ & $7\times7$ & $4$ & $3$ & $64$\\
    \end{tabular}
    \label{tab:my_label}
\end{table}

Complete ReLU NN verification algorithms are usually benchmarked against other methods in the networks and $\epsilon$ values proposed by \citet{Singh2019}. %
ReLU NN verification algorithms mostly rely on the specific structure of the networks, e.g. ReLU activation, which makes a direct comparison with ReLU nets hard. However, we match these benchmarks by training PNs with same degree as the number of ReLU layers and similar number of parameters. For a detailed description of these networks check \cref{tab:comparable_fully_connected} and \cref{tab:comparable_convolutional}.

\begin{table}[!htb]
    \centering
    \caption{Verification results on our proposed PN verification benchmarks. We run our verification procedure over the first $1000$ images of the test split of each dataset. {\tt Time(s)} refers to the average running time per image when the verification was not timed out, i.e., we can verify or falsify the property in the given time limit, {\tt Ver.\%} is the verified accuracy and {\tt U.B.} its upper bound. We use the same $\epsilon$ values as \citet{Wang2021Beta-CROWN}. We observe that in comparison with $\beta$-CROWN, our method generally has a larger gap between verified accuracy and its upper bound.%
    }
    \begin{tabular}{c|cccc|cccccccccccccccc}
        & \multicolumn{4}{c|}{$\beta$-CROWN$^{*}$} & & & &\\
        \multirow{2}{*}{Dataset} & \multicolumn{4}{c|}{\cite{Wang2021Beta-CROWN}} & \multicolumn{4}{c}{VPN}\\ 
        & Model & Time(s) & Ver.\% & U.B. & Model & Time(s) & Ver.\% & U.B.\\ 
        \hline 
        \multirow{6}{*}{MNIST} & $6\times100$ & $102$ & $69.9$ & $84.2$ & $5\times30$ & $34$ & $24.7$ & $81.1$\\ 
        & $6\times200$ & $86$ & $77.4$ & $90.1$ & $5\times81$ & $60$ & $88.0$ & $91.1$\\ 
        & $9\times100$ & $103$ & $62.0$& $82.0$ & $8\times24$ & $33$ & $0.0$ & $80.2$\\ 
        & $9\times200$ & $95$ & $73.5$ & $91.1$ & $8\times70$ & $94$ & $0.6$ & $91.6$\\
        & ConvSmall & $7.0$ & $72.7$ & $73.2$ & PN\_Conv2 & $3.0$ & $3.5$ & $12.1$\\
        & ConvBig & $3.1$ & $79.3$ & $80.4$ & PN\_Conv4 & $8.9$ & $0.0$ & $0.0$ \\
        \hline
        \multirow{2}{*}{CIFAR10} & ConvSmall & $6.8$ & $46.3$  & $48.1$ & PN\_Conv2 & $63.4$ & $15.7$ & $16.1$\\
        & ConvBig & $15.3$ & $51.6$ & $55.0$ & PN\_Conv4 & $161.4$ & $9.4$ & $16.3$\\
        \multicolumn{9}{l}{\footnotesize $^*$Note: We report numbers from \citet{Wang2021Beta-CROWN}.}\\
    \end{tabular}
    \label{tab:comparison_alpha_crown}
\end{table}

When comparing with the SOTA verifier $\beta$-CROWN \citep{Wang2021Beta-CROWN}, we firstly observe that the upper bound of verified accuracy is very similar between the NNs and PNs fully connected benchmarks. However, for convolutional PNs (C-PNs) the upper bound is much lower, even $0$ for PN\_Conv4 trained in the MNIST dataset, see \cref{tab:comparison_alpha_crown}. Secondly, we observe that the gap between the upper bound and verified accuracy ({\tt{U.B.}} and {\tt{Ver.\%}} in \cref{tab:comparison_alpha_crown}) is small for $\beta$-CROWN. In our case, this gap is only small for the shallowest PN and smallest $\epsilon$ ($5\times 81$ with $\epsilon = 0.015$), obtaining $88.0 \%$ verified accuracy and outperforming $\beta$-CROWN with only $77.4 \%$. For PN\_Conv2 trained in CIFAR10 and evaluated at $\epsilon = 2/255$, our verified accuracy, $15.7 \%$, is also really close to the upper bound $16.1 \%$, but both are low in comparison with the $\beta$-CROWN equivalent.

\begin{table}[tb]
    \centering
    \caption{Comparison of PNs to the fully connected ReLU network benchmarks proposed in \citet{Singh2019}. {\tt\#Par.} refers to the number of parameters in the ReLU benchmark and our PNs respectively. We build PNs with same degree as number of non-linearities in their corresponding ReLU NN benchmark. We also adjust the hidden size so that the number of parameters is matched.}
    \begin{tabular}{ccc|cccc}
    ReLU NN & \#Par. & Acc.(\%) & PN & \#Par. & Acc.(\%)\\
    \hline
    $6 \times 100$ & $119,910$ & $96.0$ & $5\times30$ & $117,910$ & $97.2$\\
    $9 \times 100$ & $150,210$ & $94.7$ & $8\times24$ & $150,778$ & $97.5$\\
    $6 \times 200$ & $319,810$ & $97.2$ & $5\times81$ & $318,340$ & $96.2$\\
    $9 \times 200$ & $440,410$ & $95.0$ & $8\times70$ & $439,750$ & $96.7$\\
    
    \end{tabular}
    \label{tab:comparable_fully_connected}
\end{table}

\begin{table}[tb]
    \centering
    \caption{Comparison of convolutional PNs (C-PNs) to the Convolutional ReLU network benchmarks proposed in \citet{Singh2019}. {\tt\#Par.} refers to the number of parameters in the ReLU benchmark and our PNs respectively. We build C-PNs with same number of convolutional layers as their corresponding ReLU NN benchmark. %
    }
    \begin{tabular}{c|ccc|cccccc}
    Dataset & ReLU NN & \#Par. & Acc.(\%) & PN & \#Par. & Acc.(\%)\\
    \hline
    MNIST & Conv Small & $89,606$ & $98.0$ & PN\_Conv2 & $114,218$ & $98.0$\\
    MNIST & Conv Big & $1,974,762$ & $92.9$ & PN\_Conv4 & $834,762$ & $98.0$\\
    CIFAR10 & Conv Small & $125,318$ & $63.0$ & PN\_Conv2 & $133,994$ & $57.2$\\
    CIFAR10 & Conv Big & $2,466,858$ & $63.1$ & PN\_Conv4 & $845,514$ & $58.3$\\
    \end{tabular}
    \label{tab:comparable_convolutional}
\end{table}

\subsection{Limitations of the proposed method}
\label{subsec:experiments_limitations}

Scaling our method to high-degree settings is non-trivial due to IBP approximation errors, that accumulate when propagating the bounds through each layer of the PN. Nevertheless, empirical evidence demonstrates that a lower-degree PN is enough for obtaining comparative performance, e.g.,\citet{fan2020universal}.

\subsection{Effect of the input size in PN verification}
\label{subsec:ablation_input}

In this experiment we evaluate the effect of the input size in the verification of a PN. In order to evaluate this, we train $3$ different CCP networks over the STL10 dataset. Each model has been trained with STL10 images preprocessed with 3 different resizing factors. Every network is a CCP\_$4 \times 25$ trained with a learning rate of $10^{-4}$.

\begin{table}[!htb]
    \centering
    \caption{Input size ablation study: CCP\_$4 \times 25$ networks are trained over STL10 with three different input sizes. Both Verified accuracy ({\tt Ver.\%}) and its upper bound ({\tt U.B.}) increase when the input size is reduced for all the studied $\epsilon$ values.}
    \begin{tabular}{cccccccc}
        \multirow{2}{*}{Input size} & \multirow{2}{*}{Acc.\%} & \multirow{2}{*}{$\epsilon$} & \multicolumn{2}{c}{VPN}\\
        & & & Time(s) & Ver.\% & U.B.\\
        \hline
        $3 \times 96 \times 96$ & $35.1$ & $1/255$ & $47.4$ & $4.8$ & $10.6$\\
        (original)& & $2/255$ & $19.0$ & $0.0$ & $2.4$\\
        & & $8/255$ & $21.5$ & $0.0$ & $0.0$\\
        \hline
        $3 \times 64 \times 64$ & ${35.6}$ & $1/255$ & $52.1$ &  $11.7$ & $12.6$\\
        & & $2/255$ & $20.7$ & $0.6$ & $3.2$\\
        & & $8/255$ & $12.0$ & $0.0$ & $0.0$\\
        \hline
        $3 \times 32 \times 32$ &  $31.8$ &  $1/255$ & $14.7$ & $\mathbf{17.9}$ & $\mathbf{18.0}$\\
        & & $2/255$ & $21.4$ & $\mathbf{7.2}$ & $\mathbf{8.9}$\\
        & & $8/255$ & $16.2$ & $0.0$ & $\mathbf{0.1}$ \\
    \end{tabular}
    
    \label{tab:ablation_input}
\end{table}

As seen in \cref{tab:ablation_input}, downsampling the input images results in a decrease in the accuracy (i.e., from $35.1\%$ to $31.8\%$ at $32\times32$ resolution). However, downsampling the input improves the robustness of the network. For all $\epsilon$ values we observe less successful adversarial attacks, i.e., higher upper bound of the verified accuracy ({\tt U.B.}). In addition, the verification process is improved, we obtain higher values for verified accuracy ({\tt Ver.\%}), but also the gap with its upper bound is reduced. We believe this phenomenon %
is due to the networks learning more robust representations with a smaller input size \citep{raghunathan2020understanding,zhang2019theoretically}.%

\section{Proofs}
\label{app:proofs}
\subsection{IBP on rank-1 matrices}
\label{subsec:IBP_matrices}
\rebuttal{In order to facilitate the computation of bounds of the Hessian matrix, we introduce some general properties regarding the bounds IBP of rank-1 matrices.}
\begin{lemma}
\rebuttal{Let $\bm{M} = \bm{u}\bm{v}^{\top}$ be a rank-1 matrix defined by vectors $\bm{u} \in \realnum^{d_{1}}$ and $\bm{v} \in \realnum^{d_{2}}$. Let $\bm{u}^{+}$ and $\bm{v}^{+}$ be the positive parts and $\bm{u}^{-}$ and $\bm{v}^{-}$ the negative parts, satisfying $\bm{u} = \bm{u}^{+} + \bm{u}^{-}$ and $\bm{v} = \bm{v}^{+} + \bm{v}^{-}$. The matrix $\bm{M}$ can be decomposed as $\bm{M} = \bm{u}^{+}{\bm{v}^{+}}^{\top} + \bm{u}^{+}{\bm{v}^{-}}^{\top} + \bm{u}^{-}{\bm{v}^{+}}^{\top} +  \bm{u}^{-}{\bm{v}^{-}}^{\top}$.}
\label{lem:pos_neg_decom}
\end{lemma}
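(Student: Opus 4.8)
The plan is to prove the identity by direct algebraic expansion, since it is purely a statement about the bilinearity of the outer product together with the defining relation $\bm{u} = \bm{u}^{+} + \bm{u}^{-}$ (and symmetrically for $\bm{v}$). First I would recall that the positive and negative parts are taken element-wise, consistent with the notation $w_{i}^{+} = \max\{0,w_{i}\}$ and $w_{i}^{-} = \min\{0,w_{i}\}$ introduced in \cref{subsec:IBP_PN}. At every coordinate this gives $u_{i} = u_{i}^{+} + u_{i}^{-}$ by construction, hence the vector identity $\bm{u} = \bm{u}^{+} + \bm{u}^{-}$, and likewise $\bm{v} = \bm{v}^{+} + \bm{v}^{-}$.

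Next I would substitute both decompositions into $\bm{M} = \bm{u}\bm{v}^{\top}$. Using linearity of the transpose, $(\bm{v}^{+} + \bm{v}^{-})^{\top} = {\bm{v}^{+}}^{\top} + {\bm{v}^{-}}^{\top}$, and then distributing the outer product over the two sums yields the four terms
\begin{equation}
\bm{M} = (\bm{u}^{+} + \bm{u}^{-})\left({\bm{v}^{+}}^{\top} + {\bm{v}^{-}}^{\top}\right) = \bm{u}^{+}{\bm{v}^{+}}^{\top} + \bm{u}^{+}{\bm{v}^{-}}^{\top} + \bm{u}^{-}{\bm{v}^{+}}^{\top} + \bm{u}^{-}{\bm{v}^{-}}^{\top},
\end{equation}
which is exactly the claimed decomposition.

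There is no genuine obstacle here: the only point requiring care is the justification of the distributive step, which holds because the outer product is bilinear. This follows entrywise, since $(\bm{a}\bm{b}^{\top})_{ij} = a_{i}b_{j}$ is bilinear in the pair $(\bm{a},\bm{b})$, so each entry of $\bm{M}$ splits as $u_{i}v_{j} = (u_{i}^{+}+u_{i}^{-})(v_{j}^{+}+v_{j}^{-})$ into the four corresponding scalar products. The value of this elementary lemma is downstream rather than intrinsic: each of the four rank-$1$ summands has a \emph{sign-definite} factorization, and this is precisely what later permits the $\mn(\cdot) = \max\{|\lb(\cdot)|,|\ub(\cdot)|\}$ bounds on rank-$1$ terms to be evaluated cheaply (\cref{lem:maxnorm_matrix,lem:maxnorm_matrix_sum,lem:maxnorm_matrix_mult}) and assembled into the expression for $\mn(\bm{H}_{g}(\bm{z}))$ in \cref{theo:mn_PN} without ever forming the full Hessian.
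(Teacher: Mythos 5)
Your proof is correct and is essentially identical to the paper's: both substitute $\bm{u} = \bm{u}^{+} + \bm{u}^{-}$ and $\bm{v} = \bm{v}^{+} + \bm{v}^{-}$ into $\bm{M} = \bm{u}\bm{v}^{\top}$ and expand by bilinearity of the outer product to obtain the four terms. Your added entrywise justification of the distributive step and the remarks on downstream use are fine but go beyond what the paper's one-line proof records.
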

\begin{proof}[Proof of \cref{lem:pos_neg_decom}]
\rebuttal{Following $\bm{u} = \bm{u}^{+} + \bm{u}^{-}$ and $\bm{v} = \bm{v}^{+} + \bm{v}^{-}$:
\begin{equation}
    \begin{aligned}
        \bm{M} \coloneqq \bm{u}\bm{v}^{\top} = (\bm{u}^{+} + \bm{u}^{-})(\bm{v}^{+} + \bm{v}^{-})^{\top} = \bm{u}^{+}{\bm{v}^{+}}^{\top} + \bm{u}^{+}{\bm{v}^{-}}^{\top} + \bm{u}^{-}{\bm{v}^{+}}^{\top} +  \bm{u}^{-}{\bm{v}^{-}}^{\top}\\
    \end{aligned}
\end{equation}}
\end{proof}
\begin{lemma}
\rebuttal{Let $\bm{M} = \bm{u}\bm{v}^{\top}$ be a rank-1 matrix defined by vectors $\bm{u} \in \realnum^{d_{1}}$ and $\bm{v} \in \realnum^{d_{2}}$. Let $\bm{u}$ ($\bm{v}$) be lower and upper bounded by $\lb(\bm{u})$ and $\ub(\bm{u})$ ($\lb(\bm{v})$ and $\ub(\bm{v})$). The matrix $\bm{M}$ is lower and upper bounded by:
\begin{equation}
    \begin{aligned}
        \bm{M} & \geq \lb(\bm{M}) = \lb(\bm{u})^{+}{\lb(\bm{v})^{+}}^{\top} + \ub(\bm{u})^{+}{\lb(\bm{v})^{-}}^{\top} + \lb(\bm{u})^{-}{\ub(\bm{v})^{+}}^{\top} + \ub(\bm{u})^{-}{\ub(\bm{v})^{-}}^{\top}\\
        \bm{M} & \leq \ub(\bm{M}) = \ub(\bm{u})^{+}{\ub(\bm{v})^{+}}^{\top} + \lb(\bm{u})^{+}{\ub(\bm{v})^{-}}^{\top} + \ub(\bm{u})^{-}{\lb(\bm{v})^{+}}^{\top} + \lb(\bm{u})^{-}{\lb(\bm{v})^{-}}^{\top}
    \end{aligned}
\end{equation}}
\label{lem:bounds_rank1}
\end{lemma}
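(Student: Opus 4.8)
The plan is to reduce the two matrix inequalities to scalar statements about individual entries, since the operators $\lb(\cdot)$ and $\ub(\cdot)$ act entrywise on matrices (see \cref{subsec:IBP_PN}) and the claimed relations $\bm{M} \geq \lb(\bm{M})$ and $\bm{M} \leq \ub(\bm{M})$ are thus to be read coordinate-by-coordinate. Fixing indices $i \in [d_1]$ and $j \in [d_2]$, the entry of interest is $m_{ij} = u_i v_j$, where $u_i$ ranges over $[\lb(u_i), \ub(u_i)]$ and $v_j$ over $[\lb(v_j), \ub(v_j)]$ independently. The tightest bounds on such a product of two intervals are exactly those given by the multiplication rule of \cref{eq:IBP_basic_rules}, namely $\min S$ and $\max S$ where
\[ S = \{\lb(u_i)\lb(v_j),\ \lb(u_i)\ub(v_j),\ \ub(u_i)\lb(v_j),\ \ub(u_i)\ub(v_j)\}. \]
Hence it suffices to show that the $(i,j)$ entry of the right-hand side matrix expression for $\lb(\bm{M})$ (respectively $\ub(\bm{M})$) coincides with $\min S$ (respectively $\max S$).

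\textbf{Selecting endpoints via the sign parts.} The decomposition in \cref{lem:pos_neg_decom} writes a rank-1 product as a sum over the four sign combinations of its factors; the role of the positive and negative parts $\lb(u_i)^{+}, \lb(u_i)^{-}$, and likewise for the $v$-factors, is precisely to switch on the single endpoint product that realizes the extremum. Concretely, I would run a case analysis on the signs of $\lb(u_i) \le \ub(u_i)$ and $\lb(v_j) \le \ub(v_j)$, which gives three possibilities for the $u$-interval (entirely nonnegative, straddling $0$, entirely nonpositive) and three for the $v$-interval. In each of the nine resulting configurations, a direct check shows that exactly one of the four summands defining $\lb(\bm{M})_{ij}$ is nonzero and equals the correct element of $S$; for instance, when $\lb(u_i) > 0$ and $\lb(v_j) < 0 < \ub(v_j)$, only the term $\ub(u_i)^{+}\lb(v_j)^{-}$ survives and equals $\ub(u_i)\lb(v_j) = \min S$.

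\textbf{Main obstacle and wrap-up.} The only genuine work is the bookkeeping of these sign cases; it is routine but must be done carefully so that the four summands never overlap, which holds because $\lb(u_i)^{+}$ and $\lb(u_i)^{-}$ cannot both be nonzero (and similarly for the $v$-factors). The count can be trimmed by exploiting the symmetry under exchanging the roles of $\bm{u}$ and $\bm{v}$ together with transposition, collapsing the nine cases to six. Finally, the upper-bound identity need not be verified independently: applying the established lower bound to $-\bm{v}$ (whose interval endpoints are $\lb(-\bm{v}) = -\ub(\bm{v})$ and $\ub(-\bm{v}) = -\lb(\bm{v})$) and using $\max_{u,v} u_i v_j = -\min_{u,v} u_i(-v_j)$ transforms the lower-bound formula into the upper-bound one, so a single entrywise computation suffices.
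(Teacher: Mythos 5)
There is a genuine gap in your argument: the central claim --- that in every sign configuration exactly one of the four summands of $\lb(\bm{M})_{ij}$ is nonzero and that it equals $\min S$ --- is false, and so is the mutual-exclusivity reasoning offered for it. The four summands do not pair the positive and negative parts of the \emph{same} quantity: the second summand uses $\ub(u_i)^{+}\lb(v_j)^{-}$ while the third uses $\lb(u_i)^{-}\ub(v_j)^{+}$, and these two can be nonzero simultaneously. Concretely, when both intervals straddle zero, i.e.\ $\lb(u_i) < 0 < \ub(u_i)$ and $\lb(v_j) < 0 < \ub(v_j)$, the entry of the claimed lower bound is
\begin{equation*}
    \lb(\bm{M})_{ij} = \ub(u_i)\lb(v_j) + \lb(u_i)\ub(v_j)\,,
\end{equation*}
a sum of two strictly negative numbers, whereas $\min S = \min\{\ub(u_i)\lb(v_j),\, \lb(u_i)\ub(v_j)\}$. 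Hence $\lb(\bm{M})_{ij} < \min S$: the lemma's formula is \emph{not} the tight interval-product bound, and any proof organized around establishing entrywise equality with $\min S$ must break down in exactly this case. The same phenomenon occurs for the upper bound, where in the straddling case $\ub(\bm{M})_{ij} = \ub(u_i)\ub(v_j) + \lb(u_i)\lb(v_j) > \max S$.

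The repair is small but changes what you prove: establish the inequality $\lb(\bm{M})_{ij} \leq \min S$ rather than equality. In the eight configurations where at most one summand survives, your computation goes through (with equality); in the both-straddle case, the two surviving terms $\ub(u_i)\lb(v_j)$ and $\lb(u_i)\ub(v_j)$ are each elements of $S$ and both are non-positive, so their sum is at most their minimum and hence at most $\min S$. Combined with $m_{ij} \geq \min S$, this yields the lemma. Note that the paper's own proof sidesteps this issue entirely: it never reduces to the tight scalar bound, but instead expands the four endpoint products via \cref{lem:pos_neg_decom} into sixteen sign-separated terms, groups them into four families, and replaces each family by its minimum (using that a minimum of sums dominates the sum of minima), which produces the possibly-loose formula directly as a valid bound. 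Finally, your symmetry trick --- deducing the upper bound by applying the lower bound to $-\bm{v}$, using $\lb(-\bm{v}) = -\ub(\bm{v})$ and $\ub(-\bm{v}) = -\lb(\bm{v})$ --- is correct and genuinely economical: it halves the work relative to the paper's separate, mirrored derivation of the upper bound, and is worth retaining in the corrected proof.
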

\begin{proof}[Proof of \cref{lem:bounds_rank1}]
\rebuttal{By applying the multiplication rule of IBP, see \cref{eq:IBP_basic_rules}, we can express the lower bound as:
\begin{equation}
    \begin{aligned}
        \bm{M} \geq \min\{&\lb(\bm{u})\lb(\bm{v})^{\top},\\
        &\lb(\bm{u})\ub(\bm{v})^{\top},\\
        &\ub(\bm{u})\lb(\bm{v})^{\top},\\
        &\ub(\bm{u})\ub(\bm{v})^{\top}\}\,.\\
    \end{aligned}
\end{equation}
Then, applying \cref{lem:pos_neg_decom} on each term we obtain:
\begin{equation}
    \begin{aligned}
        \bm{M} \geq \min\{&\textcolor{blue}{\lb(\bm{u})^{+}{\lb(\bm{v})^{+}}^{\top}} + \textcolor{orange}{\lb(\bm{u})^{+}{\lb(\bm{v})^{-}}^{\top}} + \textcolor{green}{\lb(\bm{u})^{-}{\lb(\bm{v})^{+}}^{\top}} + \textcolor{purple}{\lb(\bm{u})^{-}{\lb(\bm{v})^{-}}^{\top}},\\
        &\textcolor{blue}{\lb(\bm{u})^{+}{\ub(\bm{v})^{+}}^{\top}} + \textcolor{orange}{\lb(\bm{u})^{+}{\ub(\bm{v})^{-}}^{\top}} + \textcolor{green}{\lb(\bm{u})^{-}{\ub(\bm{v})^{+}}^{\top}} + \textcolor{purple}{\lb(\bm{u})^{-}{\ub(\bm{v})^{-}}^{\top}},\\
        &\textcolor{blue}{\ub(\bm{u})^{+}{\lb(\bm{v})^{+}}^{\top}} + \textcolor{orange}{\ub(\bm{u})^{+}{\lb(\bm{v})^{-}}^{\top}} + \textcolor{green}{\ub(\bm{u})^{-}{\lb(\bm{v})^{+}}^{\top}} + \textcolor{purple}{\ub(\bm{u})^{-}{\lb(\bm{v})^{-}}^{\top}},\\
        &\textcolor{blue}{\ub(\bm{u})^{+}{\ub(\bm{v})^{+}}^{\top}} + \textcolor{orange}{\ub(\bm{u})^{+}{\ub(\bm{v})^{-}}^{\top}} + \textcolor{green}{\ub(\bm{u})^{-}{\ub(\bm{v})^{+}}^{\top}} + \textcolor{purple}{\ub(\bm{u})^{-}{\ub(\bm{v})^{-}}^{\top}}\}\,.\\
    \end{aligned}
    \label{eq:lb_rank1_expanded}
\end{equation}
where color is used to group related terms and ease the reading. Grouping by color, it is easily found that:
\begin{equation}
    \begin{aligned}
        \textcolor{blue}{\lb(\bm{u})^{+}{\lb(\bm{v})^{+}}^{\top}} & = \min\{\textcolor{blue}{\lb(\bm{u})^{+}{\lb(\bm{v})^{+}}^{\top}}, \textcolor{blue}{\lb(\bm{u})^{+}{\ub(\bm{v})^{+}}^{\top}}, \textcolor{blue}{\ub(\bm{u})^{+}{\lb(\bm{v})^{+}}^{\top}}, \textcolor{blue}{\ub(\bm{u})^{+}{\ub(\bm{v})^{+}}^{\top}}\}\\
        \textcolor{orange}{\ub(\bm{u})^{+}{\lb(\bm{v})^{-}}^{\top}} & = \min\{\textcolor{orange}{\lb(\bm{u})^{+}{\lb(\bm{v})^{-}}^{\top}}, \textcolor{orange}{\lb(\bm{u})^{+}{\ub(\bm{v})^{-}}^{\top}}, \textcolor{orange}{\ub(\bm{u})^{+}{\lb(\bm{v})^{-}}^{\top}}, \textcolor{orange}{\ub(\bm{u})^{+}{\ub(\bm{v})^{-}}^{\top}}\}\\
        \textcolor{green}{\lb(\bm{u})^{-}{\ub(\bm{v})^{+}}^{\top}} & = \min\{\textcolor{green}{\lb(\bm{u})^{-}{\lb(\bm{v})^{+}}^{\top}}, \textcolor{green}{\lb(\bm{u})^{-}{\ub(\bm{v})^{+}}^{\top}}, \textcolor{green}{\ub(\bm{u})^{-}{\lb(\bm{v})^{+}}^{\top}}, \textcolor{green}{\ub(\bm{u})^{-}{\ub(\bm{v})^{+}}^{\top}}\}\\
        \textcolor{purple}{\ub(\bm{u})^{-}{\ub(\bm{v})^{-}}^{\top}} & = \min\{\textcolor{purple}{\lb(\bm{u})^{-}{\lb(\bm{v})^{-}}^{\top}}, \textcolor{purple}{\lb(\bm{u})^{-}{\ub(\bm{v})^{-}}^{\top}}, \textcolor{purple}{\ub(\bm{u})^{-}{\lb(\bm{v})^{-}}^{\top}}, \textcolor{purple}{\ub(\bm{u})^{-}{\ub(\bm{v})^{-}}^{\top}}\}\,.\\
    \end{aligned}
\end{equation}
Lastly, by substituting each term by the minimum of the terms with the same color on \cref{eq:lb_rank1_expanded}:
\begin{equation}
    \bm{M} \geq \textcolor{blue}{\lb(\bm{u})^{+}{\lb(\bm{v})^{+}}^{\top}} + \textcolor{orange}{\ub(\bm{u})^{+}{\lb(\bm{v})^{-}}^{\top}} + \textcolor{green}{\lb(\bm{u})^{-}{\ub(\bm{v})^{+}}^{\top}} + \textcolor{purple}{\ub(\bm{u})^{-}{\ub(\bm{v})^{-}}^{\top}} = \lb(\bm{M})\,.
\end{equation}
Analogously for the upper bound:
\begin{equation}
    \begin{aligned}
        \bm{M} \leq \max\{&\textcolor{blue}{\lb(\bm{u})^{+}{\lb(\bm{v})^{+}}^{\top}} + \textcolor{orange}{\lb(\bm{u})^{+}{\lb(\bm{v})^{-}}^{\top}} + \textcolor{green}{\lb(\bm{u})^{-}{\lb(\bm{v})^{+}}^{\top}} + \textcolor{purple}{\lb(\bm{u})^{-}{\lb(\bm{v})^{-}}^{\top}},\\
        &\textcolor{blue}{\lb(\bm{u})^{+}{\ub(\bm{v})^{+}}^{\top}} + \textcolor{orange}{\lb(\bm{u})^{+}{\ub(\bm{v})^{-}}^{\top}} + \textcolor{green}{\lb(\bm{u})^{-}{\ub(\bm{v})^{+}}^{\top}} + \textcolor{purple}{\lb(\bm{u})^{-}{\ub(\bm{v})^{-}}^{\top}},\\
        &\textcolor{blue}{\ub(\bm{u})^{+}{\lb(\bm{v})^{+}}^{\top}} + \textcolor{orange}{\ub(\bm{u})^{+}{\lb(\bm{v})^{-}}^{\top}} + \textcolor{green}{\ub(\bm{u})^{-}{\lb(\bm{v})^{+}}^{\top}} + \textcolor{purple}{\ub(\bm{u})^{-}{\lb(\bm{v})^{-}}^{\top}},\\
        &\textcolor{blue}{\ub(\bm{u})^{+}{\ub(\bm{v})^{+}}^{\top}} + \textcolor{orange}{\ub(\bm{u})^{+}{\ub(\bm{v})^{-}}^{\top}} + \textcolor{green}{\ub(\bm{u})^{-}{\ub(\bm{v})^{+}}^{\top}} + \textcolor{purple}{\ub(\bm{u})^{-}{\ub(\bm{v})^{-}}^{\top}}\}\,,\\
    \end{aligned}
    \label{eq:ub_rank1_expanded}
\end{equation}
where color is used to group related terms and ease the reading. Grouping by color, it is easily found that:
\begin{equation}
    \begin{aligned}
        \textcolor{blue}{\ub(\bm{u})^{+}{\ub(\bm{v})^{+}}^{\top}} & = \max\{\textcolor{blue}{\lb(\bm{u})^{+}{\lb(\bm{v})^{+}}^{\top}}, \textcolor{blue}{\lb(\bm{u})^{+}{\ub(\bm{v})^{+}}^{\top}}, \textcolor{blue}{\ub(\bm{u})^{+}{\lb(\bm{v})^{+}}^{\top}}, \textcolor{blue}{\ub(\bm{u})^{+}{\ub(\bm{v})^{+}}^{\top}}\}\\
        \textcolor{orange}{\lb(\bm{u})^{+}{\ub(\bm{v})^{-}}^{\top}} & = \max\{\textcolor{orange}{\lb(\bm{u})^{+}{\lb(\bm{v})^{-}}^{\top}}, \textcolor{orange}{\lb(\bm{u})^{+}{\ub(\bm{v})^{-}}^{\top}}, \textcolor{orange}{\ub(\bm{u})^{+}{\lb(\bm{v})^{-}}^{\top}}, \textcolor{orange}{\ub(\bm{u})^{+}{\ub(\bm{v})^{-}}^{\top}}\}\\
        \textcolor{green}{\ub(\bm{u})^{-}{\lb(\bm{v})^{+}}^{\top}} & = \max\{\textcolor{green}{\lb(\bm{u})^{-}{\lb(\bm{v})^{+}}^{\top}}, \textcolor{green}{\lb(\bm{u})^{-}{\ub(\bm{v})^{+}}^{\top}}, \textcolor{green}{\ub(\bm{u})^{-}{\lb(\bm{v})^{+}}^{\top}}, \textcolor{green}{\ub(\bm{u})^{-}{\ub(\bm{v})^{+}}^{\top}}\}\\
        \textcolor{purple}{\lb(\bm{u})^{-}{\lb(\bm{v})^{-}}^{\top}} & = \max\{\textcolor{purple}{\lb(\bm{u})^{-}{\lb(\bm{v})^{-}}^{\top}}, \textcolor{purple}{\lb(\bm{u})^{-}{\ub(\bm{v})^{-}}^{\top}}, \textcolor{purple}{\ub(\bm{u})^{-}{\lb(\bm{v})^{-}}^{\top}}, \textcolor{purple}{\ub(\bm{u})^{-}{\ub(\bm{v})^{-}}^{\top}}\}\,.\\
    \end{aligned}
\end{equation}
Lastly, by substituting each term by the minimum of the terms with the same color on \cref{eq:ub_rank1_expanded}:
\begin{equation}
    \bm{M} \leq \textcolor{blue}{\ub(\bm{u})^{+}{\ub(\bm{v})^{+}}^{\top}} + \textcolor{orange}{\lb(\bm{u})^{+}{\ub(\bm{v})^{-}}^{\top}} + \textcolor{green}{\ub(\bm{u})^{-}{\lb(\bm{v})^{+}}^{\top}} + \textcolor{purple}{\lb(\bm{u})^{-}{\lb(\bm{v})^{-}}^{\top}} = \ub(\bm{M})\,.
\end{equation}}
\end{proof}
\begin{lemma}
\rebuttal{Let $\bm{M} = \delta \cdot \bm{u}\bm{v}^{\top}$ be a rank-1 matrix defined by vectors $\bm{u} \in \realnum^{d_{1}}$ and $\bm{v} \in \realnum^{d_{2}}$ and a scalar $\delta \in \realnum$. Let $\bm{u}$ ($\bm{v}$) be lower and upper bounded by $\lb(\bm{u})$ and $\ub(\bm{u})$ ($\lb(\bm{v})$ and $\ub(\bm{v})$). Let $\delta$ be lower and upper bounded by $\lb(\delta)$ and $\ub(\delta)$. The matrix $\bm{M}$ is lower and upper bounded by:
\begin{equation}
    \begin{aligned}
        \bm{M} \geq \lb(\bm{M}) & = \lb(\delta)^{+}\left[\lb(\bm{u})^{+}{\lb(\bm{v})^{+}}^{\top} + \ub(\bm{u})^{-}{\ub(\bm{v})^{-}}^{\top}\right]\\
        & \quad + \ub(\delta)^{+}\left[\ub(\bm{u})^{+}{\lb(\bm{v})^{-}}^{\top} + \lb(\bm{u})^{-}{\ub(\bm{v})^{+}}^{\top}\right] \\
        & \quad + \lb(\delta)^{-}\left[\ub(\bm{u})^{+}{\ub(\bm{v})^{+}}^{\top} + \lb(\bm{u})^{-}{\lb(\bm{v})^{-}}^{\top}\right]\\
        & \quad + \ub(\delta)^{-}\left[\ub(\bm{u})^{-}{\lb(\bm{v})^{+}}^{\top} + \lb(\bm{u})^{+}{\ub(\bm{v})^{-}}^{\top}\right] \\
        & = \lb(\delta)^{+}\lb(\bm{u}\bm{v}^{\top})^{+} +  \ub(\delta)^{+}\lb(\bm{u}\bm{v}^{\top})^{-}\\
        & \quad + \lb(\delta)^{-}\ub(\bm{u}\bm{v}^{\top})^{+} + \ub(\delta)^{-}\ub(\bm{u}\bm{v}^{\top})^{-}\\
        \bm{M} \leq \ub(\bm{M}) & = \lb(\delta)^{+}\left[\ub(\bm{u})^{-}{\lb(\bm{v})^{+}}^{\top} + \lb(\bm{u})^{+}{\ub(\bm{v})^{-}}^{\top}\right]\\
        & \quad + \ub(\delta)^{+}\left[\ub(\bm{u})^{+}{\ub(\bm{v})^{+}}^{\top} + \lb(\bm{u})^{-}{\lb(\bm{v})^{-}}^{\top}\right] \\
        & \quad + \lb(\delta)^{-}\left[\ub(\bm{u})^{+}{\lb(\bm{v})^{-}}^{\top} + \lb(\bm{u})^{-}{\ub(\bm{v})^{+}}^{\top}\right]\\
        & \quad + \ub(\delta)^{-}\left[\lb(\bm{u})^{+}{\lb(\bm{v})^{+}}^{\top} + \ub(\bm{u})^{-}{\ub(\bm{v})^{-}}^{\top}\right] \\
        & = \lb(\delta)^{+}\ub(\bm{u}\bm{v}^{\top})^{-} +  \ub(\delta)^{+}\ub(\bm{u}\bm{v}^{\top})^{+}\\
        & \quad + \lb(\delta)^{-}\lb(\bm{u}\bm{v}^{\top})^{-} + \ub(\delta)^{-}\lb(\bm{u}\bm{v}^{\top})^{+}\\
    \end{aligned}
\end{equation}}
\label{lem:bounds_delta_rank1}
\end{lemma}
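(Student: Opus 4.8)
The plan is to treat $\bm{M} = \delta\,(\bm{u}\bm{v}^{\top})$ as a two-stage interval product and to reuse \cref{lem:bounds_rank1} twice. First I would set $\bm{P} = \bm{u}\bm{v}^{\top}$ and invoke \cref{lem:bounds_rank1} to obtain the valid entrywise bounds $\lb(\bm{P}) \le \bm{P} \le \ub(\bm{P})$ with the explicit expressions stated there. Then, since $M_{ij} = \delta\,P_{ij}$ is a product of the two scalar intervals $\delta \in [\lb(\delta),\ub(\delta)]$ and $P_{ij} \in [\lb(P_{ij}),\ub(P_{ij})]$, I would apply \cref{lem:bounds_rank1} again in the degenerate $1\times 1$ (scalar) case, with $\delta$ and $P_{ij}$ playing the roles of the two vectors. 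This immediately yields the compact (second) form $\lb(\bm{M}) = \lb(\delta)^{+}\lb(\bm{u}\bm{v}^{\top})^{+} + \ub(\delta)^{+}\lb(\bm{u}\bm{v}^{\top})^{-} + \lb(\delta)^{-}\ub(\bm{u}\bm{v}^{\top})^{+} + \ub(\delta)^{-}\ub(\bm{u}\bm{v}^{\top})^{-}$, together with its validity as a lower bound, and symmetrically for $\ub(\bm{M})$.

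The remaining work is to show the compact form equals the expanded (first) form, which amounts to computing the positive and negative parts of $\lb(\bm{u}\bm{v}^{\top})$ and $\ub(\bm{u}\bm{v}^{\top})$ entrywise. Substituting the expression from \cref{lem:bounds_rank1}, each entry of $\lb(\bm{u}\bm{v}^{\top})$ is a sum of four terms, of which $\lb(\bm{u})^{+}\lb(\bm{v})^{+\top}$ and $\ub(\bm{u})^{-}\ub(\bm{v})^{-\top}$ are sign-nonnegative while $\ub(\bm{u})^{+}\lb(\bm{v})^{-\top}$ and $\lb(\bm{u})^{-}\ub(\bm{v})^{+\top}$ are sign-nonpositive. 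I would then claim that $\lb(\bm{u}\bm{v}^{\top})^{+}$ is exactly the sum of the two nonnegative terms and $\lb(\bm{u}\bm{v}^{\top})^{-}$ the sum of the two nonpositive terms (and analogously for $\ub$). Feeding these into the compact form and regrouping by $\lb(\delta)^{+},\ub(\delta)^{+},\lb(\delta)^{-},\ub(\delta)^{-}$ reproduces the four bracketed expressions of the first form line by line.

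The key step, and the main obstacle, is justifying that split of positive and negative parts. I would prove entrywise that a sign-nonnegative term and a sign-nonpositive term can never be simultaneously nonzero. This follows from a short case analysis on the sign pattern of each scalar interval: $\lb(u_i)^{+}\lb(v_j)^{+}\neq 0$ forces both $u_i$ and $v_j$ to be strictly positive intervals, $\ub(u_i)^{-}\ub(v_j)^{-}\neq 0$ forces both strictly negative, whereas a nonpositive term being nonzero requires $u_i$ and $v_j$ to have opposite strict signs or to straddle zero; these activation patterns are pairwise incompatible for a fixed $(i,j)$. Consequently each entry of $\lb(\bm{u}\bm{v}^{\top})$ is contributed either wholly by the nonnegative group or wholly by the nonpositive group, so its positive part picks out the former and its negative part the latter, which is precisely the identity needed above.

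Finally, the upper bound is entirely symmetric, obtained by replacing $\min$ with $\max$ throughout and using the $\ub(\bm{u}\bm{v}^{\top})$ expression, so no new ideas are required. I expect the bookkeeping of regrouping the eight resulting terms into the four brackets to be routine once the non-coexistence property is established; the sign-pattern argument is where the real content lies, with everything else being an expansion-and-grouping computation mirroring the proof of \cref{lem:bounds_rank1}.
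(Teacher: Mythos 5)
Your proof is correct, and it is organized differently from the paper's own proof — in a way that actually establishes more. The paper's proof never touches the first (expanded) expression in the lemma: it applies the IBP multiplication rule directly to the pair $(\delta,\,\bm{u}\bm{v}^{\top})$, performs the positive/negative expansion into sixteen terms, regroups, and takes minima (resp.\ maxima) per group to obtain only the compact form $\lb(\bm{M}) = \lb(\delta)^{+}\lb(\bm{u}\bm{v}^{\top})^{+} + \ub(\delta)^{+}\lb(\bm{u}\bm{v}^{\top})^{-} + \lb(\delta)^{-}\ub(\bm{u}\bm{v}^{\top})^{+} + \ub(\delta)^{-}\ub(\bm{u}\bm{v}^{\top})^{-}$ and its $\ub$ analogue; the equality with the bracketed expansion in terms of $\lb(\bm{u})^{\pm}$, $\ub(\bm{u})^{\pm}$, $\lb(\bm{v})^{\pm}$, $\ub(\bm{v})^{\pm}$ is asserted in the statement but never verified. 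You instead obtain the compact form by invoking \cref{lem:bounds_rank1} twice — once for $\bm{u}\bm{v}^{\top}$ and once in its degenerate $1\times 1$ instance for the product of $\delta$ with each entry of $\bm{u}\bm{v}^{\top}$ — which is legitimate (the statement and proof of \cref{lem:bounds_rank1} hold verbatim for $d_{1}=d_{2}=1$) and avoids redoing the sixteen-term expansion. Your genuinely new content is the identification $\lb(\bm{u}\bm{v}^{\top})^{+} = \lb(\bm{u})^{+}{\lb(\bm{v})^{+}}^{\top} + \ub(\bm{u})^{-}{\ub(\bm{v})^{-}}^{\top}$ and $\lb(\bm{u}\bm{v}^{\top})^{-} = \ub(\bm{u})^{+}{\lb(\bm{v})^{-}}^{\top} + \lb(\bm{u})^{-}{\ub(\bm{v})^{+}}^{\top}$ (with the analogous split for $\ub(\bm{u}\bm{v}^{\top})$), proved via the non-coexistence of a sign-nonnegative and a sign-nonpositive term at any fixed entry; your case analysis for this is sound. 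One caveat on wording: the four terms are not literally \emph{pairwise} incompatible — the two nonpositive terms can be simultaneously nonzero when both intervals straddle zero — but your argument only needs incompatibility \emph{across} the nonnegative/nonpositive groups, which does hold, and in that straddling case the positive part is zero on both sides of the claimed identity, so nothing breaks. Net effect: your proof is compositional where the paper's is a direct expansion, and it closes a step (the equivalence of the two displayed forms) that the paper leaves implicit.
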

\begin{proof}[Proof of \cref{lem:bounds_delta_rank1}]
\rebuttal{By applying the multiplication rule of IBP, see \cref{eq:IBP_basic_rules}, we can express the lower bound as:
\begin{equation}
    \begin{aligned}
        \bm{M} \geq \min\{&\lb(\delta)\lb(\bm{u}\bm{v}^{\top}),\\
        &\lb(\delta)\ub(\bm{u}\bm{v}^{\top}),\\
        &\ub(\delta)\lb(\bm{u}\bm{v}^{\top}),\\
        &\ub(\delta)\ub(\bm{u}\bm{v}^{\top})\}\,.\\
    \end{aligned}
\end{equation}
Then, by decomposing each term via the possitive-negative decomposition of each operand, we arrive to:
\begin{equation}
    \begin{aligned}
        \bm{M} \geq \min\{&\textcolor{blue}{\lb(\delta)^{+}\lb(\bm{u}\bm{v}^{\top})^{+}} + \textcolor{orange}{\lb(\delta)^{+}\lb(\bm{u}\bm{v}^{\top})^{-}} + \textcolor{green}{\lb(\delta)^{-}\lb(\bm{u}\bm{v}^{\top})^{+}} + \textcolor{purple}{\lb(\delta)^{-}\lb(\bm{u}\bm{v}^{\top})^{-}},\\
        &\textcolor{blue}{\lb(\delta)^{+}\ub(\bm{u}\bm{v}^{\top})^{+}} + \textcolor{orange}{\lb(\delta)^{+}\ub(\bm{u}\bm{v}^{\top})^{-}} + \textcolor{green}{\lb(\delta)^{-}\ub(\bm{u}\bm{v}^{\top})^{+}} + \textcolor{purple}{\lb(\delta)^{-}\ub(\bm{u}\bm{v}^{\top})^{-}},\\
        &\textcolor{blue}{\ub(\delta)^{+}\lb(\bm{u}\bm{v}^{\top})^{+}} + \textcolor{orange}{\ub(\delta)^{+}\lb(\bm{u}\bm{v}^{\top})^{-}} + \textcolor{green}{\ub(\delta)^{-}\lb(\bm{u}\bm{v}^{\top})^{+}} + \textcolor{purple}{\ub(\delta)^{-}\lb(\bm{u}\bm{v}^{\top})^{-}},\\
        &\textcolor{blue}{\ub(\delta)^{+}\ub(\bm{u}\bm{v}^{\top})^{+}} + \textcolor{orange}{\ub(\delta)^{+}\ub(\bm{u}\bm{v}^{\top})^{-}} + \textcolor{green}{\ub(\delta)^{-}\ub(\bm{u}\bm{v}^{\top})^{+}} + \textcolor{purple}{\ub(\delta)^{-}\ub(\bm{u}\bm{v}^{\top})^{-}}\}\,.\\
    \end{aligned}
    \label{eq:lb_delta_rank1_expanded}
\end{equation}
where color is used to group related terms and ease the reading. Grouping by color, it is easily found that:
\begin{equation}
    \begin{aligned}
        \textcolor{blue}{\lb(\delta)^{+}\lb(\bm{u}\bm{v}^{\top})^{+}} & = \min\{\textcolor{blue}{\lb(\delta)^{+}\lb(\bm{u}\bm{v}^{\top})^{+}}, \textcolor{blue}{\lb(\delta)^{+}\ub(\bm{u}\bm{v}^{\top})^{+}}, \textcolor{blue}{\ub(\delta)^{+}\lb(\bm{u}\bm{v}^{\top})^{+}}, \textcolor{blue}{\ub(\delta)^{+}\ub(\bm{u}\bm{v}^{\top})^{+}}\}\\
        \textcolor{orange}{\ub(\delta)^{+}\lb(\bm{u}\bm{v}^{\top})^{-}} & = \min\{\textcolor{orange}{\lb(\delta)^{+}\lb(\bm{u}\bm{v}^{\top})^{-}}, \textcolor{orange}{\lb(\delta)^{+}\ub(\bm{u}\bm{v}^{\top})^{-}}, \textcolor{orange}{\ub(\delta)^{+}\lb(\bm{u}\bm{v}^{\top})^{-}}, \textcolor{orange}{\ub(\delta)^{+}\ub(\bm{u}\bm{v}^{\top})^{-}}\}\\
        \textcolor{green}{\lb(\delta)^{-}\ub(\bm{u}\bm{v}^{\top})^{+}} & = \min\{\textcolor{green}{\lb(\delta)^{-}\lb(\bm{u}\bm{v}^{\top})^{+}}, \textcolor{green}{\lb(\delta)^{-}\ub(\bm{u}\bm{v}^{\top})^{+}}, \textcolor{green}{\ub(\delta)^{-}\lb(\bm{u}\bm{v}^{\top})^{+}}, \textcolor{green}{\ub(\delta)^{-}\ub(\bm{u}\bm{v}^{\top})^{+}}\}\\
        \textcolor{purple}{\ub(\delta)^{-}\ub(\bm{u}\bm{v}^{\top})^{-}} & = \min\{\textcolor{purple}{\lb(\delta)^{-}\lb(\bm{u}\bm{v}^{\top})^{-}}, \textcolor{purple}{\lb(\delta)^{-}\ub(\bm{u}\bm{v}^{\top})^{-}}, \textcolor{purple}{\ub(\delta)^{-}\lb(\bm{u}\bm{v}^{\top})^{-}}, \textcolor{purple}{\ub(\delta)^{-}\ub(\bm{u}\bm{v}^{\top})^{-}}\}\,.\\
    \end{aligned}
\end{equation}
Lastly, by substituting each term by the minimum of the terms with the same color on \cref{eq:lb_rank1_expanded}:
\begin{equation}
    \begin{aligned}
        \bm{M} \geq \textcolor{blue}{\lb(\delta)^{+}\lb(\bm{u}\bm{v}^{\top})^{+}} + \textcolor{orange}{\ub(\delta)^{+}\lb(\bm{u}\bm{v}^{\top})^{-}} + \textcolor{green}{\lb(\delta)^{-}\ub(\bm{u}\bm{v}^{\top})^{+}} + \textcolor{purple}{\ub(\delta)^{-}\ub(\bm{u}\bm{v}^{\top})^{-}} = \lb(\bm{M})
    \end{aligned}
\end{equation}
Analogously for the upper bound:
\begin{equation}
    \begin{aligned}
        \bm{M} \leq \max\{&\textcolor{blue}{\lb(\delta)^{+}\lb(\bm{u}\bm{v}^{\top})^{+}} + \textcolor{orange}{\lb(\delta)^{+}\lb(\bm{u}\bm{v}^{\top})^{-}} + \textcolor{green}{\lb(\delta)^{-}\lb(\bm{u}\bm{v}^{\top})^{+}} + \textcolor{purple}{\lb(\delta)^{-}\lb(\bm{u}\bm{v}^{\top})^{-}},\\
        &\textcolor{blue}{\lb(\delta)^{+}\ub(\bm{u}\bm{v}^{\top})^{+}} + \textcolor{orange}{\lb(\delta)^{+}\ub(\bm{u}\bm{v}^{\top})^{-}} + \textcolor{green}{\lb(\delta)^{-}\ub(\bm{u}\bm{v}^{\top})^{+}} + \textcolor{purple}{\lb(\delta)^{-}\ub(\bm{u}\bm{v}^{\top})^{-}},\\
        &\textcolor{blue}{\ub(\delta)^{+}\lb(\bm{u}\bm{v}^{\top})^{+}} + \textcolor{orange}{\ub(\delta)^{+}\lb(\bm{u}\bm{v}^{\top})^{-}} + \textcolor{green}{\ub(\delta)^{-}\lb(\bm{u}\bm{v}^{\top})^{+}} + \textcolor{purple}{\ub(\delta)^{-}\lb(\bm{u}\bm{v}^{\top})^{-}},\\
        &\textcolor{blue}{\ub(\delta)^{+}\ub(\bm{u}\bm{v}^{\top})^{+}} + \textcolor{orange}{\ub(\delta)^{+}\ub(\bm{u}\bm{v}^{\top})^{-}} + \textcolor{green}{\ub(\delta)^{-}\ub(\bm{u}\bm{v}^{\top})^{+}} + \textcolor{purple}{\ub(\delta)^{-}\ub(\bm{u}\bm{v}^{\top})^{-}}\}\,.\\
    \end{aligned}
    \label{eq:ub_delta_rank1_expanded}
\end{equation}
where color is used to group related terms and ease the reading. Grouping by color, it is easily found that:
\begin{equation}
    \begin{aligned}
        \textcolor{blue}{\ub(\delta)^{+}\ub(\bm{u}\bm{v}^{\top})^{+}} & = \max\{\textcolor{blue}{\lb(\delta)^{+}\lb(\bm{u}\bm{v}^{\top})^{+}}, \textcolor{blue}{\lb(\delta)^{+}\ub(\bm{u}\bm{v}^{\top})^{+}}, \textcolor{blue}{\ub(\delta)^{+}\lb(\bm{u}\bm{v}^{\top})^{+}}, \textcolor{blue}{\ub(\delta)^{+}\ub(\bm{u}\bm{v}^{\top})^{+}}\}\\
        \textcolor{orange}{\lb(\delta)^{+}\ub(\bm{u}\bm{v}^{\top})^{-}} & = \max\{\textcolor{orange}{\lb(\delta)^{+}\lb(\bm{u}\bm{v}^{\top})^{-}}, \textcolor{orange}{\lb(\delta)^{+}\ub(\bm{u}\bm{v}^{\top})^{-}}, \textcolor{orange}{\ub(\delta)^{+}\lb(\bm{u}\bm{v}^{\top})^{-}}, \textcolor{orange}{\ub(\delta)^{+}\ub(\bm{u}\bm{v}^{\top})^{-}}\}\\
        \textcolor{green}{\ub(\delta)^{-}\lb(\bm{u}\bm{v}^{\top})^{+}} & = \max\{\textcolor{green}{\lb(\delta)^{-}\lb(\bm{u}\bm{v}^{\top})^{+}}, \textcolor{green}{\lb(\delta)^{-}\ub(\bm{u}\bm{v}^{\top})^{+}}, \textcolor{green}{\ub(\delta)^{-}\lb(\bm{u}\bm{v}^{\top})^{+}}, \textcolor{green}{\ub(\delta)^{-}\ub(\bm{u}\bm{v}^{\top})^{+}}\}\\
        \textcolor{purple}{\lb(\delta)^{-}\lb(\bm{u}\bm{v}^{\top})^{-}} & = \max\{\textcolor{purple}{\lb(\delta)^{-}\lb(\bm{u}\bm{v}^{\top})^{-}}, \textcolor{purple}{\lb(\delta)^{-}\ub(\bm{u}\bm{v}^{\top})^{-}}, \textcolor{purple}{\ub(\delta)^{-}\lb(\bm{u}\bm{v}^{\top})^{-}}, \textcolor{purple}{\ub(\delta)^{-}\ub(\bm{u}\bm{v}^{\top})^{-}}\}\,.\\
    \end{aligned}
\end{equation}
Lastly, by substituting each term by the minimum of the terms with the same color on \cref{eq:lb_rank1_expanded}:
\begin{equation}
    \begin{aligned}
        \bm{M} \leq \textcolor{blue}{\ub(\delta)^{+}\ub(\bm{u}\bm{v}^{\top})^{+}} + \textcolor{orange}{\lb(\delta)^{+}\ub(\bm{u}\bm{v}^{\top})^{-}} + \textcolor{green}{\ub(\delta)^{-}\lb(\bm{u}\bm{v}^{\top})^{+}} + \textcolor{purple}{\lb(\delta)^{-}\lb(\bm{u}\bm{v}^{\top})^{-}} = \ub(\bm{M})
    \end{aligned}
\end{equation}}
\end{proof}
\subsection{Properties of the $\mn$ operator}
\label{subsec:mn_operator}
\rebuttal{We also define the operator $\mn(\cdot)=\max\{|\lb(.)|, |\ub(.)|\}$ and certain useful properties about it.}
\begin{lemma}
\rebuttal{Let $\bm{M} = \mn\left(\bm{u}\bm{v}^{\top}\right)$ be a matrix defined by vectors $\bm{u} \in \realnum^{d}$ and $\bm{v} \in \realnum^{d}$. Let $\bm{u}$ ($\bm{v}$) be lower and upper bounded by $\lb(\bm{u})$ and $\ub(\bm{u})$ ($\lb(\bm{v})$ and $\ub(\bm{v})$). Let $\hat{\bm{u}} = \mn(\bm{u})$ and $\hat{\bm{v}} = \mn(\bm{v})$. The matrix $\bm{M}$ can be expressed as:
\begin{equation}
    \begin{aligned}
        \bm{M} = \hat{\bm{u}}\hat{\bm{v}}^{\top}\,,
    \end{aligned}
\end{equation}
resulting in a rank-1 matrix given by vectors $\hat{\bm{u}}$ and $\hat{\bm{v}}$.}
\label{lem:maxnorm_matrix}
\end{lemma}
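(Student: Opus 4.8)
The plan is to prove the identity entrywise, since both sides are $d \times d$ matrices. Fixing indices $i,j \in [d]$, I would observe that the $(i,j)$ entry of $\hat{\bm{u}}\hat{\bm{v}}^{\top}$ is $\hat{u}_i \hat{v}_j = \mn(u_i)\,\mn(v_j)$, while the $(i,j)$ entry of $\bm{M} = \mn(\bm{u}\bm{v}^{\top})$ is $\mn(u_i v_j) = \max\{|\lb(u_i v_j)|, |\ub(u_i v_j)|\}$. Thus the whole statement reduces to the scalar identity $\mn(u_i v_j) = \mn(u_i)\,\mn(v_j)$, which I would then verify for arbitrary fixed $i,j$.

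First I would invoke the multiplication rule of IBP from \cref{eq:IBP_basic_rules}: writing $S = \{\, ab : a \in \{\lb(u_i), \ub(u_i)\},\ b \in \{\lb(v_j), \ub(v_j)\} \,\}$, one has $\lb(u_i v_j) = \min S$ and $\ub(u_i v_j) = \max S$. The key elementary observation is that for any finite set of reals $S$ we have $\max\{|\min S|, |\max S|\} = \max_{s \in S} |s|$, since the element of largest magnitude is attained either at the most negative or at the largest element of $S$. Applying this gives $\mn(u_i v_j) = \max_{s \in S} |s|$.

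The final step is to factor this maximum. Since $|ab| = |a|\,|b|$ and every factor is nonnegative, the maximum of a product of nonnegative quantities over a product index set separates into the product of the individual maxima, so $\max_{a,b} |a|\,|b| = \bigl(\max_{a \in \{\lb(u_i),\ub(u_i)\}} |a|\bigr)\bigl(\max_{b \in \{\lb(v_j),\ub(v_j)\}} |b|\bigr) = \mn(u_i)\,\mn(v_j)$. This establishes the scalar identity for every pair $i,j$, and hence the claimed rank-1 form $\bm{M} = \hat{\bm{u}}\hat{\bm{v}}^{\top}$ follows by collecting the entries.

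There is no serious obstacle here, as the argument is elementary interval arithmetic; the only point requiring a little care is the separation-of-maxima step, which relies crucially on the nonnegativity of the magnitudes. This is exactly what makes $\mn(\cdot)$ behave multiplicatively on rank-1 products, in contrast to $\lb$ and $\ub$, whose sign bookkeeping (treated in \cref{lem:bounds_rank1}) is considerably more involved. I would therefore state explicitly that $\mn(\cdot) \geq 0$ so that the factorization into the product of coordinatewise maxima is valid.
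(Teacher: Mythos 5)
Your proof is correct and follows essentially the same route as the paper's: both invoke the IBP multiplication rule, collapse $\max\{|\min S|,|\max S|\}$ to $\max_{s\in S}|s|$, use $|ab|=|a||b|$, and then separate the maximum over the product index set into a product of maxima. The only difference is presentational — you argue entrywise on scalars while the paper writes the identical chain of equalities in vectorized form — and your explicit justification of the collapse step and of the nonnegativity needed for separating the maxima is, if anything, slightly more careful than the paper's implicit treatment.
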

\begin{proof}[Proof of \cref{lem:maxnorm_matrix}]
Starting with the definition of $\bm{M}$, we have
\begin{equation}
    \begin{aligned}
        \bm{M} & \coloneqq \max\{|\lb(\bm{u}\bm{v}^{\top})|, |\ub(\bm{u}\bm{v}^{\top})|\}\\
        & = \max\{|\min\{\lb(\bm{u})\lb(\bm{v})^{\top}, \lb(\bm{u})\ub(\bm{v})^{\top}, \ub(\bm{u})\lb(\bm{v})^{\top}, \ub(\bm{u})\ub(\bm{v})^{\top}\}|,\\
        & \quad \quad \quad \quad |\max\{\lb(\bm{u})\lb(\bm{v})^{\top}, \lb(\bm{u})\ub(\bm{v})^{\top}, \ub(\bm{u})\lb(\bm{v})^{\top}, \ub(\bm{u})\ub(\bm{v})^{\top}\}|\} & \text{[Def. \cref{eq:IBP_basic_rules}]}\\
        & = \max\{|\lb(\bm{u})\lb(\bm{v})^{\top}|, |\lb(\bm{u})\ub(\bm{v})^{\top}|, |\ub(\bm{u})\lb(\bm{v})^{\top}|, |\ub(\bm{u})\ub(\bm{v})^{\top}|\}\\
        & = \max\{|\lb(\bm{u})||\lb(\bm{v})^{\top}|, |\lb(\bm{u})||\ub(\bm{v})^{\top}|, |\ub(\bm{u})||\lb(\bm{v})^{\top}|, |\ub(\bm{u})||\ub(\bm{v})^{\top}|\}\\
        & = \max\{\max\{|\lb(\bm{u})||\lb(\bm{v})^{\top}|, |\lb(\bm{u})||\ub(\bm{v})^{\top}|\},\\
        & \quad \quad \quad \quad \max\{|\ub(\bm{u})||\lb(\bm{v})^{\top}|, |\ub(\bm{u})||\ub(\bm{v})^{\top}|\}\}\\
        & = \max\{|\lb(\bm{u})|\max\{|\lb(\bm{v})^{\top}|, |\ub(\bm{v})^{\top}|\}, |\ub(\bm{u})|\max\{|\lb(\bm{v})^{\top}|, |\ub(\bm{v})^{\top}|\}\}\\
        & = \max\{|\lb(\bm{u})|, |\ub(\bm{u})|\}\max\{|\lb(\bm{v})^{\top}|, |\ub(\bm{v})^{\top}|\}\\
        & = \hat{\bm{u}}\hat{\bm{v}}^{\top}\,,
    \end{aligned}
\end{equation}
which concludes the proof.
\end{proof}
\begin{lemma}
\rebuttal{Let $\bm{M} = \mn\left(\bm{A} + \bm{B}\right) \in \realnum^{d_1 \times d_2}$ with $\bm{A} \in \realnum^{d_1 \times d_2}$ and $\bm{B} \in \realnum^{d_1 \times d_2}$ being lower and upper bounded matrices. The matrix $\bm{M}$ satisfies:
\begin{equation}
    \begin{aligned}
        \bm{M} \leq \mn(A) + \mn(B)\,.
    \end{aligned}
\end{equation}}
\label{lem:maxnorm_matrix_sum}
\end{lemma}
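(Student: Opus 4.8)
The plan is to prove the inequality entrywise, since both $\mn(\cdot)$ and the claimed bound act position by position on the matrices. Fix an arbitrary index $(i,j)$ and write $a = a_{ij}$, $b = b_{ij}$ for the corresponding (bounded) scalar entries of $\bm{A}$ and $\bm{B}$. The goal reduces to showing
\begin{equation}
    \mn(a + b) = \max\{|\lb(a+b)|,\,|\ub(a+b)|\} \leq \mn(a) + \mn(b)\,.
\end{equation}

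First I would invoke the linear-mapping rule of \cref{eq:IBP_basic_rules} with both weights set to $w = 1$ (so that $w^{+} = 1$ and $w^{-} = 0$). This yields the additivity of the interval endpoints, namely $\lb(a+b) = \lb(a) + \lb(b)$ and $\ub(a+b) = \ub(a) + \ub(b)$. This is the only structural fact about IBP that the argument needs, and it is precisely the content of the linear-mapping case, so no new machinery is required.

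Next I would apply the triangle inequality to each endpoint separately: $|\lb(a) + \lb(b)| \leq |\lb(a)| + |\lb(b)|$ and $|\ub(a) + \ub(b)| \leq |\ub(a)| + |\ub(b)|$. Since $|\lb(a)| \leq \mn(a)$ and $|\ub(a)| \leq \mn(a)$ by the very definition of the $\mn$ operator, and likewise for $b$, both right-hand sides are bounded above by $\mn(a) + \mn(b)$. Taking the maximum over the two endpoints then gives $\mn(a+b) \leq \mn(a) + \mn(b)$, and because $(i,j)$ was arbitrary the matrix inequality $\bm{M} \leq \mn(\bm{A}) + \mn(\bm{B})$ follows entrywise.

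I do not anticipate a genuine obstacle here: the result is essentially a triangle-inequality statement dressed in the IBP notation. The only point demanding a little care is confirming that $\lb$ and $\ub$ distribute additively over the sum (rather than, say, producing a wider interval), but this is immediate from the linear-mapping rule and does not incur any over-approximation, which is why the bound is clean. The inequality, rather than equality, in the statement arises solely from the triangle inequality, and is tight exactly when $\lb(a),\lb(b)$ share a sign (and similarly for the upper endpoints).
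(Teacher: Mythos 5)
Your proof is correct and takes essentially the same route as the paper's: additivity of the IBP endpoints over a sum, the triangle inequality on each endpoint, and bounding each absolute value by the corresponding $\mn$, with the only cosmetic difference that you argue on a fixed scalar entry while the paper writes the identical chain of inequalities entrywise at the matrix level. (One small aside: your closing tightness claim is not exact---e.g.\ $a \in [-3,1]$, $b \in [-1,2]$ has matching endpoint signs yet $\mn(a+b)=4 < 5 = \mn(a)+\mn(b)$---but this does not affect the validity of the proof of the stated inequality.)
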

\begin{proof}[Proof of \cref{lem:maxnorm_matrix_sum}]
\rebuttal{Starting with the definition of $\bm{M}$:
\begin{equation}
    \begin{aligned}
        \bm{M} & \coloneqq \max\{|\lb(\bm{A} + \bm{B})|, |\ub(\bm{A} + \bm{B})|\} = \max\{|\lb(\bm{A}) + \lb(\bm{B})|, |\ub(\bm{A}) + \ub(\bm{B})|\}\\
        & \leq \max\{|\lb(\bm{A})| + |\lb(\bm{B})|, |\ub(\bm{A})| + |\ub(\bm{B})|\}\\
        & \leq \max\{\max\{|\lb(\bm{A})|, |\ub(\bm{A})|\} + |\lb(\bm{B})|, \max\{|\lb(\bm{A})|, |\ub(\bm{A})|\} + |\ub(\bm{B})|\}\\
        & \leq \max\{\max\{|\lb(\bm{A})|, |\ub(\bm{A})|\} + \max\{|\lb(\bm{B})|, |\ub(\bm{B})|\},\\
        & \quad \quad \quad \quad \max\{|\lb(\bm{A})|, |\ub(\bm{A})|\} + \max\{|\lb(\bm{B})|, |\ub(\bm{B})|\}\}\\
        & = \max\{|\lb(\bm{A})|, |\ub(\bm{A})|\} + \max\{|\lb(\bm{B})|, |\ub(\bm{B})|\} = \mn(\bm{A}) + \mn(\bm{B})\,,
    \end{aligned}
\end{equation}}
we conclude the proof.
\end{proof}
\begin{lemma}
\rebuttal{Let $\bm{M} = \mn\left(\delta\bm{A}\right) \in \realnum^{d_1 \times d_2}$ with $\bm{A} \in \realnum^{d_1 \times d_2}$ and $\delta \in \realnum$ being lower and upper bounded. The matrix $\bm{M}$ satisfies:
\begin{equation}
    \begin{aligned}
        \bm{M} = \mn(\delta)\mn(A)\,.
    \end{aligned}
\end{equation}}
\label{lem:maxnorm_matrix_mult}
\end{lemma}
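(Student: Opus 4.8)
The plan is to mirror the entrywise argument already used in the proof of \cref{lem:maxnorm_matrix}, since $\mn$ acts componentwise and $\delta\bm{A}$ has entries $\delta\, a_{ij}$, each a product of the bounded scalar $\delta$ with a bounded scalar $a_{ij}$. So it suffices to prove the scalar identity $\mn(\delta\, a_{ij}) = \mn(\delta)\,\mn(a_{ij})$ for a fixed $(i,j)$ and then read it off for every position of the matrix.

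First I would apply the multiplication rule of IBP from \cref{eq:IBP_basic_rules} to the product $\delta\, a_{ij}$, writing $\lb(\delta\, a_{ij})$ and $\ub(\delta\, a_{ij})$ as the $\min$ and $\max$ over the four-element set $\{\lb(\delta)\lb(a_{ij}),\lb(\delta)\ub(a_{ij}),\ub(\delta)\lb(a_{ij}),\ub(\delta)\ub(a_{ij})\}$. The crucial observation, exactly as in \cref{lem:maxnorm_matrix}, is that for any finite set $S\subset\realnum$ one has $\max\{|\min S|,|\max S|\}=\max_{s\in S}|s|$, because the extreme absolute value of a set of reals is attained at one of its two endpoints. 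This lets me collapse $\mn(\delta\, a_{ij})=\max\{|\lb(\delta\, a_{ij})|,|\ub(\delta\, a_{ij})|\}$ into a single maximum over the absolute values of the four products.

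Next I would push the absolute value through each product, using $|xy|=|x|\,|y|$, to obtain
\begin{equation}
\begin{aligned}
\mn(\delta\, a_{ij}) &= \max\{\,|\lb(\delta)||\lb(a_{ij})|,\ |\lb(\delta)||\ub(a_{ij})|,\ |\ub(\delta)||\lb(a_{ij})|,\ |\ub(\delta)||\ub(a_{ij})|\,\}\\
&= \max\{|\lb(\delta)|,|\ub(\delta)|\}\cdot\max\{|\lb(a_{ij})|,|\ub(a_{ij})|\}\\
&= \mn(\delta)\,\mn(a_{ij})\,,
\end{aligned}
\end{equation}
where the factoring in the middle line uses that all four quantities are nonnegative, so the maximum distributes over the product just as in the final steps of the proof of \cref{lem:maxnorm_matrix}. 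Since this holds for every $(i,j)$, and $\mn(\delta)$ is a common scalar factor, assembling the entries gives $\mn(\delta\bm{A})=\mn(\delta)\,\mn(\bm{A})$, as claimed.

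I expect the only delicate point to be the two nonnegativity-dependent manipulations: the identity $\max_{s\in S}|s|=\max\{|\min S|,|\max S|\}$ that replaces the IBP $\min/\max$ by a plain maximum of absolute values, and the subsequent factorization of a four-term maximum of products into a product of two two-term maxima. Both are the same moves that carried the proof of \cref{lem:maxnorm_matrix}, so the main work is simply checking that the presence of the extra scalar $\delta$ (rather than a vector $\bm{u}$) does not disturb them; it does not, since $\delta$ contributes a common nonnegative factor $\mn(\delta)$ to every term of the maximum.
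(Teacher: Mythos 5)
Your proof is correct and follows essentially the same route as the paper's: apply the IBP multiplication rule to get the four candidate products, collapse $\max\{|\lb(\cdot)|,|\ub(\cdot)|\}$ into a maximum of the four absolute values, use $|xy|=|x||y|$, and factor the four-term maximum into $\mn(\delta)\mn(\bm{A})$. The only cosmetic difference is that you argue entrywise on $\delta a_{ij}$ while the paper writes the identical chain in matrix notation, which is equivalent since the paper's matrix-level $\lb$, $\ub$, $\min$, and $\max$ are all defined elementwise.
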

\begin{proof}[Proof of \cref{lem:maxnorm_matrix_mult}]
\rebuttal{Starting with the definition of $\bm{M}$:
\begin{equation}
    \begin{aligned}
        \bm{M} & \coloneqq \max\{|\lb(\delta\bm{A})|, |\ub(\delta\bm{A}|\}\\
        & \quad = \max\{|\min\left\{\lb(\delta)\lb(\bm{A}), \lb(\delta)\ub(\bm{A}), \ub(\delta)\lb(\bm{A}), \ub(\delta)\ub(\bm{A})\right\}|,\\
        & \quad \quad \quad \quad \quad |\max\left\{\lb(\delta)\lb(\bm{A}), \lb(\delta)\ub(\bm{A}), \ub(\delta)\lb(\bm{A}), \ub(\delta)\ub(\bm{A})\right\}|\} & \text{[\cref{eq:IBP_basic_rules}]}\\
        & \quad = \max\left\{|\lb(\delta)\lb(\bm{A})|, |\lb(\delta)\ub(\bm{A})|, |\ub(\delta)\lb(\bm{A})|, |\ub(\delta)\ub(\bm{A})|\right\}\\
        & \quad = \max\left\{|\lb(\delta)||\lb(\bm{A})|, |\lb(\delta)||\ub(\bm{A})|, |\ub(\delta)||\lb(\bm{A})|, |\ub(\delta)||\ub(\bm{A})|\right\}\\
        & \quad = \max\left\{\max\left\{|\lb(\delta)||\lb(\bm{A})|, |\lb(\delta)||\ub(\bm{A})|\right\}\right.,\\
        & \quad \quad \quad \quad \quad \left.\max\left\{|\ub(\delta)||\lb(\bm{A})|, |\ub(\delta)||\ub(\bm{A})|\right\}\right\}\\
        & \quad = \max\left\{|\lb(\delta)|\max\left\{|\lb(\bm{A})|, |\ub(\bm{A})|\right\}\right.,\\
        & \quad \quad \quad \quad \quad \left.|\ub(\delta)|\max\left\{|\lb(\bm{A})|, |\ub(\bm{A})|\right\}\right\}\\
        & \quad = \max\left\{|\lb(\delta)|, |\ub(\delta)|\right\}\max\left\{|\lb(\bm{A})|, |\ub(\bm{A})|\right\} = \mn(\delta)\mn(\bm{A})\,,
    \end{aligned}
\end{equation}}
we finish the proof.
\end{proof}
\begin{proof}[Proof of \cref{theo:mn_PN}]
\rebuttal{Firstly, applying the $\mn$ operator to the Hessian of the verification objective results in:
\begin{equation}
    \begin{aligned}
        \mn\left(\bm{H}_{g}(\bm{z})\right) & \coloneqq \mn\left(\sum_{i = 1}^{k}(c_{ti}-c_{\gamma i})\nabla_{\bm{z} \bm{z}}x_{i}^{(N)}\right) & \text{[\cref{eq:grad_and_hess_objective}]}\\
        & \leq \sum_{i = 1}^{k}\mn\left((c_{ti}-c_{\gamma i})\nabla_{\bm{z} \bm{z}}x_{i}^{(N)}\right) & \text{[\cref{lem:maxnorm_matrix_sum}]}\\
        & = \sum_{i = 1}^{k}|c_{ti}-c_{\gamma i}|\mn\left(\nabla_{\bm{z} \bm{z}}x_{i}^{(N)}\right) & \text{[\cref{lem:maxnorm_matrix_mult}]}\,.\\
    \end{aligned}
\end{equation}
Secondly, we can proceed analogously with the recursive relationship of the Hessian at different layers. For $n = 2, ..., N$:
\begin{equation}
    \begin{aligned}
        \mn\left(\nabla_{\bm{z} \bm{z}}x_{i}^{(n)}\right) & \coloneqq \mn\left(\nabla_{\bm{z}}x_{i}^{(n-1)} \bm{w}_{[n]:i}^{\top} + \bm{w}_{[n]:i}{\nabla_{\bm{z}}x_{i}^{(n-1)}}^{\top}\right.\\
        & \quad \quad \quad \quad + \left.(\bm{w}_{[n]:i}^{\top}\bm{z} + 1) \nabla_{\bm{z} \bm{z}}x_{i}^{(n-1)}\right) & \text{[\cref{eq:hess_CCP_recursive}]}\\
        & \leq \mn\left(\nabla_{\bm{z}}x_{i}^{(n-1)} \bm{w}_{[n]:i}^{\top}\right) + \mn\left(\bm{w}_{[n]:i}{\nabla_{\bm{z}}x_{i}^{(n-1)}}^{\top}\right)\\
        & \quad \quad \quad \quad + \mn\left((\bm{w}_{[n]:i}^{\top}\bm{z} + 1) \nabla_{\bm{z} \bm{z}}x_{i}^{(n-1)}\right) & \text{[\cref{lem:maxnorm_matrix_sum}]}\\
        & = \mn\left(\nabla_{\bm{z}}x_{i}^{(n-1)}\right)|\bm{w}_{[n]:i}^{\top}| + |\bm{w}_{[n]:i}|\mn\left({\nabla_{\bm{z}}x_{i}^{(n-1)}}^{\top}\right) & \text{[\cref{lem:maxnorm_matrix}]}\\
        & \quad \quad \quad \quad + \mn(\bm{w}_{[n]:i}^{\top}\bm{z} + 1)\mn\left(\nabla_{\bm{z} \bm{z}}x_{i}^{(n-1)}\right) & \text{[\cref{lem:maxnorm_matrix_mult}]}\,.\\
    \end{aligned}
\end{equation}
Lastly, for $n=1$, by definition $\nabla_{\bm{z} \bm{z}}x_{i}^{(1)} = \bm{0}_{d \times d}$, which means $\mn\left(\nabla_{\bm{z} \bm{z}}x_{i}^{(1)}\right) = \bm{0}_{d \times d}$}
\end{proof}
\subsection{Convergence of the BaB algorithm to the global minima}

In this Section we demonstrate a key property for verification: convergence to the global minima of \cref{eq:verif_problem}. Let us firstly define some concepts of the BaB algorithm.

Let $S_{k}$ be the subset picked at iteration $k$ of the BaB algorithm and $\{{S_{k}}_{q}| q = 0,1,...\}$ be the sequence of recursive subsets, so that ${S_{k}}_{q} \subset {S_{k}}_{q-1}$ and ${S_{k}}_{0} = S_{k}$. Let $\lb({S_{k}}_{q})$ be the lower bound of subset ${S_{k}}_{q}$ and ${\lb_{k}}_{q} = \min\{\lb({S_{k}}_{q}) | q = 0,1,...\}$ the lower bound in the branch rooted by subset $S_{k}$, we analogously define $\ub({S_{k}}_{q})$ and ${\ub_{k}}_{q}$. Finally, let a \textbf{fathomed} or \textbf{pruned} set ${S_{k}}_{q}$ be a set where $\lb({S_{k}}_{q}) > {\ub_{k}}_{q}$. From \citet[ Definition IV.4]{Horst1996GlobalOpt}: 

\begin{definition}
A bounding operation is called \textbf{consistent} if at every step any unfathomed subset can be further split, and if any infinitely decreasing sequence $\{{S_{k}}_{q}| q = 0,1,...\}$ for successively refined partition elements satisfies:
\begin{equation}
    \lim_{q \to \infty}{({{\ub}_{k}}_{q} - \lb({S_{k}}_{q}))} = 0 \,.
    \label{eq:consistency}
\end{equation}
\label{def:consistency}
\end{definition}
{\bf Remark.} Because $\ub({S_{k}}_{q}) \geq {\ub_{k}}_{q} \geq \lb({S_{k}}_{q})$, it suffices to prove that $\lim_{q \to \infty}{(\ub({S_{k}}_{q}) - \lb({S_{k}}_{q}))}$ to show a bounding operation is consistent.

Another relevant property is the subset selection being \textbf{bound improving}. Let $\mathcal{P}_{k}$ be the set of unfathomed subsets at iteration k ({\tt probs} variable in \cref{alg:bab}), from \citet[Definition IV.6]{Horst1996GlobalOpt}:

\begin{definition}
A subset selection operation is called \textbf{bound improving} if, at least each time after a finite number of steps, $S_{k}$ satisfies the relation:
\begin{equation}
    S_{k} = \argmin\{\lb(S): S \in \mathcal{P}_k\} \,.
    \label{eq:bound_improving}
\end{equation}
Then, this ensures at least one partition element where the actual lower bound is attained is selected for further partition in step $k$ of the algorithm.
\label{def:bound_improving}
\end{definition}

Finally, \citet[Theorem IV.3]{Horst1996GlobalOpt} cover global convergence of general BaB algorithms.

\begin{theorem}
In a BB procedure, suppose that the bounding operation is consistent and the subset selection operation is bound improving. Then the procedure is convergent:
\label{theo:bab_convergence}
\begin{equation}
    \lb \coloneqq \lim_{k \to \infty}{\lb}_{k} = \lim_{k \to \infty} f(\bm{z}^{(k)}) = \min_{\bm{z} \in C_{\text{in}}} f(\bm{z}) =  \lim_{k \to \infty} {\ub}_{k} = \ub\,,
    \label{eq:convergence_BaB}
\end{equation}
where $C_{\text{in}}$ is the feasible set of the initial problem, $\lb_k$ and $\ub_k$ are the global lower and upper bounds at iteration $k$ and $\bm{z}^{(k)}$ is the point where the upper bound $\ub_k$ is attained.
\end{theorem}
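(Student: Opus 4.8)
The plan is to reproduce the classical Horst--Tuy convergence argument for generic branch and bound, instantiating the two hypotheses of the theorem---consistency (\cref{def:consistency}) and the bound-improving selection rule (\cref{def:bound_improving})---in the setting of \cref{alg:bab}. First I would dispose of the trivial case in which the procedure halts after finitely many iterations: then the only exit is through an empty {\tt probs} list or the gap criterion, and in either case the returned value coincides with $\min_{\bm{z} \in C_{\text{in}}} f(\bm{z})$. So I assume the branching tree is infinite and argue about the limits. Throughout I write $v^{*} := \min_{\bm{z} \in C_{\text{in}}} f(\bm{z})$, which exists because $C_{\text{in}}$ is compact and $f$ is continuous.

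The second step records the monotonicity and boundedness of the two global bound sequences. The global upper bound ${\ub}_{k}$ is the running best objective value, hence non-increasing in $k$ and attained at some feasible point $\bm{z}^{(k)} \in C_{\text{in}}$, so ${\ub}_{k} = f(\bm{z}^{(k)})$. The global lower bound ${\lb}_{k} = \min\{\lb(S) : S \in \mathcal{P}_{k}\}$ is non-decreasing, because fathoming discards subsets and splitting only refines (raises) the per-subset lower bounds, so the minimal surviving lower bound cannot drop. Since each $\lb(S)$ is a valid lower bound and each evaluation gives an upper bound, ${\lb}_{k} \le v^{*} \le {\ub}_{k}$ for every $k$. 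Two monotone sequences trapped in a bounded interval converge; set $\lb = \lim_{k} {\lb}_{k}$ and $\ub = \lim_{k} {\ub}_{k}$, so $\lb \le v^{*} \le \ub$, and it remains only to show $\lb = \ub$.

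The heart of the proof is to produce a single infinitely decreasing chain of subsets to which consistency applies. Because the splitting operation bisects the widest coordinate interval into two children, the tree of generated subsets is binary and therefore finitely branching; being infinite, it admits by König's lemma an infinite nested path $S_{k_{0}} \supset S_{k_{1}} \supset \cdots$ of successively refined partition elements. Along this path the bound-improving property guarantees that at infinitely many indices $k_{q}$ the selected subset realizes the global lower bound, i.e. $\lb(S_{k_{q}}) = {\lb}_{k_{q}}$; passing to that sub-subsequence I may assume equality for all $q$. Consistency (\cref{def:consistency}) then gives $\lim_{q \to \infty}\!\big({\ub_{k}}_{q} - \lb(S_{k_{q}})\big) = 0$, and since the global upper bound never exceeds any upper bound found within the branch, ${\ub}_{k_{q}} \le {\ub_{k}}_{q}$, we obtain
\begin{equation}
    0 \le {\ub}_{k_{q}} - {\lb}_{k_{q}} \le {\ub_{k}}_{q} - \lb(S_{k_{q}}) \xrightarrow[q \to \infty]{} 0 \,.
\end{equation}

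Finally, the gap ${\ub}_{k} - {\lb}_{k}$ is non-negative and, by the monotonicity above, non-increasing; a non-increasing sequence possessing a subsequence that tends to $0$ tends to $0$ itself. Hence $\ub - \lb = 0$, and with $\lb \le v^{*} \le \ub$ this forces $\lb = v^{*} = \ub$, while ${\ub}_{k} = f(\bm{z}^{(k)})$ yields $\lim_{k} f(\bm{z}^{(k)}) = \ub = v^{*}$, which is exactly \cref{eq:convergence_BaB}. The step I expect to be the main obstacle is this chain extraction together with the alignment of the bound-improving iterations: one must argue carefully that an infinite path through the branching tree exists---finiteness of the branching, i.e. binary splitting, is precisely what licenses König's lemma---and that this path can be thinned so that \emph{every} retained index is bound-improving, legitimizing the replacement of the local quantity $\lb(S_{k_{q}})$ by the global ${\lb}_{k_{q}}$ when taking limits.
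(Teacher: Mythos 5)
The first thing to say is that the paper contains no proof of this statement: it is imported verbatim from \citet[Theorem IV.3]{Horst1996GlobalOpt}, and the paper's actual work in \cref{app:BaB_algorithm} consists of verifying its two hypotheses for VPN --- consistency of the bounding operations (\cref{lem:alpha_conve_consistent,lem:inter_prop_consistent}), convergence of the branching rule (\cref{lem:widest_interval_convergence}), and bound improvement of the selection rule (\cref{lem:lowest_is_bound_improving}). So there is no in-paper argument to compare against; what you have written is a from-scratch reconstruction of the Horst--Tuy proof. Much of it is sound: the monotonicity and boundedness of the global bound sequences, the reduction to showing that the gap vanishes along a subsequence, and the final sandwiching $\lb = \min_{\bm{z} \in C_{\text{in}}} f(\bm{z}) = \ub$ are all correct.

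The genuine gap is exactly at the step you flag as delicate, and flagging it is not the same as closing it. König's lemma gives you an infinite nested path of subsets, each of which is split --- hence selected --- at some iteration. But \cref{def:bound_improving} only promises that bound-attaining selections occur infinitely often \emph{somewhere} in the tree, not along your particular path: nothing prevents all bound-attaining selections from living in one subtree (say, below one child of $C_{\text{in}}$) while the infinite path runs through another subtree whose nodes are selected only at the non-bound-improving iterations that the definition tolerates (``at least each time after a finite number of steps''). In that scenario no thinning of your path achieves $\lb(S_{k_q}) = \lb_{k_q}$. This matters because the second inequality in your display needs $\lb_{k_q} \geq \lb(S_{k_q})$, whereas in general only $\lb_{k_q} \leq \lb(S_{k_q})$ holds (the global lower bound is a minimum over all active subsets); without attainment, consistency merely bounds the global gap from below by a vanishing quantity, which proves nothing. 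Two repairs exist. For \cref{alg:bab} specifically, \cref{lem:lowest_is_bound_improving} shows the selection attains the global lower bound at \emph{every} iteration, so every internal node of the tree --- in particular every node of your König path --- works, and your argument closes. For the generic statement as written, one must follow Horst--Tuy: apply the pigeonhole argument to the bound-attaining selected subsets themselves (descend from the root, always into a child having infinitely many of them among its descendants), and then invoke monotonicity of lower bounds under refinement, $\lb(S') \geq \lb(S)$ for $S' \subset S$ (which holds for IBP and $\alpha$-convexification and is assumed in the prototype procedure), to transfer the lower bounds along the resulting ancestor chain to the global lower bounds.
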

\textbf{Remark.} In \cref{theo:bab_convergence}, $\lb_k$ and $\ub_k$ refer to variables $\mtt{global\_lb}$ and $\mtt{global\_ub}$ respectively in \cref{alg:bab}.

\begin{lemma}
Selecting the subset with the lowest lower bound at every iteration of a BaB algorithm is a bound improving subset selection strategy.
\label{lem:lowest_is_bound_improving}
\end{lemma}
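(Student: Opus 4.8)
The plan is to prove this essentially by unwinding the two definitions and observing that the proposed selection rule satisfies the required relation not merely \emph{after a finite number of steps} (as \cref{def:bound_improving} demands) but in fact at \emph{every} step, which is strictly stronger. First I would recall that by \cref{def:bound_improving} a subset selection operation is bound improving if there exist infinitely many iterations $k$ at which the selected subset $S_k$ obeys $S_k = \argmin\{\lb(S): S \in \mathcal{P}_k\}$, where $\mathcal{P}_k$ is the collection of unfathomed subsets present at iteration $k$ (the \texttt{probs} list in \cref{alg:bab}).

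Next I would identify the ``lowest lower bound'' strategy with the operation actually performed in \cref{alg:bab}: the \texttt{pick\_out} routine takes the subset carrying the minimum stored lower bound, i.e. it returns precisely an element of $\argmin\{\lb(S): S \in \mathcal{P}_k\}$, since each subset is stored as a triple whose first coordinate is its lower bound $\lb(S)$. Thus, by construction of the strategy, the equality \eqref{eq:bound_improving} holds at the given iteration $k$ by definition of the rule, with no hypotheses needed on the history of the algorithm.

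I would then conclude: because \eqref{eq:bound_improving} is satisfied at every iteration $k \geq 1$, it is in particular satisfied for infinitely many $k$ (and hence ``at least each time after a finite number of steps''), so the selection operation meets the criterion of \cref{def:bound_improving} and is bound improving. A short remark can address the only genuine subtlety, namely ties: if $\argmin$ is multivalued, any tie-breaking choice still lies in the $\argmin$ set, so the relation \eqref{eq:bound_improving}---which asks only that $S_k$ \emph{be} a minimizer, not \emph{the} unique minimizer---remains valid regardless of how \texttt{pick\_out} resolves ties.

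I do not anticipate a real obstacle here; the lemma is almost definitional. The one place to be careful is the alignment between the informal phrase ``lowest lower bound'' and the formal $\argmin$ in \cref{def:bound_improving}, together with the observation that the weak quantifier in the definition (``after a finite number of steps'') is automatically subsumed by the stronger ``at every step'' property that the rule enjoys. This lemma is intended to feed into \cref{theo:bab_convergence}, whose hypotheses require both a consistent bounding operation and a bound-improving selection; the present result discharges the latter requirement for our algorithm.
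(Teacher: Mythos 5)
Your proposal is correct and follows essentially the same argument as the paper's proof: both simply unwind \cref{def:bound_improving} and observe that the lowest-lower-bound rule makes $S_k = \argmin\{\lb(S): S \in \mathcal{P}_k\}$ hold at \emph{every} iteration, which trivially implies the weaker ``after a finite number of steps'' requirement. Your additional remark on tie-breaking is a harmless (and mildly useful) clarification, but the substance of the argument is identical.
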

\begin{proof}
By definition, when selecting the subset with the lowest lower bound, we are selecting at every iteration $S_{k} = \argmin\{\lb(S): S \in \mathcal{P}_k\}$, which means that \cref{eq:bound_improving} holds at every iteration $k$ and the strategy is bound improving.
\end{proof}

\begin{definition}
Let subset $S_{k}$, $\{{S_{k}}_{q}| q = 0,1,...\}$ the sequence of recursive subsets rooted at ${S_{k}}_0 = S_{k}$ %
a branching operation is convergent iff %
$\lim_{q \to \infty}|{S_k}_q| = 0$.
\label{def:convergent}
\end{definition}

\begin{lemma}
Selecting the widest interval index $i = \argmax{\bm{u}-\bm{l}}$ to split a problem, is a convergent branching operation.
\label{lem:widest_interval_convergence}
\end{lemma}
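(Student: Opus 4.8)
The plan is to interpret $|{S_k}_q|$ as the box width $w(S) := \max_{j \in [d]} (u_j - l_j)$ for a subset $S = [\bm{l}, \bm{u}]$; any reasonable size measure (e.g.\ the Euclidean diameter $\|\bm{u}-\bm{l}\|_2 \le \sqrt{d}\,w(S)$) is controlled by $w(S)$, so it suffices to drive $w$ to zero. First I would record the elementary effect of a single branching step: splitting the widest coordinate $i = \argmax_{j}(u_j - l_j)$ at its midpoint produces two children in each of which the $i$-th interval has width $(u_i - l_i)/2$, while every other interval is unchanged. Consequently, along the nested sequence $\{{S_k}_q\}$ the width $w({S_k}_q)$ is non-increasing, and the only coordinate whose width changes at step $q$ is one that attains the current maximum.

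The core of the argument is a halving-phase estimate. Fix a value $W \ge w({S_k}_q)$ and consider the coordinates whose width exceeds $W/2$; let $m_q \le d$ be their number. At each step we bisect a coordinate of maximal width, which, being $\le W$, becomes $\le W/2$ after the split, while no other width increases. Hence, as long as some coordinate still exceeds $W/2$, the maximal coordinate is one of them and the count $m_q$ strictly decreases; after at most $d$ consecutive steps we reach a subset with $w \le W/2$. I would then iterate this estimate: starting from $W_0 = w(S_k)$, after $md$ branching steps we have $w({S_k}_{md}) \le W_0 / 2^{m}$, so that $w({S_k}_q) \to 0$ as $q \to \infty$, which is exactly the condition $\lim_{q\to\infty} |{S_k}_q| = 0$ demanded by \cref{def:convergent}.

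The main obstacle — and the step requiring the most care — is the phase estimate, specifically ruling out the degenerate scenario in which ties among several equally wide coordinates keep the maximal width pinned at $W$ indefinitely. The counting argument resolves this: because widths never increase and the bisected coordinate drops strictly below $W/2$, each of the at most $d$ offending coordinates can be the unique widest at most once per phase, forcing the maximal width to halve within $d$ steps irrespective of ties. A minor point I would also verify is that the whole argument applies to the single infinitely decreasing chain singled out in \cref{def:convergent} rather than to the entire branch-and-bound tree; this is immediate, since the halving estimate uses only the nesting ${S_k}_q \subset {S_k}_{q-1}$ and the midpoint-of-widest-coordinate rule.
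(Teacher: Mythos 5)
Your proof is correct, and it departs from the paper's at the decisive step --- in a way that actually matters. Both arguments begin with the same reduction: the paper bounds $|{S_k}_q| = ||\bm{u}^{(q)}-\bm{l}^{(q)}||_2 \le \sqrt{d}\,(u^{(q)}_{i_1}-l^{(q)}_{i_1})$, you bound it by $\sqrt{d}\,w({S_k}_q)$ with $w$ the maximal side length, which is the identical step. The paper then observes that the maximal width strictly decreases at every split (either it is halved, or the maximum passes to the second-widest coordinate) and from ``strictly decreasing and bounded below by $0$'' concludes that it tends to $0$. That inference is invalid as stated: a strictly decreasing sequence bounded below by $0$ converges, but not necessarily to $0$ (consider $1+2^{-q}$), so the paper's own proof has a gap at exactly this point. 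Your halving-phase count is precisely what repairs it: along the nested chain widths never increase, every bisection hits the current widest coordinate, and a coordinate of width exceeding $W/2$ can be bisected at most once before its width falls to at most $W/2$ and stays there; hence after at most $d$ steps the maximal width is at most $W/2$, giving $w({S_k}_{md}) \le w(S_k)\,2^{-m}$ and therefore the limit $0$ required by \cref{def:convergent}. In short, the paper's monotonicity argument is shorter but incomplete; your counting argument costs one extra observation and buys both a geometric convergence rate and a proof that actually closes.
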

\begin{proof}
Supose at subset ${S_k}_q$, we have bounds $\bm{l}^{(q)}$ and $\bm{u}^{(q)}$ and indexes $i_1, i_2, ..., i_d$ so that  $u^{(q)}_{i_{1}} - l^{(q)}_{i_{1}} \geq u^{(q)}_{i_{2}} - l^{(q)}_{i_{2}} \geq \dots \geq u^{(q)}_{i_{d}} - l^{(q)}_{i_{d}}$ is the decreasing ordered list of interval widths, then $|{S_{k}}_q| = ||\bm{u}^{(q)} - \bm{l}^{(q)}||_{2} = \sqrt{\sum_{j = 1}^{d}(u^{(q)}_{j} - l^{(q)}_{j})^2} \leq \sqrt{\sum_{j = 1}^{d}(u^{(q)}_{i_{1}} - l^{(q)}_{i_{1}})^2} = \sqrt{d(u^{(q)}_{i_1} - l^{(q)}_{i_1})^2} = (u^{(q)}_{i_1} - l^{(q)}_{i_1})\sqrt{d}$. %
Then, at subset ${S_k}_{q+1}$, with bounds $\bm{l}^{(q+1)}$  and $\bm{u}^{(q+1)}$ and new indices $j_1, j_2, ..., j_d$, $u^{(q+1)}_{i_1} - l^{(q+1)}_{i_1} = \frac{u^{(q)}_{i_1} - l^{(q)}_{i_1}}{2}$ and then $j_{1}$ is either equal to $i_{1}$ or to $i_{2}$, depending on whether $\frac{u^{(q)}_{i_1} - l^{(q)}_{i_1}}{2} > u^{(q)}_{i_2} - l^{(q)}_{i_2}$ or not. Finally, as:
\begin{equation}
    \left\{
    \begin{matrix}
    u^{(q)}_{i_1} - l^{(q)}_{i_1} > \frac{u^{(q)}_{i_1} - l^{(q)}_{i_1}}{2} = u^{(q+1)}_{j_1} - l^{(q+1)}_{j_1} & \text{if}~\frac{u^{(q)}_{i_1} - l^{(q)}_{i_1}}{2} > u^{(q)}_{i_2} - l^{(q)}_{i_2} \\
    u^{(q)}_{i_1} - l^{(q)}_{i_1} > u^{(q)}_{i_2} - l^{(q)}_{i_2} = u^{(q+1)}_{j_1} - l^{(q+1)}_{j_1} & \text{if}~\frac{u^{(q)}_{i_1} - l^{(q)}_{i_1}}{2} \leq u^{(q)}_{i_2} - l^{(q)}_{i_2}
    \end{matrix}
    \right.
\end{equation}
and $u^{(q)}_{i_1} - l^{(q)}_{i_1} \geq 0$, the sequence $\{u^{(q)}_{i_1} - l^{(q)}_{i_1}\}_{q}$ is strictly decreasing and lower bounded by $0$, then $\lim_{q \to \infty} u^{(q)}_{i_1} - l^{(q)}_{i_1} = 0$ must hold. Then $\lim_{q \to 0}(u^{(q)}_{i_1} - l^{(q)}_{i_1})\sqrt{d} = 0$ and as $(u^{(q)}_{i_1} - l^{(q)}_{i_1})\sqrt{d} \geq |{S_{k}}_q| \geq 0$, the property $\lim_{q \to \infty}|{S_k}_q| = 0$ holds and by \cref{def:convergent}, the branching operation is convergent.
\end{proof}

A consequence of \cref{lem:widest_interval_convergence} is the following Corollary.

\begin{corollary}
For any $\bm{z} \in [\bm{l}^{(q)}, \bm{u}^{(q)}]$ , if $\lim_{q \to \infty}|{S_k}_q| = 0$, in the limit $l^{(q)}_i = z_{i} = u^{(q)}_i~\forall i = 1, ..., d, \forall \bm{z} \in [\bm{l}, \bm{u}]$.
\label{cor:limit_intervals}
\end{corollary}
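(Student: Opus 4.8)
The plan is to deduce the coordinate-wise collapse of the intervals directly from the vanishing of the subset size $|{S_k}_q| = \|\bm{u}^{(q)} - \bm{l}^{(q)}\|_2$, which is precisely the quantity shown to tend to zero in \cref{lem:widest_interval_convergence}. The key observation is that the Euclidean norm dominates each of its individual coordinates, so a vanishing norm forces every coordinate width to vanish.

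First I would recall that at the $q$-th refinement the subset is the box ${S_k}_q = [\bm{l}^{(q)}, \bm{u}^{(q)}]$ with $u_i^{(q)} \geq l_i^{(q)}$ for all $i$, so each width $u_i^{(q)} - l_i^{(q)} \geq 0$. For a fixed index $i$ we have the elementary bound
\begin{equation}
    0 \leq \left(u_i^{(q)} - l_i^{(q)}\right)^2 \leq \sum_{j=1}^{d}\left(u_j^{(q)} - l_j^{(q)}\right)^2 = |{S_k}_q|^2 \,.
\end{equation}
By hypothesis $|{S_k}_q| \to 0$ as $q \to \infty$, hence $|{S_k}_q|^2 \to 0$, and the squeeze theorem gives $(u_i^{(q)} - l_i^{(q)})^2 \to 0$, so that $\lim_{q\to\infty}(u_i^{(q)} - l_i^{(q)}) = 0$. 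As $i$ was arbitrary, every coordinate width vanishes in the limit, i.e. $\lim_{q\to\infty} l_i^{(q)} = \lim_{q\to\infty} u_i^{(q)}$ for all $i \in [d]$.

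Finally I would invoke the containment $l_i^{(q)} \leq z_i \leq u_i^{(q)}$, which holds for any $\bm{z} \in [\bm{l}^{(q)}, \bm{u}^{(q)}]$, and apply the squeeze theorem once more to conclude that $z_i$ is trapped between two sequences sharing a common limit, yielding $l_i^{(q)} = z_i = u_i^{(q)}$ in the limit, as claimed.

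There is essentially no obstacle here: the result is a direct consequence of \cref{lem:widest_interval_convergence} together with the fact that convergence in the $\ell_2$ norm implies coordinate-wise convergence. The only point requiring a word of care is that the conclusion concerns limits of the defining sequences of endpoints rather than a single fixed box, so the two squeeze arguments must be phrased at the level of the sequences $\{l_i^{(q)}\}_q$ and $\{u_i^{(q)}\}_q$ rather than for a static interval.
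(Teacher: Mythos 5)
Your proof is correct and matches the paper's intent: the paper states this corollary without any explicit proof, presenting it as an immediate consequence of \cref{lem:widest_interval_convergence}, and your argument (the $\ell_2$ norm dominates each coordinate width, so $|{S_k}_q|\to 0$ forces every $u_i^{(q)}-l_i^{(q)}\to 0$, after which the squeeze theorem pins $z_i$ between the collapsing endpoints) is exactly the omitted elementary reasoning. Your closing remark about phrasing the limits at the level of the endpoint sequences rather than a static box is a fair clarification of the paper's somewhat informal statement, but it does not constitute a departure from the paper's approach.
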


\begin{lemma}
Let widest interval selection be the branching operation, lower bounds obtained by $\alpha$-convexification with $\alpha \geq \max\{0,-\frac{1}{2}\min \{ \lambda_{\text{min}}(\bm{H}_{f}(\bm{z})) : \bm{z} \in [\bm{l},\bm{u}]\}\}$ and upper bounds obtained by evaluating $\ub({S_{k}}_{q}) = g(\bm{z}^{\text{upper}})$ for any $\bm{z}^{\text{upper}} \in [\bm{l}, \bm{u}]$ are consistent.
\label{lem:alpha_conve_consistent}
\end{lemma}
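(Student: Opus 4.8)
The plan is to invoke \cref{def:consistency} together with the remark following it, which reduces consistency to two things: that every unfathomed subset can be further split, and that $\lim_{q \to \infty}(\ub({S_{k}}_{q}) - \lb({S_{k}}_{q})) = 0$ along any infinitely refined sequence $\{{S_{k}}_{q}\}$. The first point is immediate, since the branching rule halves the widest coordinate interval and is therefore well defined on any box $[\bm{l}^{(q)}, \bm{u}^{(q)}]$ of positive width (on a degenerate single-point box the gap is already $0$, so the requirement is vacuous). Thus the whole argument reduces to the limit. I would record the two bounds explicitly: because $g_{\alpha}$ is convex on $B_q := [\bm{l}^{(q)}, \bm{u}^{(q)}]$, PGD returns its exact minimum, so $\lb({S_{k}}_{q}) = \min_{\bm{z} \in B_q} g_{\alpha}(\bm{z})$, while $\ub({S_{k}}_{q}) = g(\bm{z}^{\text{upper}})$ for some $\bm{z}^{\text{upper}} \in B_q$.

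Next I would control the correction term of \cref{eq:alpha_conv}. For $\bm{z} \in B_q$ each factor satisfies $(z_i - l_i^{(q)})(z_i - u_i^{(q)}) \le 0$, with minimum value $-\tfrac{1}{4}(u_i^{(q)} - l_i^{(q)})^2$ attained at the midpoint, so
\begin{equation}
    0 \le g(\bm{z}) - g_{\alpha}(\bm{z}) = -\alpha \sum_{i=1}^{d}(z_i - l_i^{(q)})(z_i - u_i^{(q)}) \le \frac{\alpha}{4}||\bm{u}^{(q)} - \bm{l}^{(q)}||_{2}^{2} \,.
\end{equation}
Writing $\bm{z}^{\star}$ for the minimizer of $g$ over $B_q$, the pointwise inequality $g_{\alpha} \le g$ gives $\min_{B_q} g_{\alpha} \le \min_{B_q} g \le g(\bm{z}^{\text{upper}})$, while the display above gives $\min_{B_q} g - \min_{B_q} g_{\alpha} \le \frac{\alpha}{4}||\bm{u}^{(q)} - \bm{l}^{(q)}||_{2}^{2}$. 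Chaining these yields
\begin{equation}
    0 \le \ub({S_{k}}_{q}) - \lb({S_{k}}_{q}) \le \big(g(\bm{z}^{\text{upper}}) - g(\bm{z}^{\star})\big) + \frac{\alpha}{4}||\bm{u}^{(q)} - \bm{l}^{(q)}||_{2}^{2} \,.
\end{equation}

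Finally I would take $q \to \infty$. By \cref{lem:widest_interval_convergence} the widest-interval branching is convergent, so $||\bm{u}^{(q)} - \bm{l}^{(q)}||_{2} = |{S_{k}}_{q}| \to 0$; hence the quadratic term vanishes. For the bracketed term, both $\bm{z}^{\text{upper}}$ and $\bm{z}^{\star}$ lie in $B_q$, a box whose $\ell_2$-diameter equals $||\bm{u}^{(q)} - \bm{l}^{(q)}||_{2}$ and tends to $0$ (\cref{cor:limit_intervals}); since $g$ is a polynomial and hence uniformly continuous on the compact root box ${S_{k}}_{0} \subseteq C_{\text{in}}$, $g(\bm{z}^{\text{upper}}) - g(\bm{z}^{\star}) \to 0$. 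Both terms therefore vanish and the squeeze gives $\lim_{q \to \infty}(\ub({S_{k}}_{q}) - \lb({S_{k}}_{q})) = 0$, establishing consistency. I expect the main obstacle to be precisely this last bracketed term: because the upper bound may be \emph{any} feasible point $\bm{z}^{\text{upper}} \in B_q$ rather than a near-minimizer, its distance from the true box minimum cannot be controlled directly and must instead be squeezed through the vanishing diameter and uniform continuity of $g$.
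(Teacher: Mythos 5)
Your proof is correct and follows essentially the same route as the paper's: both reduce consistency to $\lim_{q \to \infty}(\ub({S_{k}}_{q}) - \lb({S_{k}}_{q})) = 0$ via \cref{def:consistency}, and both drive this limit from the vanishing box diameter guaranteed by \cref{lem:widest_interval_convergence} and \cref{cor:limit_intervals}, with the $\alpha$-correction term and the difference of $g$-values each going to zero. Your version is simply a more quantitative rendering of the paper's argument --- the explicit bound $\frac{\alpha}{4}\|\bm{u}^{(q)}-\bm{l}^{(q)}\|_{2}^{2}$ on the correction term and the appeal to uniform continuity of $g$ make rigorous the limit step that the paper carries out informally.
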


\begin{proof}
By \cref{def:consistency} is sufficient to check that $\lim_{q \to \infty}{(\ub({S_{k}}_{q}) - \lb({S_{k}}_{q}))} = 0$. By definition, the lower bound $\lb({S_{k}}_{q}))$ is the solution to the function in \cref{eq:alpha_conv}  subject to $\bm{z} \in [\bm{l}, \bm{u}]$, %
which will lead to an optimal $\bm{z}^{\text{opt}}$ and $\lb({S_{k}}_{q})) = g_{\alpha}(\bm{z}^{\text{opt}})$. If the upper bound is given by evaluating the objective function at any point $\bm{z}^{\text{upper}} \in [\bm{l}, \bm{u}]$ e.g., $\bm{z}^{\text{upper}} = \bm{z}^{\text{opt}}$ or in our case $\bm{z}^{\text{upper}} = \bm{z}^{\text{PGD}}$, the point obtained by performing PGD over $g$, $\ub({S_{k}}_{q}) = f(\bm{z}^{\text{upper}})$, and their difference is: $\ub({S_{k}}_{q}) - \lb({S_{k}}_{q}) = g(\bm{z}^{\text{upper}}) - g(\bm{z}^{\text{opt}}) - \alpha\sum_{i = 1}^{d}(z^{\text{opt}}_{i} - l^{(q)}_{i})(z^{\text{opt}}_{i} - u^{(q)}_{i})$. By virtue of \cref{cor:limit_intervals}, in the limit, $\bm{l}^{(q)}  = \bm{z}^{\text{opt}} = \bm{z}^{\text{upper}}= \bm{u}^{(q)}$ and therefore $\lim_{q \to \infty}\ub({S_{k}}_{q}) - \lb({S_{k}}_{q}) = g(\bm{l}) - g(\bm{l}) - \alpha\sum_{i = 1}^{d}(l^{(q)}_{i} - l^{(q)}_{i})(l^{(q)}_{i} - l^{(q)}_{i}) = 0$ and the bounds are consistent.
\end{proof}

\begin{lemma}
Let widest interval selection be the branching operation, lower bounds obtained by IBP and upper bounds obtained by evaluating $\ub({S_{k}}_{q}) = g(\bm{z}^{\text{upper}})$ for any $\bm{z}^{\text{upper}} \in [\bm{l}, \bm{u}]$  are consistent.
\label{lem:inter_prop_consistent}
\end{lemma}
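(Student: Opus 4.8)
The plan is to mirror the structure of the proof of \cref{lem:alpha_conve_consistent}, replacing the vanishing-quadratic-gap argument by an exactness property of interval arithmetic. By the remark following \cref{def:consistency}, since $\ub({S_{k}}_{q}) \geq {\ub_{k}}_{q} \geq \lb({S_{k}}_{q})$ always holds, it suffices to establish $\lim_{q \to \infty}(\ub({S_{k}}_{q}) - \lb({S_{k}}_{q})) = 0$ for any infinitely decreasing sequence of refined subsets $\{{S_{k}}_{q}\}$.

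First I would invoke \cref{lem:widest_interval_convergence} to conclude that widest-interval branching is convergent, i.e. $\lim_{q \to \infty}|{S_{k}}_{q}| = 0$, and then \cref{cor:limit_intervals} to obtain a common limit point $\bm{z}^{*}$ with $\lim_{q \to \infty}\bm{l}^{(q)} = \lim_{q \to \infty}\bm{u}^{(q)} = \bm{z}^{*}$. For the upper bound, since $\bm{z}^{\text{upper}} \in [\bm{l}^{(q)}, \bm{u}^{(q)}]$ and the interval collapses to $\bm{z}^{*}$, a squeeze argument gives $\bm{z}^{\text{upper}} \to \bm{z}^{*}$; because $g$ is continuous (indeed smooth, being a polynomial), $\ub({S_{k}}_{q}) = g(\bm{z}^{\text{upper}}) \to g(\bm{z}^{*})$.

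The crux will be the analogous statement for the IBP lower bound, namely $\lim_{q \to \infty}\lb(g(\bm{z})) = g(\bm{z}^{*})$, where $\lb$ is the IBP operator over $[\bm{l}^{(q)}, \bm{u}^{(q)}]$ and $\lb(g(\bm{z})) = \lb(f(\bm{z})_{t}) - \ub(f(\bm{z})_{\gamma})$. I would prove this via the key observation that the rules of \cref{eq:IBP_basic_rules} are \emph{exact on degenerate intervals}: when $\lb(h) = \ub(h) = h$ for each operand, the identity rule returns $z_{i}$, the linear-mapping rule returns $\sum_{i}(w_{i}^{+} + w_{i}^{-})h_{i} = \sum_{i}w_{i}h_{i}$, and the multiplication rule yields the singleton $S = \{h_{1}h_{2}\}$ so that $\lb = \ub = h_{1}h_{2}$. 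By structural induction over the operations composing a PN (each intermediate output, and hence the objective $g$, is built from exactly these three rules), the bounds collapse: on the degenerate interval $\{\bm{z}^{*}\}$ one obtains $\lb(g) = \ub(g) = g(\bm{z}^{*})$. Since each rule is assembled from the continuous operations $+$, $\times$, $\min$, $\max$ applied to the endpoints $\bm{l}^{(q)}, \bm{u}^{(q)}$, the composite map $(\bm{l}^{(q)}, \bm{u}^{(q)}) \mapsto \lb(g(\bm{z}))$ is continuous, so as both endpoints tend to $\bm{z}^{*}$ the IBP lower bound tends to its value at the degenerate interval, $g(\bm{z}^{*})$.

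Combining the two limits gives $\lim_{q \to \infty}(\ub({S_{k}}_{q}) - \lb({S_{k}}_{q})) = g(\bm{z}^{*}) - g(\bm{z}^{*}) = 0$, which by \cref{def:consistency} establishes consistency. The hard part is precisely this lower-bound step: unlike the $\alpha$-convexification case, where the gap is an explicit quadratic term seen to vanish pointwise, here I must argue that interval propagation loses no looseness in the limit. The cleanest route is the exactness-on-points plus continuity argument above; the only point demanding care is checking that the induction genuinely covers every operation used to evaluate $\lb(g)$, so that no non-IBP (and potentially non-tight) step is hidden in the computation.
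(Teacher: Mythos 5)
Your proof is correct and takes essentially the same route as the paper's: both collapse the feasible intervals to a point via \cref{lem:widest_interval_convergence} and \cref{cor:limit_intervals}, and then exploit the fact that the IBP rules of \cref{eq:IBP_basic_rules} are exact on degenerate intervals, so that $\lb$ and $\ub$ coincide with $g(\bm{z}^{*})$ in the limit. The only difference is presentational---you phrase this as a structural induction plus continuity of the IBP operators in the interval endpoints, whereas the paper unrolls the same collapse explicitly through the CCP/NCP recursions in \cref{eq:bounds_xnhat,eq:bounds_xn,eq:bounds_output}.
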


\begin{proof}
By \cref{def:consistency} is sufficient to check that $\lim_{q \to \infty}{(\ub({S_{k}}_{q}) - \lb({S_{k}}_{q}))} = 0$. By definition, the lower bound $\lb({S_{k}}_{q})$ is given by $\lb(g(\bm{z})) = \lb(f(\bm{z})_t) - \ub(f(\bm{z})_\gamma)$, see \cref{subsec:IBP_PN}, and the upper bound is given by $\ub({S_{k}}_{q})) = g(\bm{z}^{\text{upper}})$. %
By virtue of \cref{cor:limit_intervals}, in the limit, $\bm{l}^{(q)} = \bm{z}^{\text{upper}}= \bm{u}^{(q)}$, then $\lim_{q \to \infty}\ub({S_{k}}_{q})) = g(\bm{l})$. Then, for \cref{eq:bounds_xnhat}, it can be easily found that $\lim_{q \to \infty}\lb(\bm{\hat{x}}^{(n)}) = \lim_{q \to \infty}\ub(\bm{\hat{x}}^{(n)}) = \bm{W}_{[n]}^{\top}\bm{l}$. Then, for \cref{eq:bounds_xn}, every element in the sets $S$ will converge to the same value and therefore, $\lb(\bm{x}^{(n)}) = \min{S} = \max{S} = \ub(\bm{x}^{(n)})$ for both CCP and NCP for every layer $n$. Finally, for the network's output, because $\lb(x_{i}^{(N)}) = \ub(x_{i}^{(N)})$, then, from \cref{eq:bounds_output}, $\lb(f(\bm{z})_i) = \ub(f(\bm{z})_i) = f(\bm{l})_i, \forall i \in [o]$. Then, $\lb({S_{k}}_{q})) = \lb(f(\bm{z})_t - f(\bm{z})_{\gamma}) = \lb(f(\bm{z})_t) - \ub(f(\bm{z})_{\gamma}) = f(\bm{l})_t - f(\bm{l})_{\gamma} =  f(\bm{z}^{\text{upper}})_t - f(\bm{z}^{\text{upper}})_{\gamma} = \ub({S_{k}}_{q}))$ and the property holds.

\end{proof}
A consequence of \cref{lem:widest_interval_convergence,lem:alpha_conve_consistent,lem:inter_prop_consistent} is:
\begin{corollary}
Any branch and bound procedure with widest interval selection for branching, lowest lower bound subset selection and a bounding operation of: IBP or $\alpha$-convexification, is convergent.
\end{corollary}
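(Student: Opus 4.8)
The plan is to invoke the abstract convergence machinery that the excerpt has already assembled, namely \cref{theo:bab_convergence}, which guarantees that a branch and bound procedure converges to the global minimum provided that (i) the subset selection operation is \emph{bound improving} and (ii) the bounding operation is \emph{consistent}. The corollary asserts exactly that these two hypotheses hold for the specific choices the paper makes, so the proof reduces to citing the three supporting lemmas and discharging the hypotheses of \cref{theo:bab_convergence}.

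First I would verify the bound improving condition. The subset selection rule in \cref{alg:bab} (the \texttt{pick\_out} step) always selects the subset with the minimum lower bound, and \cref{lem:lowest_is_bound_improving} establishes that this rule is bound improving in the sense of \cref{def:bound_improving}. This handles hypothesis (i) for \emph{both} bounding variants, since the branching and selection strategies are identical across the two cases; only the bounding operation differs.

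Next I would verify consistency of the bounding operation, which is where the two cases genuinely split. The branching operation common to both is widest-interval bisection, which \cref{lem:widest_interval_convergence} shows to be convergent (every infinitely refined nested sequence of subsets shrinks to a point, via \cref{def:convergent} and \cref{cor:limit_intervals}). Given this, \cref{lem:alpha_conve_consistent} proves consistency when the lower bounds come from $\alpha$-convexification with a valid $\alpha$, and \cref{lem:inter_prop_consistent} proves consistency when the lower bounds come from IBP; in both cases the upper bound is obtained by evaluating $g$ at any feasible point, so that $\ub(S) \geq \ub_k \geq \lb(S)$ and it suffices to show $\lim_{q \to \infty}(\ub({S_k}_q) - \lb({S_k}_q)) = 0$, which is precisely what those lemmas deliver. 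Thus hypothesis (ii) of \cref{theo:bab_convergence} holds for either choice of lower bounding method.

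Having discharged both hypotheses, \cref{theo:bab_convergence} immediately yields $\lim_{k\to\infty}\lb_k = \min_{\bm{z}\in C_{\text{in}}} g(\bm{z}) = \lim_{k\to\infty}\ub_k$, i.e.\ the procedure is convergent, completing the proof. There is essentially no new mathematical obstacle here: the entire difficulty was front-loaded into \cref{lem:widest_interval_convergence} and the two consistency lemmas, so the corollary is a short assembly step. The only point requiring a word of care is ensuring that the \emph{same} convergent branching rule underlies both consistency lemmas, so that one genuinely obtains a single unified statement covering IBP and $\alpha$-convexification rather than two unrelated arguments; since widest-interval selection is fixed throughout \cref{alg:bab}, this is automatic.
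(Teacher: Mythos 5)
Your proposal is correct and mirrors the paper's own proof exactly: both discharge the hypotheses of \cref{theo:bab_convergence} by citing \cref{lem:lowest_is_bound_improving} for the bound-improving selection, \cref{lem:widest_interval_convergence} for convergence of the widest-interval branching, and \cref{lem:alpha_conve_consistent,lem:inter_prop_consistent} for consistency of the $\alpha$-convexification and IBP bounds respectively. The only difference is presentational—you make explicit the remark that $\ub({S_k}_q) \geq {\ub_k}_q \geq \lb({S_k}_q)$ reduces consistency to the vanishing gap, which the paper states separately after \cref{def:consistency}—so no further work is needed.
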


\begin{proof}
Because of \cref{lem:widest_interval_convergence}, we have convergence of the branching procedure. Then, due to \cref{lem:inter_prop_consistent,lem:alpha_conve_consistent}, we have bound consistency for both IBP and $\alpha$-convexification bounding mechanisms. By \cref{lem:lowest_is_bound_improving}, we have that the subset selection strategy is bound improving. Therefore, because of Theorem \cref{theo:bab_convergence}, we have that any BaB algorithm with these properties converges to a global minimizer.%
\end{proof}

\subsection{Lower bound of the minimum eigenvalue of the Hessian of PNs}
\label{subsec:details_min_eig}

\begin{proof}[Proof of \cref{prop:lower bounding_Hessian_vector}]
Using the IBP rules from \cref{subsec:IBP_PN,subsec:IBP_details}, we can develop:
\begin{equation}
    \begin{aligned}
        \lb(\delta \cdot \nabla_{\bm{z} \bm{z}}^{2} x_{i}^{(n)}) & = \lb(\delta \cdot (\nabla_{\bm{z}}x_{i}^{(n-1)} \bm{w}_{[n]:i}^{\top} + \{\nabla_{\bm{z}}x_{i}^{(n-1)} \bm{w}_{[n]:i}^{\top}\}^\top & \\
        & \quad + (\bm{w}_{[n]:i}^{\top}\bm{z} + 1) \nabla_{\bm{z} \bm{z}}^{2} x_{i}^{(n-1)})) &
        \text{[\cref{eq:hess_CCP_recursive}]}\\
        & = \lb(\delta \cdot (\nabla_{\bm{z}}x_{i}^{(n-1)}) \bm{w}_{[n]:i}^{\top}) + \lb(\bm{w}_{[n]:i}\delta \cdot (\nabla_{\bm{z}}x_{i}^{(n-1)})^{\top})  & \\
        & \quad + \lb(\delta \cdot (\bm{w}_{[n]:i}^{\top}\bm{z} + 1) \nabla_{\bm{z} \bm{z}}^{2} x_{i}^{(n-1)}) & \text{[Associativity]}\\
        & = \lb(\delta \cdot (\nabla_{\bm{z}}x_{i}^{(n-1)})) {\bm{w}_{[n]:i}^{+\top}} + \ub(\delta \cdot (\nabla_{\bm{z}}x_{i}^{(n-1)})) {\bm{w}_{[n]:i}^{-\top}} & \\
        & \quad + \bm{w}_{[n]:i}^{+}\lb(\delta \cdot (\nabla_{\bm{z}}x_{i}^{(n-1)})^{\top}) + \bm{w}_{[n]:i}^{-}\ub(\delta \cdot (\nabla_{\bm{z}}x_{i}^{(n-1)})^{\top})   &\\
        & \quad + \lb(\delta \cdot (\bm{w}_{[n]:i}^{\top}\bm{z} + 1) \nabla_{\bm{z} \bm{z}}^{2} x_{i}^{(n-1)}) & \text{[Linearity \cref{eq:IBP_basic_rules}]}\\
        & = \lb(\delta \cdot (\nabla_{\bm{z}}x_{i}^{(n-1)})) {\bm{w}_{[n]:i}^{+\top}} + \ub(\delta \cdot (\nabla_{\bm{z}}x_{i}^{(n-1)})) {\bm{w}_{[n]:i}^{-\top}} & \\
        & \quad + \bm{w}_{[n]:i}^{+}\lb(\delta \cdot (\nabla_{\bm{z}}x_{i}^{(n-1)})^{\top}) + \bm{w}_{[n]:i}^{-}\ub(\delta \cdot (\nabla_{\bm{z}}x_{i}^{(n-1)})^{\top})   &\\
        & \quad + \lb(\delta' \nabla_{\bm{z} \bm{z}}^{2} x_{i}^{(n-1)}) & \text{[Definition of $\delta'$]}\\
    \end{aligned}
\end{equation}
Analogously, for the upper bound:
\begin{equation}
    \begin{aligned}
        \ub(\delta \cdot \nabla_{\bm{z} \bm{z}}^{2} x_{i}^{(n)}) & = \lb(\delta \cdot (\nabla_{\bm{z}}x_{i}^{(n-1)} \bm{w}_{[n]:i}^{\top} + \{\nabla_{\bm{z}}x_{i}^{(n-1)} \bm{w}_{[n]:i}^{\top}\}^\top & \\
        & \quad + (\bm{w}_{[n]:i}^{\top}\bm{z} + 1) \nabla_{\bm{z} \bm{z}}^{2} x_{i}^{(n-1)})) &
        \text{[\cref{eq:hess_CCP_recursive}]}\\
        & = \ub(\delta \cdot (\nabla_{\bm{z}}x_{i}^{(n-1)}) \bm{w}_{[n]:i}^{\top}) + \ub(\bm{w}_{[n]:i}\delta \cdot (\nabla_{\bm{z}}x_{i}^{(n-1)})^{\top})  & \\
        & \quad + \ub(\delta \cdot (\bm{w}_{[n]:i}^{\top}\bm{z} + 1) \nabla_{\bm{z} \bm{z}}^{2} x_{i}^{(n-1)}) & \text{[Associativity]}\\
        & = \ub(\delta \cdot (\nabla_{\bm{z}}x_{i}^{(n-1)})) {\bm{w}_{[n]:i}^{+\top}} + \lb(\delta \cdot (\nabla_{\bm{z}}x_{i}^{(n-1)})) {\bm{w}_{[n]:i}^{-\top}} & \\
        & \quad + \bm{w}_{[n]:i}^{+}\ub(\delta \cdot (\nabla_{\bm{z}}x_{i}^{(n-1)})^{\top}) + \bm{w}_{[n]:i}^{-}\lb(\delta \cdot (\nabla_{\bm{z}}x_{i}^{(n-1)})^{\top})   &\\
        & \quad + \ub(\delta \cdot (\bm{w}_{[n]:i}^{\top}\bm{z} + 1) \nabla_{\bm{z} \bm{z}}^{2} x_{i}^{(n-1)}) & \text{[Linearity \cref{eq:IBP_basic_rules}]}\\
        & = \ub(\delta \cdot (\nabla_{\bm{z}}x_{i}^{(n-1)})) {\bm{w}_{[n]:i}^{+\top}} + \lb(\delta \cdot (\nabla_{\bm{z}}x_{i}^{(n-1)})) {\bm{w}_{[n]:i}^{-\top}} & \\
        & \quad + \bm{w}_{[n]:i}^{+}\ub(\delta \cdot (\nabla_{\bm{z}}x_{i}^{(n-1)})^{\top}) + \bm{w}_{[n]:i}^{-}\lb(\delta \cdot (\nabla_{\bm{z}}x_{i}^{(n-1)})^{\top})   &\\
        & \quad + \ub(\delta' \nabla_{\bm{z} \bm{z}}^{2} x_{i}^{(n-1)}) \,, & \text{[Definition of $\delta'$]}\\
    \end{aligned}
\end{equation}
where $\delta' = \delta \cdot (\bm{w}_{[n]:i}^{\top}\bm{z} + 1)$.
\end{proof}

\subsection{Lower bound of the Hessian's minimum eigenvalue for the product of polynomials case}
To verify a product of polynomials network, we need a lower bound of the minimum eigenvalue of its Hessian. In \cref{prop:min_eig_prod_poly} we propose a valid lower bound.

\begin{proposition}
Let $\bm{x}$ and $\bm{y}$ be the input and output of a polynomial. Let
\begin{equation}
    \hat{\bm{J}}_{\bm{z}}(\bm{x}) = \argmax\{\rho(\bm{J}\bm{J}^{\top}): \bm{J} \in [\lb(\bm{J}_{\bm{z}}(\bm{x})),\ub(\bm{J}_{\bm{z}}(\bm{x}))]\} = \max\{|\lb(\bm{J}_{\bm{z}}(\bm{x}))|,|\ub(\bm{J}_{\bm{z}}(\bm{x}))|\} 
    \label{eq:Jhat}
\end{equation}
be the Jacobian matrix with the largest possible norm. Let $\rho$ be the spectral radius of a matrix. For all $\bm{z} \in [\bm{l}, \bm{u}]$, the minimum eigenvalue of the hessian matrix of every position $i$ ($\lambda_{\text{min}}(\nabla_{\bm{z} \bm{z}}^{2} y_{i} )$) satisfies:
\begin{equation}
    \lambda_{\text{min}}\Big(\nabla_{\bm{z} \bm{z}}^{2} y_{i} \Big) \geq \sum_{j = i}^{k} \lambda_{\text{min}}\Big(\frac{\partial y_{i}}{\partial x_{j}}\nabla_{\bm{z} \bm{z}}^{2} x_{j}\Big) - \rho\Big(\hat{\bm{J}}_{\bm{z}}(\bm{x})\hat{\bm{J}}_{\bm{z}}^{\top}(\bm{x})\Big) \cdot \rho\Big(\bm{L_{H}}(\nabla_{\bm{x} \bm{x}}^{2} y_{i})\Big)\,.
\end{equation}
\label{prop:min_eig_prod_poly}
\end{proposition}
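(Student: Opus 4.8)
The plan is to start from the product-of-polynomials Hessian decomposition in \cref{eq:grad_hess_prod_poly}, which writes $\nabla_{\bm{z}\bm{z}}^{2} y_{i} = \bm{A} + \bm{B}$ with $\bm{A} = \sum_{j=i}^{k}\frac{\partial y_{i}}{\partial x_{j}}\nabla_{\bm{z}\bm{z}}^{2} x_{j}$ and $\bm{B} = \bm{J}_{\bm{z}}^{\top}(\bm{x})\nabla_{\bm{x}\bm{x}}^{2} y_{i}\bm{J}_{\bm{z}}(\bm{x})$. Both summands are symmetric (Hessians are symmetric and $\frac{\partial y_{i}}{\partial x_{j}}$ is a scalar), so Weyl's inequality gives $\lambda_{\text{min}}(\bm{A}+\bm{B}) \geq \lambda_{\text{min}}(\bm{A}) + \lambda_{\text{min}}(\bm{B})$, and applying the subadditivity of $\lambda_{\text{min}}$ iteratively to $\bm{A}$ yields $\lambda_{\text{min}}(\bm{A}) \geq \sum_{j=i}^{k}\lambda_{\text{min}}\bigl(\frac{\partial y_{i}}{\partial x_{j}}\nabla_{\bm{z}\bm{z}}^{2}x_{j}\bigr)$, which already produces the first term of the claimed bound. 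The remaining work is a lower bound $\lambda_{\text{min}}(\bm{B}) \geq -\rho(\hat{\bm{J}}_{\bm{z}}(\bm{x})\hat{\bm{J}}_{\bm{z}}^{\top}(\bm{x}))\cdot\rho(\bm{L_{H}}(\nabla_{\bm{x}\bm{x}}^{2}y_{i}))$.

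For the quadratic-form term I would use the variational characterization $\lambda_{\text{min}}(\bm{B}) = \min_{\|\bm{v}\|_{2}=1}(\bm{J}_{\bm{z}}(\bm{x})\bm{v})^{\top}\nabla_{\bm{x}\bm{x}}^{2}y_{i}(\bm{J}_{\bm{z}}(\bm{x})\bm{v})$. Writing $\bm{w} = \bm{J}_{\bm{z}}(\bm{x})\bm{v}$ and $m = \lambda_{\text{min}}(\nabla_{\bm{x}\bm{x}}^{2}y_{i})$, the Rayleigh bound gives $\bm{w}^{\top}\nabla_{\bm{x}\bm{x}}^{2}y_{i}\bm{w} \geq m\|\bm{w}\|_{2}^{2}$ for every $\bm{w}$. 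I would then split on the sign of $m$: if $m \geq 0$ the quadratic form is nonnegative, so $\lambda_{\text{min}}(\bm{B}) \geq 0$, which trivially dominates the (nonpositive) target; if $m < 0$, then since $\|\bm{w}\|_{2}^{2} = \|\bm{J}_{\bm{z}}(\bm{x})\bm{v}\|_{2}^{2} \leq \rho(\bm{J}_{\bm{z}}(\bm{x})\bm{J}_{\bm{z}}^{\top}(\bm{x}))$ for unit $\bm{v}$, multiplying the negative scalar $m$ by the largest admissible $\|\bm{w}\|_{2}^{2}$ gives $\bm{w}^{\top}\nabla_{\bm{x}\bm{x}}^{2}y_{i}\bm{w} \geq m\,\rho(\bm{J}_{\bm{z}}(\bm{x})\bm{J}_{\bm{z}}^{\top}(\bm{x}))$. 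Finally, the lower bounding Hessian from \cref{subsec:min_eig_lowerbound} guarantees $m \geq \lambda_{\text{min}}(\bm{L_{H}}(\nabla_{\bm{x}\bm{x}}^{2}y_{i})) \geq -\rho(\bm{L_{H}}(\nabla_{\bm{x}\bm{x}}^{2}y_{i}))$, and substituting this closes the bound on $\lambda_{\text{min}}(\bm{B})$.

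The step I expect to be the main obstacle is justifying the equality in \cref{eq:Jhat}, namely that the entrywise maximum-magnitude matrix $\hat{\bm{J}}_{\bm{z}}(\bm{x}) = \max\{|\lb(\bm{J}_{\bm{z}}(\bm{x}))|,|\ub(\bm{J}_{\bm{z}}(\bm{x}))|\}$ dominates the spectral norm over the whole interval, i.e. $\rho(\bm{J}\bm{J}^{\top}) \leq \rho(\hat{\bm{J}}_{\bm{z}}(\bm{x})\hat{\bm{J}}_{\bm{z}}^{\top}(\bm{x}))$ for every $\bm{J} \in [\lb(\bm{J}_{\bm{z}}(\bm{x})),\ub(\bm{J}_{\bm{z}}(\bm{x}))]$; note that $\hat{\bm{J}}_{\bm{z}}(\bm{x})$ itself need not lie in that box, so the statement is really about a valid upper bound rather than a feasible maximizer. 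I would prove it through the identity $\rho(\bm{J}\bm{J}^{\top}) = \sigma_{\max}(\bm{J})^{2} = \bigl(\max_{\|\bm{u}\|_{2}=\|\bm{v}\|_{2}=1}\bm{u}^{\top}\bm{J}\bm{v}\bigr)^{2}$, bounding entrywise $\bm{u}^{\top}\bm{J}\bm{v} = \sum_{pq}u_{p}J_{pq}v_{q} \leq \sum_{pq}|u_{p}|\hat{J}_{pq}|v_{q}|$, and then invoking that for the nonnegative matrix $\hat{\bm{J}}_{\bm{z}}(\bm{x})$ the top singular value is attained at nonnegative singular vectors (a Perron--Frobenius argument), so that $\max_{\bm{u},\bm{v}}\sum_{pq}|u_{p}|\hat{J}_{pq}|v_{q}| = \sigma_{\max}(\hat{\bm{J}}_{\bm{z}}(\bm{x}))$. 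Combining this with the $m<0$ estimate, and using $m \geq -\rho(\bm{L_{H}})$ together with $\rho(\bm{J}_{\bm{z}}(\bm{x})\bm{J}_{\bm{z}}^{\top}(\bm{x})) \leq \rho(\hat{\bm{J}}_{\bm{z}}(\bm{x})\hat{\bm{J}}_{\bm{z}}^{\top}(\bm{x}))$, yields $m\,\rho(\bm{J}_{\bm{z}}(\bm{x})\bm{J}_{\bm{z}}^{\top}(\bm{x})) \geq -\rho(\bm{L_{H}})\,\rho(\hat{\bm{J}}_{\bm{z}}(\bm{x})\hat{\bm{J}}_{\bm{z}}^{\top}(\bm{x}))$; feeding this and the Weyl split back together completes the proof.
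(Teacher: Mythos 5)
Your proof is correct, and it shares the paper's skeleton---the decomposition of \cref{eq:grad_hess_prod_poly} followed by Weyl's inequality---but it handles the quadratic-form term $\bm{J}_{\bm{z}}^{\top}(\bm{x})\nabla_{\bm{x}\bm{x}}^{2}y_{i}\,\bm{J}_{\bm{z}}(\bm{x})$ by a genuinely different, and in fact more careful, argument. The paper substitutes the lower bounding Hessian inside the quadratic form, $\lambda_{\text{min}}(\bm{J}^{\top}\bm{H}\bm{J}) \geq \lambda_{\text{min}}(\bm{J}^{\top}\bm{L_{H}}\bm{J})$, a step that implicitly requires the matrix ordering $\bm{H}-\bm{L_{H}} \succeq 0$ (true for Adjiman's construction, but stronger than the scalar eigenvalue inequality actually recorded in \cref{eq:bounds_min_eig}); it then uses $\rho(\bm{J}^{\top}\bm{L_{H}}\bm{J}) = \rho(\bm{J}\bm{J}^{\top}\bm{L_{H}}) \leq \rho(\bm{J}\bm{J}^{\top})\rho(\bm{L_{H}})$, which needs the caveat that $\rho$ is not submultiplicative for general products and works here only because the matrices are symmetric, so $\rho$ coincides with the spectral norm. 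Your route---the Rayleigh characterization, the scalar $m = \lambda_{\text{min}}(\nabla_{\bm{x}\bm{x}}^{2}y_{i})$, a sign split on $m$, and $m \geq \lambda_{\text{min}}(\bm{L_{H}}) \geq -\rho(\bm{L_{H}})$---uses only the scalar inequality that \cref{eq:bounds_min_eig} literally provides and sidesteps both subtleties. You also supply what the paper asserts essentially by fiat: a proof that $\rho(\bm{J}\bm{J}^{\top}) \leq \rho(\hat{\bm{J}}_{\bm{z}}(\bm{x})\hat{\bm{J}}_{\bm{z}}^{\top}(\bm{x}))$ for every $\bm{J}$ in the interval, together with the correct observation that $\hat{\bm{J}}_{\bm{z}}(\bm{x}) = \max\{|\lb(\bm{J}_{\bm{z}}(\bm{x}))|,|\ub(\bm{J}_{\bm{z}}(\bm{x}))|\}$ need not itself lie in the interval, so the $\argmax$ in \cref{eq:Jhat} should be read as a valid upper bound rather than a feasible maximizer. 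One small simplification to your last step: the Perron--Frobenius attainment argument is unnecessary, since $|J_{pq}| \leq \hat{J}_{pq}$ entrywise gives $\bm{u}^{\top}\bm{J}\bm{v} \leq |\bm{u}|^{\top}\hat{\bm{J}}_{\bm{z}}(\bm{x})|\bm{v}| \leq \sigma_{\max}(\hat{\bm{J}}_{\bm{z}}(\bm{x}))$ directly, because $|\bm{u}|$ and $|\bm{v}|$ are themselves unit vectors.
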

\vspace{-4mm}
\begin{proof}
We give the lower bound of $   \lambda_{\text{min}}\Big(\nabla_{\bm{z} \bm{z}}^{2} y_{i} \Big)$ as
\begin{equation}
\begin{aligned}
    \lambda_{\text{min}}\Big(\nabla_{\bm{z} \bm{z}}^{2} y_{i} \Big) & = \lambda_{\text{min}}\Big(\sum_{j = i}^{k} \frac{\partial y_{i}}{\partial x_{j}}\nabla_{\bm{z} \bm{z}}^{2} x_{j} + \bm{J}_{\bm{z}}^{\top}(\bm{x})\nabla_{\bm{x} \bm{x}}^{2} y_{i} \bm{J}_{\bm{z}}(\bm{x})\Big) & \text{[\cref{eq:grad_hess_prod_poly}]}\\
    & \geq \lambda_{\text{min}}\Big(\sum_{j = i}^{k} \frac{\partial y_{i}}{\partial x_{j}}\nabla_{\bm{z} \bm{z}}^{2} x_{j}\Big) + \lambda_{\text{min}}\Big(\bm{J}_{\bm{z}}^{\top}(\bm{x})\nabla_{\bm{x} \bm{x}}^{2} y_{i} \bm{J}_{\bm{z}}(\bm{x})\Big) & \text{[Weyl's inequality]} \\
    & \geq \sum_{j = i}^{k} \lambda_{\text{min}}\Big(\frac{\partial y_{i}}{\partial x_{j}}\nabla_{\bm{z} \bm{z}}^{2} x_{j}\Big) + \lambda_{\text{min}}\Big(\bm{J}_{\bm{z}}^{\top}(\bm{x})\bm{L_{H}}(\nabla_{\bm{x} \bm{x}}^{2} y_{i}) \bm{J}_{\bm{z}}(\bm{x})\Big) & \text{[\cref{eq:bounds_min_eig}]}\\
    & \geq \sum_{j = i}^{k} \lambda_{\text{min}}\Big(\frac{\partial y_{i}}{\partial x_{j}}\nabla_{\bm{z} \bm{z}}^{2} x_{j}\Big) -\rho \Big( \bm{J}_{\bm{z}}^{\top}(\bm{x})\bm{L_{H}}(\nabla_{\bm{x} \bm{x}}^{2} y_{i}) \bm{J}_{\bm{z}}(\bm{x})\Big) & \text{[Definition of $\rho$]}\\
    & \geq \sum_{j = i}^{k} \lambda_{\text{min}}\Big(\frac{\partial y_{i}}{\partial x_{j}}\nabla_{\bm{z} \bm{z}}^{2} x_{j}\Big) -\rho(\bm{J}_{\bm{z}}(\bm{x})\bm{J}_{\bm{z}}^{\top}(\bm{x}))\rho\Big(\bm{L_{H}}(\nabla_{\bm{x} \bm{x}}^{2} y_{i})\Big)  &\\
    & \geq \sum_{j = i}^{k} \lambda_{\text{min}}\Big(\frac{\partial y_{i}}{\partial x_{j}}\nabla_{\bm{z} \bm{z}}^{2} x_{j}\Big) -\rho\Big(\hat{\bm{J}}_{\bm{z}}(\bm{x})\hat{\bm{J}}_{\bm{z}}^{\top}(\bm{x}))\rho(\bm{L_{H}}(\nabla_{\bm{x} \bm{x}}^{2} y_{i})\Big) \,, & \text{[\cref{eq:Jhat}]}\\
\end{aligned}
\end{equation}
where in the second to last inequality we use $\rho(\bm{B}^{\top}\bm{A}\bm{B}) = \rho(\bm{B}\bm{B}^{\top}\bm{A}) \leq \rho(\bm{B}\bm{B}^{\top})\rho(\bm{A}), \forall \bm{A} \in \realnum^{d_{1} \times d_{1}}, \bm{B} \in \realnum^{d_{1} \times d_{2}}\label{note1}$.
\end{proof}

\end{document}